\def\ARXIV{1} 

\documentclass{article}

\PassOptionsToPackage{numbers, compress}{natbib}
\usepackage[preprint]{neurips_2026}

\usepackage[utf8]{inputenc} 
\usepackage[T1]{fontenc}    
\usepackage{hyperref}       
\usepackage{url}            
\usepackage{booktabs}       
\usepackage{amsfonts}       
\usepackage{nicefrac}       
\usepackage{microtype}      
\usepackage{xcolor}         

\usepackage{amsmath}
\usepackage{amsthm}
\newtheorem{proposition}{Proposition}

\usepackage{graphicx}
\usepackage{algorithm}
\usepackage{algpseudocode}
\usepackage{multirow, xcolor, colortbl}

\definecolor{maroon}{cmyk}{0,0.87,0.68,0.32}
\definecolor{bamboo}{cmyk}{0.4,0,0.3,0}
\definecolor{apple}{cmyk}{0.41,0.4,0.76,0}
\definecolor{jialingshui}{cmyk}{0.47,0,0.49,0}
\definecolor{sea}{cmyk}{1,0.67,0.16,0.03}

\title{Market Regime Council for Dynamic Credit Assignment in Multi-Agent LLM Decision Systems}

\ifnum\ARXIV=1 
    \usepackage{orcidlink}
    \hypersetup{hidelinks}
    \author{%
      Yunhua Pei\thanks{Corresponding author: \texttt{ge22472@bristol.ac.uk}} \;\orcidlink{0000-0003-2906-0827}\\
      University of Bristol, UK \\
      \texttt{ge22472@bristol.ac.uk} \\
      \And
      Zerui Ge\\
      Independent Researcher \\
      \texttt{gezerui1997@gmail.com} \\
      \And
      Jin Zheng\,\orcidlink{0000-0002-1783-1375}\\
      University of Bristol, UK \\
      \texttt{jin.zheng@bristol.ac.uk} \\
      \And
      John Cartlidge\,\orcidlink{0000-0002-3143-6355}\\
      University of Bristol, UK \\
      \texttt{john.cartlidge@bristol.ac.uk} \\
    }    
    \usepackage{fancyhdr}
    \fancyheadoffset[R]{-0.0cm}

\else 
    \author{%
      Yunhua Pei \\
      University of Bristol, UK \\
      \texttt{ge22472@bristol.ac.uk} \\
      \And
      Zerui Ge \\
      Independent Researcher \\
      \texttt{gezerui1997@gmail.com} \\
      \And
      Jin Zheng \\
      University of Bristol, UK \\
      \texttt{jin.zheng@bristol.ac.uk} \\
      \And
      John Cartlidge \\
      University of Bristol, UK \\
      \texttt{john.cartlidge@bristol.ac.uk} \\
    }
\fi

\begin{document}

\maketitle

\begin{abstract}
Multi-agent LLM decision systems for portfolio management still lack a principled way to assign credit across specialist agents, remain vulnerable to cold-start dominance under regime shifts, and offer limited transparency into how final allocations are formed.  We propose Market Regime Council (MRC), a cooperative multi-agent decision system that computes exact Shapley credits across all single, pairwise, and Grand-coalition outputs for online agent weighting. Instantiated with $N{=}3$ specialist agents, at each trading period, MRC recomputes coalition-based Shapley weights from exponentially weighted performance histories, uses a Bayesian adaptive mixture to stabilize early periods, applies regime-dependent multipliers to adjust agent authority, and records each rebalance through a five-layer causal trace. Over 1{,}037 trading days across 13 crypto assets and five seeds, MRC achieves a Sharpe ratio of 1.51 and a cumulative return of 440.1\%, ranking first on CR, SR, and IR among active baselines and attaining the lowest MDD among active methods. Ablation results show that the gains come from Shapley-weighted integration across coalition outputs rather than from any single stage in isolation. Code and demo data are included in the supplementary material. 
\end{abstract}

\section{Introduction}
\label{sec:intro}

Cryptocurrency portfolio management is a sequential decision problem that depends on heterogeneous signals with different modalities, update frequencies, and predictive horizons.
Price trends, on-chain activity, macro conditions, and market sentiment can all matter, but their usefulness changes markedly across bull, volatile, and bear markets.
In practice, strong portfolio decisions are usually formed by combining partial views from different sources rather than by relying on any single signal.

LLM-based multi-agent systems are a natural fit for this setting~\cite{hong2023metagpt,guo2024large}, and recent financial LLM systems have reported strong risk-adjusted performance~\cite{yu2024fincon,zhang2024multimodal,luo2025llm,wu2023bloomberggpt}.
However, three problems remain.
First, most existing systems use fixed rules or heuristic confidence scores to weight agents, which provides little meaningful credit assignment, that is, little ability to identify which agent or coalition actually improved realized performance~\cite{yu2024fincon,xiao2024tradingagents,yu2025finmem,luo2025llm}.
Second, naive online adaptation is vulnerable to cold-start dominance and can retain stale weights after a regime shift~\cite{jadbabaie2015online,hazan2016introduction,gu2020empirical}.
Third, the final allocation is usually hard to trace back to its intermediate decisions~\cite{arrieta2020explainable,doshi2017towards,miller2019explanation}. A detailed related work is provided in Appendix~\ref{sec:related}

Market Regime Council (MRC) treats multi-agent portfolio management as an online cooperative game, using Shapley values as the credit-assignment signal~\cite{shapley1953value}. At each period, MRC evaluates all $2^N{-}1$ coalition outputs, computes exact Shapley credits, and converts them into agent weights. With $N{=}3$ specialist agents covering three orthogonal financial information channels, regime-dependent multipliers adapt these weights to non-stationary conditions, and a five-layer causal trace renders each rebalancing decision fully auditable. Our contributions can be summarized as:

\begin{itemize}
    \item \textbf{Online Shapley credit assignment.} We formulate multi-agent cooperation as a transferable-utility game over all $2^N{-}1$ coalitions, compute exact Shapley values as online credit signals, and combine them with a Bayesian mixture that transitions from a uniform prior to Shapley-derived weights as evidence accumulates.

    \item \textbf{Regime-aware adaptation.} We combine Shapley weighting with a continuous regime score that adjusts agent authority and drives the downstream risk-control overlays across bull, volatile, and bear conditions.

    \item \textbf{Full-chain explainability.} We trace each rebalance through five layers, from raw inputs to agent portfolios, Shapley credit, blend ratios, overlay effects, and the final allocation.

    \item \textbf{Empirical validation.} Over 1{,}037 trading days across 13 assets and five seeds, MRC ranks first on CR, SR, and IR among active LLM and DRL baselines, and achieves the lowest MDD among active methods, instantiated at $N{=}3$.
\end{itemize}

\section{Preliminaries}
\label{sec:pre}

\begin{figure}[tbh]
    \centering
    \includegraphics[width=0.8\linewidth]{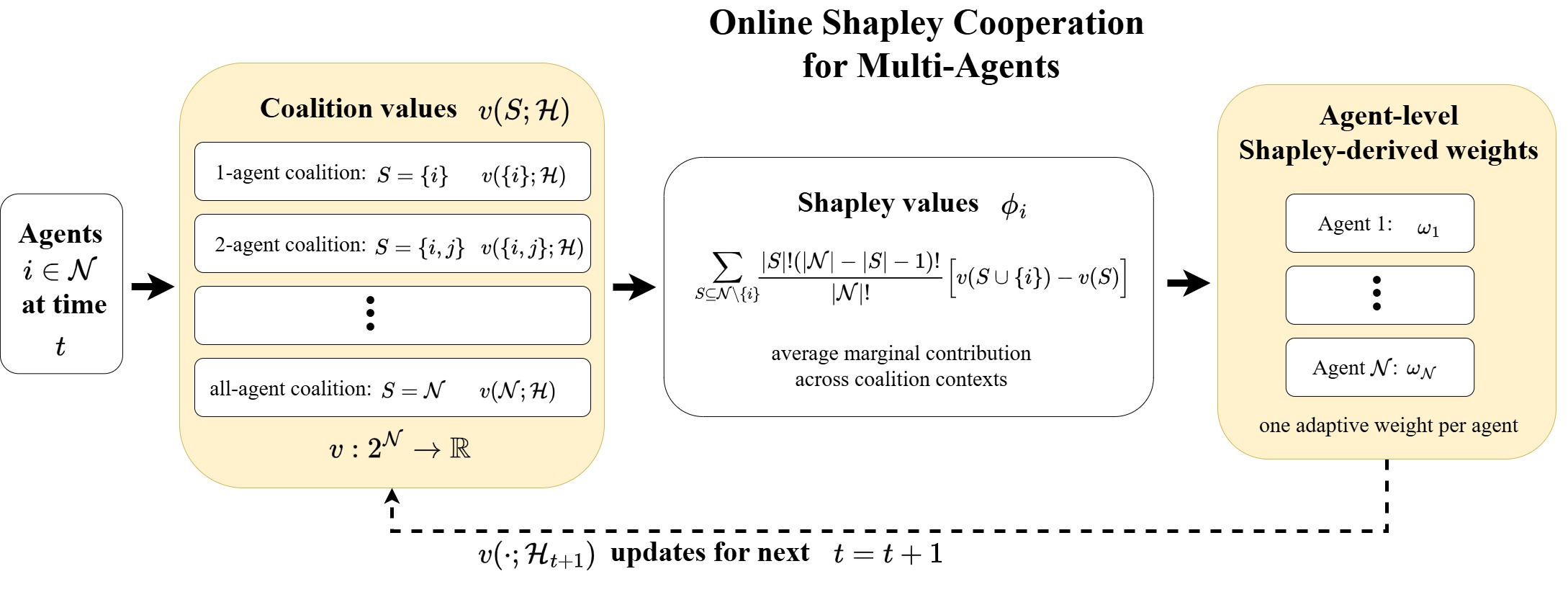}
    \vspace{-3mm}
    \caption{The general $N$-agent Online Shapley Cooperation mechanism.}
    \vspace{-3mm}
    \label{fig:online shapley mechanism}
\end{figure}

\paragraph{Problem Formulation.}
Let $\mathcal{K}$ be the set of $K$ crypto assets,
$\mathcal{N} = \{1, \dots, N\}$ a set of $N$ specialist agents,
and $\mathcal{T} = \{1,\dots,T\}$ the sequence of decision periods.
At each period $t$, every non-empty coalition $S \subseteq \mathcal{N}$
can produce a joint decision, and realized feedback is recorded in
history $\mathcal{H}_t$.
In this work we instantiate $N{=}3$, where $\mathcal{N} = \{1,2,3\}$
indexes three orthogonal financial information channels:
price/technical ($A_1$), on-chain network activity ($A_2$),
and macro/sentiment ($A_3$).
The system receives heterogeneous data streams organized into three
agent-specific feature bundles
$\mathbf{x}_{1}^{(t)}, \mathbf{x}_{2}^{(t)}, \mathbf{x}_{3}^{(t)}$
spanning six modality groups
(full inventory in Appendix~\ref{app:impl}, Table~\ref{tab:dataset}).
The system must output a portfolio weight vector
$\mathbf{w}^{(t)} \in \mathcal{W}$, where
$\mathcal{W} = \{\mathbf{w} : \mathbf{1}^\top\mathbf{w} + c = 1,\;
w_k \in [0, w_{\max}],\; c \in [0, c_{\max}]\}$
and $c$ denotes the cash allocation, enforcing per-asset concentration
limits and a bounded cash position.
Let $\mathbf{r}^{(t)}$ denote the asset return vector at period $t$;
the objective is to maximise the long-run risk-adjusted return under
the non-stationarity imposed by shifting market regimes~\cite{gu2020empirical}.
Let $\mathcal{P} = \{(1,2),(1,3),(2,3)\}$ denote the set of pairwise coalitions.

\paragraph{Cooperative Game System.}
We model multi-agent credit assignment as a transferable-utility
cooperative game $(\mathcal{N}, v)$~\cite{shapley1953value,wang2024open},
where the characteristic function $v: 2^{\mathcal{N}} \to \mathbb{R}$
maps each non-empty coalition $S \subseteq \mathcal{N}$ to a scalar
utility measuring its realized performance, with the convention
$v(\emptyset) = 0$.
The Shapley value $\phi_i$ of agent $i$ is the unique credit allocation satisfying four-axiom verification (Efficiency, Symmetry, Dummy Player, Additivity)~\cite{shapley1953value} (see Appendix~\ref{app:shapley}):
\begin{equation}
  \phi_i = \sum_{S \subseteq \mathcal{N} \setminus \{i\}}
    \frac{|S|!\;(|\mathcal{N}| - |S| - 1)!}{|\mathcal{N}|!}
    \bigl[v(S \cup \{i\}) - v(S)\bigr].
  \label{eq:shapley}
\end{equation}

Equation~\eqref{eq:shapley} averages agent $i$'s marginal contribution
across all coalitions that exclude $i$, weighted by the probability of
the corresponding arrival ordering.
For $N$ agents, evaluating all $2^N{-}1$ coalitions requires
$O(N \cdot 2^N)$ utility calls per period; at $N{=}3$ this reduces to
exactly $7$ evaluations, with closed-form expressions in
Appendix~\ref{app:shapley}.
Figure~\ref{fig:online shapley mechanism} illustrates how these credits are computed online and converted to adaptive weights each period. Section~\ref{sec:method} specifies how MRC instantiates this mechanism for financial portfolio management.

\section{Method}
\label{sec:method}

\begin{figure}[tbh]
    \centering
    \includegraphics[width=0.95\linewidth]{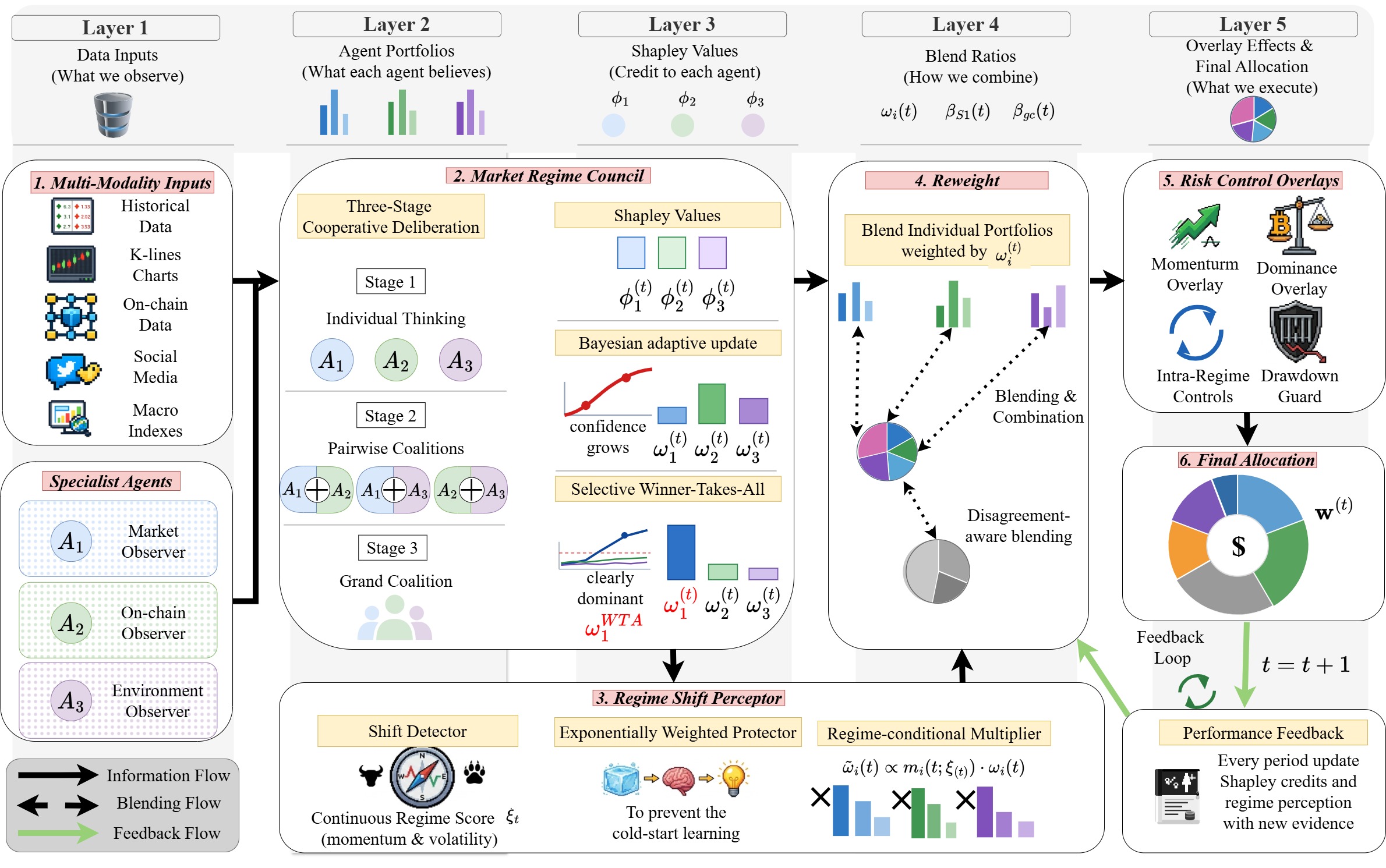}
    \vspace{-3mm}
    \caption{The MRC mechanism applied with $N{=}3$ agents for crypto portfolio management.}
    \vspace{-3mm}
    \label{fig:framework}
\end{figure}

Figure~\ref{fig:framework} summarizes MRC. The key design questions from Section~\ref{sec:pre} are: how to define the characteristic function $v$ for financial coalition outputs, how to stabilize Shapley estimates during early periods, and how to adapt agent authority when market regimes shift. MRC addresses each of these in turn. At each trading period, three specialist agents produce individual portfolios, pairwise debates yield coalition portfolios, and a grand-coalition readout aggregates the shared context into a synthesis portfolio.
MRC then computes coalition utilities from realized historical performance and updates agent weights online via Shapley-based credit assignment. Complete pseudocode for the online weight update and per-period inference, see Appendix~\ref{app:algorithms}.
\subsection{Market Regime Council}
\label{sec:council}

\subsubsection{Three-Stage Cooperative Deliberation ($N{=}3$ Instantiation)}
MRC organizes agent interaction as a sequence of broader coalitions.
In Stage~1, three specialized LLM agents $\mathcal{A}=\{A_1,A_2,A_3\}$ independently process their dedicated data modalities and each produce a private portfolio $\mathbf{w}_{i}^{(t)}$ and regime label $\hat{r}_i^{(t)}\in\{\text{bull, volatile, bear}\}$.
In Stage~2, each pair $(A_i,A_j)$ engages in a structured cross-examination producing a joint portfolio $\mathbf{w}_{ij}^{(t)}$, and three debates run in parallel: $(A_1,A_2)$, $(A_1,A_3)$, $(A_2,A_3)$.
In Stage~3, the grand coalition $\mathcal{N}=\{1,2,3\}$ portfolio $\mathbf{w}_{123}^{(t)}$ is the outcome of all three specialists jointly deliberating with $A_4$ as the readout operator, guided by a Shapley report summarizing agent weights, coalition-level Sharpe ratios, and recent performance (format in Appendix~\ref{app:impl}). $A_4$ is a readout and is not a player in $\mathcal{N}$. For each non-empty coalition $S \subseteq \mathcal{N}$,
the coalition portfolio is $\mathbf{w}_i^{(t)}$ ($|S|=1$),
$\mathbf{w}_{ij}^{(t)}$ ($|S|=2$), or $\mathbf{w}_{123}^{(t)}$ ($|S|=3$). The characteristic function is the composite performance measure (weights $\gamma_\rho$ and $\gamma_\mu$, see Appendix ~\ref{app:impl})
\begin{equation}
  v(S;\,\mathcal{H}_t) = \gamma_\rho \cdot \sqrt{365}\,\frac{\hat{\mu}_{\mathrm{EW}}(S;\,\mathcal{H}_t)}{\hat{\sigma}_{\mathrm{EW}}(S;\,\mathcal{H}_t)} + \gamma_\mu \cdot \mu_{\mathrm{ann}}(S;\,\mathcal{H}_t),
  \quad \gamma_\rho + \gamma_\mu = 1,
  \label{eq:char_fn}
\end{equation}
where $\mathcal{H}_t$ denotes the realized return history of coalition $S$ up to time $t$, $\hat{\mu}_{\mathrm{EW}}(S;\mathcal{H}_t)$ and $\hat{\sigma}_{\mathrm{EW}}(S;\mathcal{H}_t)$ are the EWP-estimated daily mean and daily standard deviation of excess returns, and $\mu_{\mathrm{ann}}(S;\mathcal{H}_t) = 365\,\hat{\mu}_{\mathrm{EW}}(S;\mathcal{H}_t)$ is the annualised mean. This mean is scaled to the same order of magnitude as the Sharpe term to prevent one component from dominating the blend. EWP estimation is defined in \S\ref{sec:ewp}.
Equal-weight history averaging creates path dependence because early cold-start data permanently biases estimates, whereas the EWP (Appendix~\ref{app:ewp}) exponentially attenuates this influence.

\subsubsection{Bayesian Adaptive Update.}
Applying Eq.~\eqref{eq:shapley} to the characteristic function values
yields Shapley values $\phi_i$; agents with $\phi_i < 0$ receive zero
weight via $\phi_i^+ = \max(\phi_i, 0)$~\cite{wang2022shaq,beechey2023explaining},
a projection required by the long-only portfolio constraint.
Closed-form three-player expressions and four-axiom verification are in
Appendix~\ref{app:shapley}.
Shapley estimates from a few observations are unreliable, so we mix the
truncated Shapley-derived allocation with the uninformative uniform
prior~\cite{freund1997decision}:
\begin{align}
  \alpha(t) &= 1 - \exp\!\left(-t/\lambda\right), \label{eq:alpha} \\
  \omega_i^{(t)} &= \alpha(t)\cdot\frac{\phi_i^+}{\sum_j \phi_j^+} + \bigl(1-\alpha(t)\bigr)\cdot\tfrac{1}{|\mathcal{N}|}
  \label{eq:bayesian_update}
\end{align}

where $t$ is the number of completed periods and $\lambda$ governs cold-start duration. As $t\to\infty$, $\alpha\to 1$ and the weights converge to the normalized non-negative Shapley-derived allocation. The update follows a three-phase schedule: $\alpha\approx0$ corresponds to the uniform-prior regime, $0<\alpha<1$ to Bayesian mixing, and $\alpha\approx1$ to a fully data-driven allocation.

\subsubsection{Selective Winner-Takes-All.}
When one agent's rolling-window Sharpe exceeds all others by factor $\theta_{\mathrm{WTA}}$, the Bayesian mixture is temporarily replaced by:
\begin{equation}
  \omega_i^{\mathrm{WTA}} = \begin{cases}
    \omega_{\mathrm{wta}}                                                          & i = i^* \\[3pt]
    (1-\omega_{\mathrm{wta}}) \cdot \dfrac{\omega_i}{\sum_{j\neq i^*}\omega_j} & i \neq i^*
  \end{cases}
  \label{eq:wta}
\end{equation}

where $\omega_i^{\mathrm{WTA}}$ is the WTA-overridden weight that temporarily replaces the Bayesian mixture weight $\omega_i$, while $\omega_{\mathrm{wta}}$ is the fixed share allocated to the dominant agent. $i^* = \arg\max_i \rho_i^{(n_{\mathrm{win}})}$ is the dominant agent, and $\rho_i^{(n_{\mathrm{win}})}$ is the rolling Sharpe of agent $i$ over the most recent periods $n_{\mathrm{win}}$. $\theta_{\mathrm{WTA}}$ is the dominance threshold. The override reverts automatically once dominance ends.

\subsubsection{Multi-Stage Portfolio Integration}
Both blend ratios are driven continuously by performance gaps through $\tanh$:
\begin{align}
  \beta_{S1}^{(t)} &= \bar{\beta}_{S1} + \delta_{S1} \cdot \tanh\!\left(-\Delta_{S2}^{(t)}/\tau_{S1}\right), \label{eq:s1_blend} \\
  \beta_{\mathrm{gc}}^{(t)} &= \max\!\left(0,\; \bar{\beta}_{\mathrm{gc}} + \delta_{\mathrm{gc}} \cdot \tanh\!\left(\Delta_{\mathrm{gc}}^{(t)}/\tau_{\mathrm{gc}}\right)\right),
  \label{eq:a4_blend}
\end{align}
where $\Delta_{S2}^{(t)}=v^{S2}_{\mathrm{ens}}-v^{S1}_{\mathrm{ens}}$ and $\Delta_{\mathrm{gc}}^{(t)}=v(\mathcal{N};\mathcal{H}_t)-v^{S1}_{\mathrm{ens}}$ measure the performance advantages of debate and grand coalition over Stage~1.
$\bar{\beta}_{S1}$, $\delta_{S1}$, $\tau_{S1}$ and their $\mathrm{gc}$ counterparts are the center, scale, and bandwidth of the respective blend functions (values in Appendix~\ref{app:impl}).
And
\begin{equation}
  v^{S1}_{\mathrm{ens}} = \sum_{i \in \mathcal{N}} \tilde{\omega}_i^{(t)}\cdot v(\{i\};\mathcal{H}_t), \qquad
  v^{S2}_{\mathrm{ens}} = \sum_{(i,j)\in\mathcal{P}} p_{ij}^{(t)}\cdot v(\{i,j\};\mathcal{H}_t)
  \label{eq:v_ens}
\end{equation}
are the Shapley-weight-averaged characteristic function values (Eq.~\eqref{eq:char_fn}) of the Stage~1 individual and Stage~2 pairwise coalitions respectively. 
When debate underperforms ($\Delta_{S2}^{(t)}<0$), $\beta_{S1}^{(t)}$ automatically rises. The Shapley-weighted plurality score is introduced as
\begin{equation}
  \kappa^{(t)} = \max_{r\in\{\text{bull, volatile, bear}\}}\sum_{i:\hat{r}_i^{(t)}=r}\omega_i^{(t)},
  \label{eq:consensus}
\end{equation}
to further discounts the grand-coalition blend when agents disagree on regime, consistent with the expert-disagreement principle in information economics~\cite{ottaviani2001information}:
\begin{equation}
  \beta_{\mathrm{gc}}^{\mathrm{final}} = \beta_{\mathrm{gc}}^{(t)} \cdot \left(\tfrac{1}{2} + \tfrac{1}{2}\kappa^{(t)}\right).
  \label{eq:divergence_discount}
\end{equation}

Given these blend ratios, the final Council output is:
\begin{equation}
\mathbf{w}_{\mathrm{council}}^{(t)} = \beta_{\mathrm{gc}}^{\mathrm{final}}\,\mathbf{w}_{123}^{(t)}
  + \bigl(1-\beta_{\mathrm{gc}}^{\mathrm{final}}\bigr)\Bigl[
    \beta_{S1}^{(t)}\sum_i \tilde{\omega}_i^{(t)}\,\mathbf{w}_i^{(t)}
    + (1-\beta_{S1}^{(t)})\sum_{(i,j)} p_{ij}^{(t)}\,\mathbf{w}_{ij}^{(t)}
  \Bigr],
  \label{eq:ensemble}
\end{equation}
where $\tilde{\omega}_i^{(t)}$ are the regime-adjusted agent weights (Eq.~\eqref{eq:multiplier}, \S\ref{sec:perceptor}). The pairwise weights $p_{ij}^{(t)}$ are the Bayesian-normalized weights over the three elements of $\mathcal{P}=\{(1,2),(1,3),(2,3)\}$, obtained by applying the same adaptive update as Eq.~\eqref{eq:bayesian_update} to the pairwise coalition values.

\subsection{Regime Shift Perceptor}
\label{sec:perceptor}

\subsubsection{Shift Detector}

To detect the current market regime, we define the regime score $\xi^{(t)} \in (-1,+1)$ as a momentum-to-volatility ratio:
\begin{equation}
  \xi^{(t)} = \tanh\!\left(\frac{r_{30d}^{(t)}}{\sigma_{30d}^{(t)}}\right)
  \label{eq:regime_score}
\end{equation}
where $r_{30d}^{(t)} $ is the
equal-weight 30-day log return and
$\sigma_{30d}^{(t)}$ is its realized daily standard deviation. A short-term directional conflict check attenuates $\xi^{(t)}$ when recent momentum contradicts the longer-term trend, reflecting directional ambiguity.
Positive values indicate bull conditions and negative values indicate bear conditions.
Three nominal regime labels are derived from the thresholds $\xi_+>0$ and $\xi_-<0$, which define the bull and bear boundaries respectively (values in Appendix~\ref{app:impl}). The resulting regime segmentation is also visible in the bottom panel of Figure~\ref{fig:shapley_evolution}.

\subsubsection{Exponentially Weighted Protector}
\label{sec:ewp}

During the initial periods of deployment, Shapley-based agent weights have high estimation variance due to the limited return history, which is known as the cold-start.
To address path dependence, we introduce the Exponentially Weighted Protector (EWP), which assigns decaying weights geometrically to historical returns when estimating the coalition characteristic functions in Eq.~\eqref{eq:char_fn}:
\begin{equation}
  w_\tau = e^{-(t-\tau)/h}, \quad \tau = 1,\dots,t
  \label{eq:ewp_decay}
\end{equation}

\begin{equation}
    \hat{\mu}_{\mathrm{EW}}(S;\mathcal{H}_t) = \frac{\sum_{\tau=1}^{t} w_\tau\, R_S^{(\tau)}}{\sum_{\tau=1}^{t} w_\tau}, \qquad
    \hat{\sigma}^2_{\mathrm{EW}}(S;\mathcal{H}_t) = \frac{\sum_{\tau=1}^{t} w_\tau \bigl(R_S^{(\tau)} - \hat{\mu}_{\mathrm{EW}}(S;\mathcal{H}_t)\bigr)^2}{\sum_{\tau=1}^{t} w_\tau}
    \label{eq:ewcv}
\end{equation}

where $\mathcal{H}_t$ is the realized return history of coalition $S$ up to period $t$, $h{=}252$ is the e-fold decay period, and $R_S^{(\tau)}$ is the realized return of coalition $S$ at period $\tau$. EWP is grounded in discounted online learning~\cite{zinkevich2003online,hazan2016introduction}
and EWMA~\cite{jpmorgan1996riskmetrics}. ~\citep{wang2025shapley} further connects its geometric
structure to Shapley's Additivity axiom (formal proof in Appendix~\ref{app:ewp}, Proposition~\ref{prop:ewp}, and verification in Appendix~\ref{app:shapley}).

\subsubsection{Regime-Aware Multiplier}

The Bayesian Shapley weights reflect the average historical contribution across all market regimes.
In practice, agent contributions are regime-dependent: technical signals dominate during trending markets, on-chain signals provide the clearest per-token divergence in bear markets, while macroeconomic signals carry the highest cross-sectional authority in sustained bear regimes due to their role at regime turning points. To prevent any single agent from accumulating disproportionate weight during a favorable regime, we apply a Regime-Aware Multiplier: each Shapley weight is scaled by a continuous, agent-specific scalar obtained by monotone linear interpolation of agent $i$'s bull, volatile, and bear multiplier values at $\xi^{(t)}$ (values in Appendix~\ref{app:impl}), and the result is renormalized over $i \in \mathcal{N}$:
\begin{equation}
  \tilde{\omega}_i^{(t)} \;\propto\; \omega_i^{(t)} \cdot \psi_i\!\left(\xi^{(t)}\right), \qquad \sum_{i \in \mathcal{N}} \tilde{\omega}_i^{(t)} = 1,
  \label{eq:multiplier}
\end{equation}
where $\psi_i$ is the regime-conditional scalar for agent $i$. An additional dominant-dimension override amplifies the leading agent's multiplier when the Shapley-weighted plurality vote exhibits strong consensus, with intensity proportional to the normalized consensus strength
$\displaystyle\frac{\kappa^{(t)} - 1/|\mathcal{N}|}{1 - 1/|\mathcal{N}|} \in [0,1]$
(for $N{=}3$: $(\kappa^{(t)} - \tfrac{1}{3})/\tfrac{2}{3}$),
ensuring that a uniform vote $(\kappa^{(t)} = 1/|\mathcal{N}|)$ produces no amplification.

\subsection{Risk Control}
\label{sec:risk}

\textbf{Momentum Overlay.}
Each council weight is multiplicatively scaled by a regime-gated cross-sectional momentum score, with stronger tilts in trending regimes and near-zero tilts in volatile regimes. Full specification is in Appendix~\ref{app:overlays}. \textbf{Dominance Overlay.}
A BTC dominance signal $d^{(t)}$ captures the relative outperformance of BTC versus the equal-weight basket over a rolling window.
The signal continuously tilts the portfolio toward BTC (or altcoins) via a $\tanh$-transformed score:
\begin{equation}
  d^{(t)} = \tanh\!\left(\frac{\Delta_{\mathrm{BTC-EW}}^{(t)}}{\tau_d}\right) \in (-1, 1)
  \label{eq:btc_dom}
\end{equation}
In volatile regimes, a hard BTC floor constraint additionally enforces a minimum BTC allocation, motivated by the empirical finding that BTC significantly outperforms the altcoin equal-weight basket during high-volatility periods. \textbf{Intra-Regime Controls.}
Three regime-specific controls further refine the portfolio.
A bear on-chain tilt shifts weight toward BTC based on network-activity signals. A volatile cash target rises continuously as the regime score approaches zero. A bull-to-volatile transition buffer scales down positions when the regime score drops sharply, see Appendix~\ref{app:overlays}. \textbf{Drawdown Protection.}
Realized drawdown $\mathrm{DD}^{(t)}$ from the rolling portfolio peak triggers a regime-gated position reduction, analogous to the risk-sensitive value factorization approach of~\cite{shen2023riskq}:
\begin{equation}
  s_{\mathrm{dd}}^{(t)} = 1 - g\!\left(\xi^{(t)}\right) \cdot \tanh\!\!\left(\frac{\mathrm{DD}^{(t)}}{\tau_{\mathrm{dd}}}\right) \cdot \nu_{\mathrm{dd}}
  \label{eq:dd_protection}
\end{equation}
\begin{equation}
  \mathbf{w}^{(t)} = s_{\mathrm{dd}}^{(t)} \cdot \mathbf{w}^{(\ell)}, \qquad
  c^{(t)} = c^{(\ell)} + \bigl(1 - s_{\mathrm{dd}}^{(t)}\bigr)\,\mathbf{1}^\top \mathbf{w}^{(\ell)}
  \label{eq:dd_update}
\end{equation}
where $\ell$ indexes the earlier overlay step, $g(\xi^{(t)}) = \max(0, -\xi^{(t)})$ is a regime gate suppressing protection in bull regimes, $\tau_{\mathrm{dd}}$ is the drawdown bandwidth, and $\nu_{\mathrm{dd}}$ is the maximum position-reduction coefficient (values in Appendix~\ref{app:impl}). After all overlays, $\mathbf{w}^{(t)}$ is projected onto $\mathcal{W}$, and the realized return
$R^{(t)}=(\mathbf{w}^{(t)})^\top\mathbf{r}^{(t+1)}$
updates $v(\cdot;\mathcal{H}_{t+1})$, which closes the online learning loop.

\section{Experiments}
\label{sec:experiments}

\subsection{Settings}
\textbf{Dataset.} We evaluate MRC on a multi-modal web3 dataset covering 13 assets from 2023-03-01 to 2025-12-31, yielding 1{,}037 daily decision periods.
We use crypto as a stress-test domain for non-stationary multi-agent credit assignment because it runs 24/7, transparent on-chain activity, and rapid regime shifts.
All competing LLM and DRL baselines are evaluated on comparably domain-specific asset universes without cross-asset transfer claims~\cite{yu2024fincon,xiao2024tradingagents,li2024cryptotrade}.  \textbf{Baselines.} We compare against 13 baselines in three categories.
\emph{Market benchmarks}: Equal-Weight~\cite{demiguel2009optimal}, BTC Buy-and-Hold, and ETH Buy-and-Hold.
\emph{LLM multi-agent baselines}: FinCon~\cite{yu2024fincon} (hierarchical CVR), TradingAgents~\cite{xiao2024tradingagents} (multi-role pipeline), FinMem~\cite{yu2025finmem} (layered memory), CryptoTrade~\cite{li2024cryptotrade} (single LLM + reflection), and FSReason~\cite{wang2025exploring} (dual Fact/Subjectivity agents, equal-weight fusion); all share MRC's backbone (Qwen3-VL) and temperature ($T_{\mathrm{LLM}}{=}0.7$).
\emph{DRL baselines}: A2C~\cite{mnih2016asynchronous}, PPO~\cite{schulman2017proximal} ($\varepsilon{=}0.2$), DQN~\cite{mnih2015human} (10 discrete templates), MAPPO~\cite{yu2022surprising} (three temporal actors, centralised critic), and MADDPG~\cite{lowe2017multi} (three actors + critics, 302-dim input); all trained online with 260-dim log-return features following the FinRL protocol~\cite{liu2022finrl,finrl2020}. Full dataset and implementation details, please refer to Appendix~\ref{app:impl}.

\subsection{Experiment Results}
We evaluate all methods using four metrics (definitions in Appendix~\ref{app:metrics}):
cumulative return \textbf{CR}\,\% ($\uparrow$), Sharpe ratio \textbf{SR} ($\uparrow$),
maximum drawdown \textbf{MDD}\,\% ($\downarrow$), and information ratio \textbf{IR} vs.\ EW ($\uparrow$).

\paragraph{RQ1: Does MRC outperform other baselines?}

{\renewcommand{\arraystretch}{0.3}
\begin{table}[tbh]
\centering
\caption{Main results over 1{,}037 trading days (2023-03-01 to 2025-12-31).
Non-benchmark methods: mean\,$\pm$\,std (5 seeds); benchmarks: deterministic.
\textcolor{red}{Red}: best mean per column among non-benchmark methods; \textcolor{blue}{Blue}: second best.
Superscripts on MRC denote two-sample $t$-test significance versus the second-best method per metric:
$^{*}$$p{<}0.10$, $^{**}$$p{<}0.05$, $^{***}$$p{<}0.01$ (Welch's $t$-test, $n{=}5$ seeds each). The transaction cost is 0bps. For more cost sensitivity analysis, please refer to Table~\ref{tab:slippage_sensitivity} (Appendix~\ref{app:additional_results}).}
\label{tab:main_results}
\setlength{\tabcolsep}{7pt}
\resizebox{0.75\textwidth}{!}{%
\begin{tabular}{llcccc}
\toprule
\textbf{Category} & \textbf{Method} & CR\,\% ($\uparrow$) & SR ($\uparrow$) & MDD\,\% ($\downarrow$) & IR ($\uparrow$) \\
\midrule
\multirow{3}{*}{\textit{Benchmarks}}
  & EW           & 279.93 & 1.14 & 42.78 &  0.00 \\
  & BTC     & 270.94 & 1.23 & 32.02 & $-$0.07 \\
  & ETH     &  78.45 & 0.63 & 63.75 & $-$0.59 \\
\midrule
\multirow{5}{*}{\textit{DRL Baselines}}
  & A2C    & 196.0$_{\pm 64.4}$  & 1.03$_{\pm 0.18}$ & 38.5$_{\pm 8.2}$  & $-$0.53$_{\pm 0.31}$ \\
  & PPO    & 215.2$_{\pm 58.5}$  & 1.04$_{\pm 0.13}$ & 41.1$_{\pm 3.6}$  & $-$0.51$_{\pm 0.41}$ \\
  & DQN    & 249.7$_{\pm 76.0}$  & 1.16$_{\pm 0.18}$ & \textcolor{blue}{34.6}$_{\pm 4.4}$  & $-$0.45$_{\pm 0.55}$ \\
  & MAPPO  & 254.5$_{\pm 4.5}$   & 1.14$_{\pm 0.01}$ & 39.9$_{\pm 0.4}$  & $-$1.14$_{\pm 0.11}$ \\
  & MADDPG & 251.5$_{\pm 13.3}$  & 1.14$_{\pm 0.03}$ & 40.2$_{\pm 0.6}$  & $-$1.16$_{\pm 0.34}$ \\
\midrule
\multirow{5}{*}{\textit{LLM Baselines}}
  & FinCon        & 262.1$_{\pm 6.4}$    & 1.14$_{\pm 0.02}$ & 41.2$_{\pm 1.2}$  & $-$0.98$_{\pm 0.19}$ \\
  & TradingAgents & 277.7$_{\pm 15.7}$   & 1.19$_{\pm 0.03}$ & 38.6$_{\pm 2.9}$  & $-$0.23$_{\pm 0.15}$ \\
  & FinMem        & 320.3$_{\pm 107.6}$  & 1.25$_{\pm 0.20}$ & 39.9$_{\pm 1.9}$  &  0.04$_{\pm 0.50}$ \\
  & CryptoTrade   & \textcolor{blue}{356.3}$_{\pm 39.9}$ & \textcolor{blue}{1.32}$_{\pm 0.06}$ & 41.3$_{\pm 3.6}$ & \textcolor{blue}{0.33}$_{\pm 0.23}$ \\
  & FSReason      & 282.6$_{\pm 29.8}$   & 1.20$_{\pm 0.06}$ & 38.4$_{\pm 1.7}$  & $-$0.14$_{\pm 0.18}$ \\
\midrule
\rowcolor{maroon!5}\textit{Full System}
  & \textbf{MRC (Ours)} & \textcolor{red}{440.1}$^{**}_{\pm 51.4}$ & \textcolor{red}{1.51}$^{***}_{\pm 0.07}$ & \textcolor{red}{34.1}$_{\pm 2.5}$ & \textcolor{red}{0.47}$_{\pm 0.20}$ \\
\bottomrule
\end{tabular}%
}
\end{table}
}

Table~\ref{tab:main_results} reports the overall comparison. MRC attains the best CR, SR, IR, and lowest MDD among all active methods, remaining within 2.1 percentage points of passive BTC on drawdown.
This pattern matters because the strongest competing methods each fail on a different axis: FinMem has high return but high seed variance, CryptoTrade improves Sharpe, but not drawdown, and DRL baselines remain below equal weight on benchmark-relative skill.
MRC also shows the tightest seed-level SR variance among LLM methods ($\pm 0.07$), suggesting that the Bayesian uniform prior and EWP cold-start mechanism improve stability rather than amplify early-period sensitivity.
Full per-group visual comparisons are provided in Appendix~\ref{app:additional_results}, and transaction cost sensitivity is reported in Table~\ref{tab:slippage_sensitivity}: at both 5 and 10\,bps one-way per unit of turnover, MRC still exceeds BTC Buy-and-Hold in cumulative return, while several high-turnover LLM baselines lose their zero-cost advantage.
We therefore interpret the main result as a relative robustness claim under matched execution constraints, rather than as an unconditional claim about deployable alpha at arbitrary cost levels.

\paragraph{RQ2: Does MRC really work?}

{\renewcommand{\arraystretch}{0.3}
\begin{table}[tbh]
\centering
\caption{Shapley Council ablation: each coalition as a standalone decision maker (5 seeds, mean\,$\pm$\, std). Metrics as in Table~\ref{tab:main_results}. Superscript $^{**}$ on MRC IR: two-sample $t$-test $p{<}0.05$ versus the second-best method (Grand Coalition) on IR.}
\label{tab:ablation_shapley_council}
\vspace{0mm}
\setlength{\tabcolsep}{7pt}
\resizebox{0.75\textwidth}{!}{%
\begin{tabular}{llcccc}
\toprule
\textbf{Stage} & \textbf{Method} & CR\,\% ($\uparrow$) & SR ($\uparrow$) & MDD\,\% ($\downarrow$) & IR ($\uparrow$) \\
\midrule
\multirow{3}{*}{\textit{Benchmarks}}
  & EW       & 279.93 & 1.14 & 42.78 &  0.00 \\
  & BTC & 270.94 & 1.23 & 32.02 & $-$0.07 \\
  & ETH &  78.45 & 0.63 & 63.75 & $-$0.59 \\
\midrule
\multirow{3}{*}{\textit{Stage 1}}
  & Market Agent ($A_1$)   & \textcolor{blue}{393.5}$_{\pm 135.9}$ & \textcolor{red}{1.54}$_{\pm 0.22}$ & 47.5$_{\pm 6.0}$ & \textcolor{blue}{0.08}$_{\pm 0.30}$ \\
  & On-chain Agent ($A_2$) & 238.3$_{\pm 20.6}$ & 1.18$_{\pm 0.05}$ & \textcolor{red}{30.8}$_{\pm 3.0}$ & $-$0.46$_{\pm 0.11}$ \\
  & Macro Agent ($A_3$)    & 129.5$_{\pm 35.0}$ & 0.83$_{\pm 0.11}$ & 40.9$_{\pm 1.3}$ & $-$1.39$_{\pm 0.29}$ \\
\midrule
\multirow{3}{*}{\textit{Stage 2}}
  & $A_1 \oplus A_2$ & 246.9$_{\pm 106.2}$ & 1.34$_{\pm 0.25}$ & 35.2$_{\pm 4.9}$ & $-$0.43$_{\pm 0.34}$ \\
  & $A_1 \oplus A_3$ & 170.8$_{\pm 95.2}$  & 1.10$_{\pm 0.28}$ & 42.9$_{\pm 5.6}$ & $-$0.68$_{\pm 0.34}$ \\
  & $A_2 \oplus A_3$ & 151.1$_{\pm 55.6}$  & 0.98$_{\pm 0.17}$ & \textcolor{blue}{31.8}$_{\pm 2.0}$ & $-$0.95$_{\pm 0.30}$ \\
\midrule
\textit{Stage 3}     & Grand Coalition     & 286.0$_{\pm 77.6}$ & 1.33$_{\pm 0.17}$ & 38.2$_{\pm 3.2}$ & $-$0.26$_{\pm 0.31}$ \\
\midrule
\rowcolor{maroon!5}\textit{Full System} & \textbf{MRC (Ours)} & \textcolor{red}{440.1}$_{\pm 51.4}$ & \textcolor{blue}{1.51}$_{\pm 0.07}$ & 34.1$_{\pm 2.5}$ & \textcolor{red}{0.47}$^{**}_{\pm 0.20}$ \\
\bottomrule
\end{tabular}%
}
\end{table}
}

Table~\ref{tab:ablation_shapley_council} reports the coalition ablation. Stage~1 individual agents span a wide performance range (SR 0.83 to 1.54) with substantial inter-seed variance (SR $\pm$0.05 to 0.22).
$A_1$ achieves the highest point-estimate SR (1.54) but also the deepest MDD (47.5\%), while $A_3$ records the weakest SR (0.83) together with the widest MDD among Stage~1 agents (40.9\%), showing that no single agent balances return and risk at the same time.
Stage~2 pairwise coalitions consistently underperform the best Stage~1 individual portfolio on SR (1.34, 1.10, and 0.98 versus 1.54) while preserving or amplifying inter-seed variance, with $A_1 \oplus A_3$ CR variance reaching $\pm$95.2\%, and offer no compensating improvement in MDD (see Appendix~\ref{sec:discussion} for analysis of why pairwise debate attenuates rather than improves signal in portfolio construction).
The Stage~3 grand coalition alone recovers SR to 1.33 but records MDD of 38.2\%, leaving its risk-return profile worse than $A_1$ on both dimensions taken separately.
The full MRC system records SR\,=\,1.51 and MDD\,=\,34.1\%, reducing MDD by 13.4 percentage points relative to $A_1$ alone at a cost of only 0.03 in SR, tightening seed-level SR variance to $\pm$0.07, and producing the only statistically significant IR advantage in the table ($p{<}0.05$).
For sensitivity analyses for temperature, burn-in, and transaction cost, see Appendix~\ref{app:additional_results}.

\paragraph{RQ3: Does MRC remain robust across different market regimes?}

{\renewcommand{\arraystretch}{0.3}
\begin{table}[tbh]
\centering
\caption{Regime-conditional performance decomposition (1{,}037 days).
Bull / Bear / Volatile identified by $\xi^{(t)}$.
Metrics as in Table~\ref{tab:main_results}.
\textcolor{red}{Red}: best per column among MRC and LLM/DRL baselines (Shapley ablations and benchmarks excluded);
\textcolor{blue}{Blue}: second best. Single seed ($T_{\mathrm{LLM}}{=}0.7$).}
\label{tab:regime_conditional}
\vspace{0mm}
\setlength{\tabcolsep}{4pt}
\resizebox{\textwidth}{!}{%
\begin{tabular}{ll cccc cccc cccc}
\toprule
& &
  \multicolumn{4}{c}{\textbf{Bull} (430\,d)} &
  \multicolumn{4}{c}{\textbf{Bear} (285\,d)} &
  \multicolumn{4}{c}{\textbf{Volatile} (322\,d)} \\
\cmidrule(lr){3-6}\cmidrule(lr){7-10}\cmidrule(lr){11-14}
\textbf{Category} & \textbf{Method}
  & CR\,\% ($\uparrow$) & SR ($\uparrow$) & MDD\,\% ($\downarrow$) & Cash\,\%
  & CR\,\% ($\uparrow$) & SR ($\uparrow$) & MDD\,\% ($\downarrow$) & Cash\,\%
  & CR\,\% ($\uparrow$) & SR ($\uparrow$) & MDD\,\% ($\downarrow$) & Cash\,\% \\
\midrule
\multirow{3}{*}{\textit{Stage 1}}
  & Market $(A_1)$
  & 315.8 & 2.61 & 20.8 & 22.7\%
  & 17.3 & 0.83 & 15.8 & 55.7\%
  & 44.7 & 1.38 & 31.6 & 41.0\% \\
  & On-chain $(A_2)$
  & 217.5 & 2.37 & 23.6 & 15.1\%
  & 19.3 & 0.73 & 20.6 & 22.1\%
  & $-$4.9 & 0.06 & 32.1 & 19.2\% \\
  & Macro $(A_3)$
  & 226.9 & 2.22 & 17.9 & 7.2\%
  & $-$9.1 & 0.02 & 31.1 & 10.5\%
  & $-$8.4 & 0.01 & 34.5 & 9.3\% \\
\midrule
\multirow{3}{*}{\textit{Stage 2}}
  & $A_1 \oplus A_2$
  & 227.9 & 2.50 & 24.7 & 24.4\%
  & 15.3 & 0.79 & 16.9 & 42.7\%
  & 41.4 & 1.40 & 21.9 & 35.2\% \\
  & $A_1 \oplus A_3$
  & 266.5 & 2.66 & 18.5 & 23.5\%
  & 1.3 & 0.21 & 17.2 & 48.2\%
  & 16.8 & 0.70 & 32.4 & 37.7\% \\
  & $A_2 \oplus A_3$
  & 206.3 & 2.44 & 19.5 & 20.5\%
  & 23.6 & 0.86 & 20.6 & 26.0\%
  & $-$7.7 & $-$0.06 & 28.6 & 23.6\% \\
\midrule
\textit{Stage 3}
  & Grand Coalition
  & 274.7 & 2.55 & 18.3 & 17.7\%
  & 5.4 & 0.37 & 19.8 & 34.7\%
  & 31.2 & 1.05 & 28.3 & 29.6\% \\
\midrule
\multirow{3}{*}{\textit{Benchmarks}}
  & EW
  & 275.83 & 2.26 & 19.84 & 0.00
  & 10.43 & 0.50 & 28.35 & 0.00
  & $-$8.46 & 0.04 & 40.34 & 0.00 \\
  & BTC
  & 132.71 & 1.77 & 17.46 & 0.00
  & 13.13 & 0.57 & 23.02 & 0.00
  & 40.90 & 1.11 & 33.03 & 0.00 \\
  & ETH
  & 297.71 & 2.08 & 24.19 & 0.00
  & $-$42.26 & $-$0.79 & 50.14 & 0.00
  & $-$22.29 & $-$0.19 & 51.63 & 0.00 \\
\midrule
\multirow{5}{*}{\textit{LLM Baselines}}
  & FinCon
  & 271.1 & 2.26 & 19.7 & 0.8\%
  & 7.8 & 0.45 & 28.8 & 0.7\%
  & $-$8.4 & 0.04 & 40.3 & 0.8\% \\
  & TradingAgents
  & 228.4 & 2.24 & \textcolor{blue}{17.7} & 8.0\%
  & 9.4 & 0.48 & 28.5 & 8.0\%
  & 7.6 & 0.41 & 36.5 & 8.0\% \\
  & FinMem
  & \textcolor{blue}{305.6} & \textcolor{blue}{2.54} & 17.9 & 8.0\%
  & \textcolor{red}{21.2} & \textcolor{red}{0.76} & 30.0 & 8.0\%
  & \textcolor{blue}{17.5} & \textcolor{blue}{0.65} & 32.4 & 8.0\% \\
  & CryptoTrade
  & 299.7 & 2.51 & 19.7 & 7.7\%
  & 3.2 & 0.33 & 25.8 & 7.8\%
  & 16.0 & 0.60 & 37.3 & 7.8\% \\
  & FSReason
  & 281.2 & 2.49 & \textcolor{red}{17.1} & 7.0\%
  & 4.9 & 0.37 & 28.3 & 7.5\%
  & $-$1.0 & 0.20 & 33.5 & 7.1\% \\
\midrule
\multirow{5}{*}{\textit{DRL Baselines}}
  & A2C
  & 236.0 & 2.44 & 19.1 & 19.3\%
  & \textcolor{blue}{19.2} & \textcolor{blue}{0.74} & \textcolor{blue}{20.5} & 19.7\%
  & $-$0.2 & 0.20 & 31.3 & 19.9\% \\
  & PPO
  & 186.5 & 2.00 & 18.4 & 11.2\%
  & $-$0.1 & 0.25 & 28.6 & 10.6\%
  & 8.2 & 0.42 & 31.9 & 10.7\% \\
  & DQN
  & 231.7 & 2.48 & 18.1 & 9.7\%
  & $-$10.3 & $-$0.04 & 28.5 & 9.1\%
  & 15.5 & 0.59 & \textcolor{red}{27.5} & 9.0\% \\
  & MAPPO
  & 245.3 & 2.26 & 18.5 & 7.0\%
  & 10.9 & 0.52 & 26.5 & 7.0\%
  & $-$7.7 & 0.03 & 38.1 & 7.0\% \\
  & MADDPG
  & 246.3 & 2.26 & 18.6 & 7.1\%
  & 14.1 & 0.59 & 25.8 & 7.0\%
  & $-$8.0 & 0.02 & 38.3 & 7.0\% \\
\midrule
\rowcolor{maroon!5}\textit{Full System}
  & \textbf{MRC (Ours)}
  & \textcolor{red}{\textbf{353.7}} & \textcolor{red}{\textbf{2.60}} & \textbf{20.9} & \textbf{8.0\%}
  & \textbf{6.8} & \textbf{0.41} & \textcolor{red}{\textbf{19.4}} & \textbf{28.4\%}
  & \textcolor{red}{\textbf{25.9}} & \textcolor{red}{\textbf{0.83}} & \textcolor{blue}{\textbf{29.8}} & \textbf{13.6\%} \\
\bottomrule
\end{tabular}%
}
\end{table}
}

Table~\ref{tab:regime_conditional} is a single-seed case study verifying that the regime-conditional mechanisms behave as designed, with per-regime statistical claims deferred to the five-seed results in Tables~\ref{tab:main_results} and~\ref{tab:ablation_shapley_council}.
MAPPO and MADDPG achieve SR\,$\approx$\,2.26 in the bull regime but collapse to SR\,=\,0.03 and 0.02 in volatile, and DQN turns negative (SR\,=\,$-$0.04) in bear, a pattern consistent with price-only state representations that carry no regime signal and sustain bull-market postures into fundamentally different conditions.
At the agent level, $A_1$ alone holds 55.7\% cash in bear, an overcautious posture driven by prior bull-period weight accumulation rather than current conditions, while $A_3$ collapses to SR\,$\approx$\,0.01 across both bear and volatile. These are precisely the dominance and collapse pathologies that the Bayesian Shapley update and regime-aware multiplier are designed to prevent.
MRC achieves SR\,=\,2.60\,/\,0.41\,/\,0.83 and mean cash of 8.0\%\,/\,28.4\%\,/\,13.6\% across bull, bear, and volatile, a monotonically risk-adjusted cash posture absent in every DRL baseline where allocations remain near-static between 7\% and 11\% regardless of market condition, confirming that the adaptive mechanisms engage as intended.

\paragraph{RQ4: How do the learned Shapley weights evolve?}

\begin{figure}[tbh]
    \centering
    \includegraphics[width=0.9\textwidth]{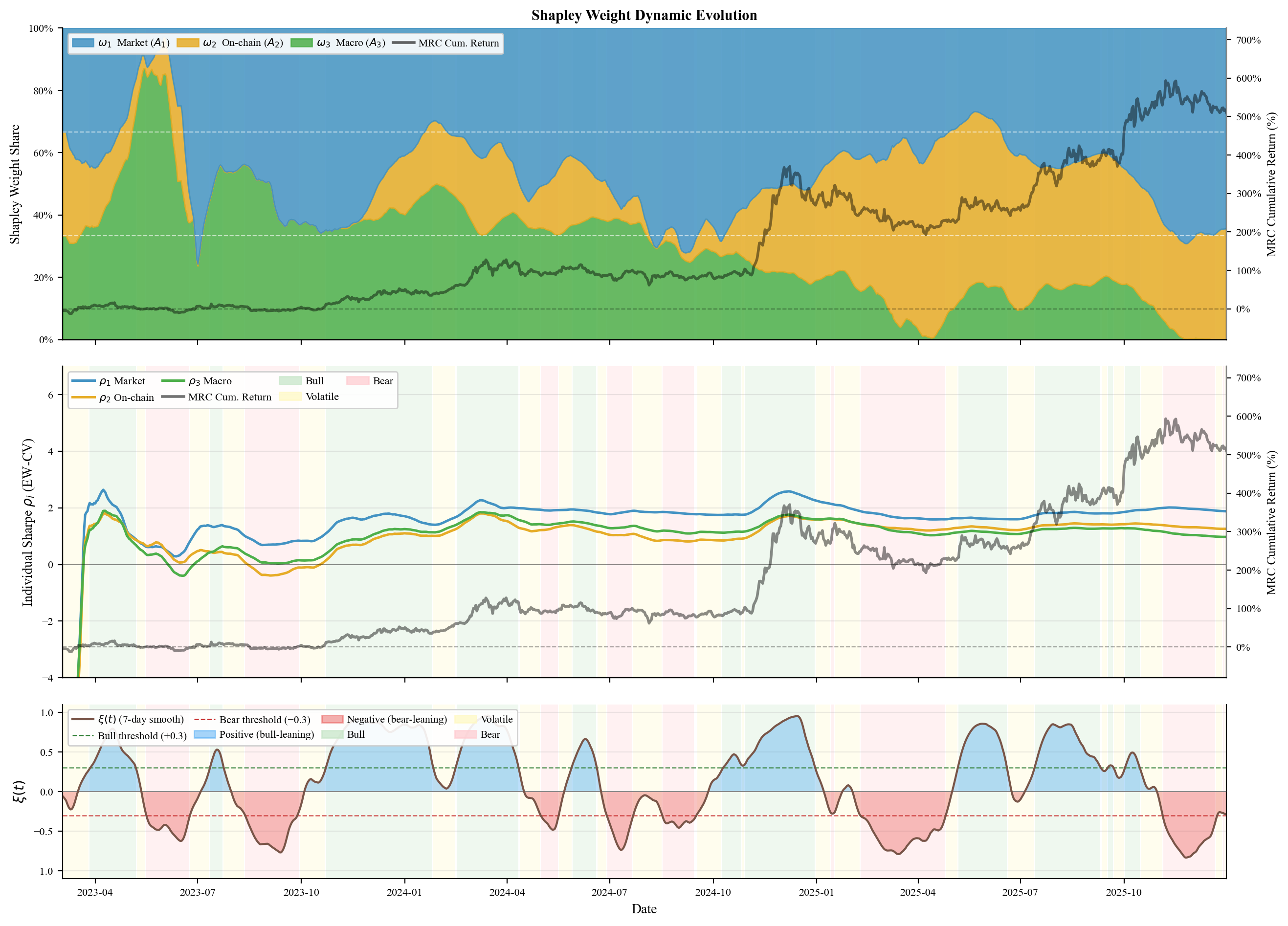}
    \vspace{-3mm}
    \caption{%
        Shapley weight evolution. Top: Bayesian weights $\omega_i^{(t)}$ for three agents. Middle: $\rho_i^{(t)}$ for each Stage-1 individual coalition. Bottom: conditioning regime score $\xi^{(t)}$.
    }
    \vspace{-3mm}
    \label{fig:shapley_evolution}
\end{figure}

Figure~\ref{fig:shapley_evolution} visualises how the Bayesian Shapley weights $\omega_i^{(t)}$, their underlying EWP-estimated Sharpe drivers $\rho_i^{(t)}$ and the conditioning regime score $\xi^{(t)}$ evolve over the evaluation window. Three observations stand out. First, before the cold-start threshold (dotted vertical line, day 100), all weights remain near the uniform prior $1/3$, confirming that EWP and Bayesian mixing defer to the prior until sufficient evidence accumulates.
Second, after the cold-start period, $A_1$ (Market) becomes dominant, especially during bull regimes (green shading), consistent with its higher individual Sharpe.
Third, $A_3$ (Macro) converges toward near-zero weight by mid-2024, reflecting its persistently lower $\rho_3$ throughout the risk-on environment of 2023--2025. Its weight would recover under a sustained bear regime.
The interaction between Shapley weights and regime-consensus signals, and how the $\beta_{\mathrm{gc}}$ blend ratio responds to agent disagreement, is further illustrated in Figure~\ref{fig:consensus_blend} (Appendix~\ref{app:additional_results}).
Additional backbone ablations are in Appendix~\ref{app:additional_results} (Tables~\ref{tab:llm_backbone_s1}--\ref{tab:llm_backbone_cost}), token-level return breakdowns are in Appendix~\ref{app:additional_results}, and agent-level profiles are in Appendix~\ref{app:return_profiles}.

\paragraph{RQ5: How can MRC explain rebalancing decisions?}

\begin{figure}[tbh]
    \centering
    \includegraphics[width=0.9\textwidth]{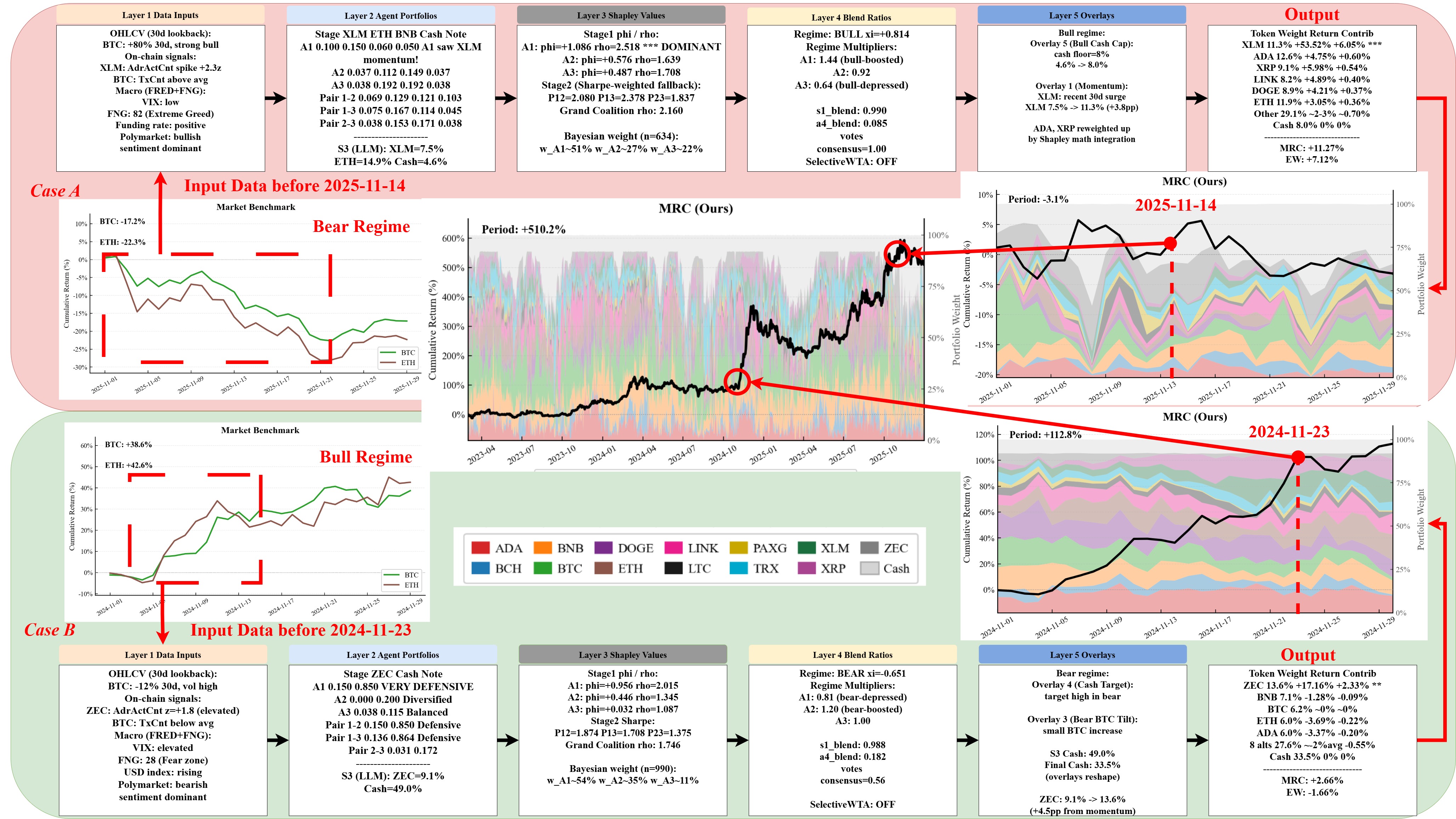}
    \vspace{-3mm}
    \caption{%
    Full-chain explainability under regime-dependent rebalancing.
    }
    \vspace{-3mm}
    \label{fig:case_study}
\end{figure}

Figure~\ref{fig:case_study} shows how MRC performs rebalancing under opposite market regimes.
Case~A illustrates the bear-market path. When market momentum deteriorates, volatility rises, and sentiment turns bearish, MRC shifts toward a more defensive allocation with higher cash while preserving selective exposure supported by on-chain evidence.
Case~B illustrates the bull-market path. When momentum, sentiment, and regime indicators turn strongly positive, MRC assigns greater authority to the market specialist and increases exposure to trend-following assets.
Together, the two cases show that MRC's explainability is not only post-hoc. The same five-layer Shapley report exposes how regime-dependent inputs drive the reallocation decision from raw signals to the final executed portfolio.

\section{Conclusion}
\label{sec:conclusion}

We introduced MRC, a Shapley-based cooperation mechanism for online credit assignment in heterogeneous multi-agent decision systems, validated at $N{=}3$ to cover the three orthogonal financial information channels. Across 1{,}037 trading days, MRC achieves the best CR, SR, and IR among the active baselines under matched execution constraints, while remaining close to passive BTC on drawdown. Coalition ablation indicates that the main gain comes from Shapley-weighted integration across coalition outputs rather than from any isolated stage alone.
We view this as use-inspired evidence that principled credit reassignment can improve robustness in heterogeneous multi-agent decision systems. Our future work will extend MRC beyond the current 13 cryptocurrency assets and low-cost execution setting to broader asset classes and cross-domain validation, and will test whether the Shapley-based credit assignment system remains effective as the number of agents scales.

\section*{Acknowledgements}
This work was funded by UKRI EPSRC (grant No. EP/Y028392/1): AI for Collective Intelligence (AI4CI). 


\bibliographystyle{abbrvnat}
\bibliography{ref_main}
\newpage


\appendix

\section*{Appendix}

\section{Related Work}
\label{sec:related}

\subsection{LLM-based Multi-Agent Trading System}

LLM multi-agent systems for financial decision-making~\cite{yu2024fincon,
xiao2024tradingagents,yu2025finmem,zhang2024multimodal,luo2025llm,
yang2023fingpt_open,wu2023bloomberggpt,yang2023investlm} have demonstrated strong
empirical performance, but none resolves the credit assignment problem.
General-purpose LLM agent frameworks~\cite{hong2023metagpt,wu2024autogen,
park2023generative} confirm that role specialisation and structured communication
improve collective reasoning, yet lack principled mechanisms for dynamic credit
assignment under non-stationary conditions.
\textbf{FinCon}~\cite{yu2024fincon} is the closest predecessor: it relies on Conceptual Verbal Reinforcement (CVR)—a language-based
feedback loop in which a Manager Agent distills prior analysis into
"conceptual signals" for the downstream Analyst Agents—without any
cooperative game-theoretic credit assignment, online agent reweighting,
or regime conditioning.
\textbf{TradingAgents}~\cite{xiao2024tradingagents} and \textbf{FinMem}~\cite{yu2025finmem} rely on heuristic weighting and layered memory respectively, and neither can identify which modality drove a given decision.
DRL methods (A2C, PPO, DQN) via \textbf{FinRL}~\cite{liu2022finrl} handle credit implicitly through policy gradient objectives, but treat the portfolio as a monolithic black box over price-only features, with no concept of heterogeneous agent coalitions.

The online credit assignment mechanism of MRC connects directly to the classical online portfolio selection literature.
Universal Portfolio~\cite{cover1991universal} and its variants establish that exponentially-weighted expert aggregation achieves logarithmic regret against the best fixed portfolio in hindsight, and the Hedge algorithm~\cite{freund1997decision} achieves minimax-optimal regret under the same expert-aggregation model.
MRC instantiates this principle at the agent-coalition level: the EWP characteristic functions implement a discounted online learning objective~\cite{zinkevich2003online,hazan2016introduction}, while the Bayesian adaptive mixture corresponds structurally to Hedge-style exponential weighting.
The key departure is that Shapley values replace the undifferentiated loss surrogate of standard expert aggregation with an attribution derived from axiomatic Shapley values, enabling credit to track each modality's marginal coalition contribution rather than treating all agents as interchangeable.

\subsection{Regime Modeling and Detection}

The core difficulty in online financial learning is non-stationarity: a model optimal in one regime is harmful in another, yet standard algorithms are slow to unlearn early advantage~\cite{zinkevich2003online,jpmorgan1996riskmetrics}.
In LLM multi-agent systems, where agent parameters are frozen, all adaptation must occur at the weighting layer, making cold-start dominance an especially acute failure mode.
DRL policies are equally vulnerable, which means price-only state features discard the on-chain and macro signals most predictive of regime transitions, and the same network is applied uniformly across bull, volatile, and bear conditions.

MRC decouples regime detection from agent weighting: the Regime Shift Perceptor computes a continuous market-state score independently of the LLM agents, then modulates Shapley weights through smooth regime-conditional multipliers.
\textbf{MarS}~\cite{li2025mars} and \textbf{TwinMarket}~\cite{yuzhe2026twinmarket}
confirm that bull, volatile, and bear regimes reflect qualitatively distinct market
dynamics warranting different agent authority structures, providing empirical
motivation for this design. This is consistent with the broader financial literature
showing that factor exposures and model performance are strongly
regime-dependent~\cite{gu2020empirical}.
\textbf{A2C}, \textbf{PPO}, and \textbf{DQN} are included as direct baselines following the evaluation protocol of \textbf{FinCon}~\cite{yu2024fincon}.

\subsection{Explainability in Trading Systems}

Existing financial AI systems provide no quantitative attribution linking a specific allocation to a specific data source or deliberation stage, a gap highlighted by standardized benchmarks~\cite{xie2024finben,liu2022finrl}.
\textbf{ReAct}~\cite{yao2023react} improves LLM auditability through interleaved
reasoning; \textbf{Chain-of-Thought}~\cite{wei2022chain}, \textbf{Tree-of-Thoughts}~\cite{yao2023tree},
and \textbf{Reflexion}~\cite{shinn2023reflexion} further demonstrate that structured
deliberation elicits more reliable outputs from LLMs.
MRC extends this line of work by connecting each agent's chain-of-thought directly
to final portfolio weights through the Shapley computation, making the reasoning
trace causally (not merely rhetorically) linked to the decision.
The information economics result~\cite{ottaviani2001information} that transparent aggregation dominates opaque pooling when experts disagree motivates MRC's Shapley-weighted divergence discount, which surfaces inter-agent disagreement explicitly rather than absorbing it silently.
MRC's five-layer attribution system directly addresses the opacity gap
shared by all competing systems~\cite{yu2024fincon,xiao2024tradingagents,
yu2025finmem}. This system complements model-level explainability tools such
as \textbf{LIME}~\cite{ribeiro2016should} and \textbf{SHAP}~\cite{lundberg2017unified} by operating at the system level, and attributing final allocations to agent coalitions rather than input features.

\section{Evaluation Metrics}
\label{app:metrics}

We evaluate all methods using four complementary metrics that jointly capture
profitability, risk-adjusted efficiency, downside risk, and benchmark-relative skill.

\textbf{Cumulative Return (CR\%)}~\cite{hull2012risk,yu2024fincon} measures the total compounded
growth of a portfolio over the evaluation window, providing a direct summary of
long-term profitability.  Let $R^{(t)}$ denote the portfolio's net return on day $t$;
the cumulative return over $T$ days is
\begin{equation}
  \mathrm{CR} = \prod_{t=1}^{T}(1 + R^{(t)}) - 1.
  \label{eq:cr}
\end{equation}
A higher CR indicates a more profitable strategy over the evaluation period.

\textbf{Sharpe Ratio (SR)}~\cite{sharpe1998sharpe}: Sharpe Ratio quantifies
risk-adjusted return by normalising the excess return of a portfolio above the
risk-free rate $R_f$ by its total return volatility $\sigma_p$, as shown in
Eq.~\eqref{eq:sr}.
\begin{equation}
  \mathrm{SR} = \sqrt{365} \cdot \frac{\bar{R}_p - R_f}{\hat{\sigma}_p},
  \label{eq:sr}
\end{equation}
where $\bar{R}_p$ is the mean daily portfolio return, $\hat{\sigma}_p$ is the sample standard deviation of daily returns, $R_f = 0$ is the risk-free rate used in the crypto setting, and the factor $\sqrt{365}$ annualizes the ratio for daily crypto markets.

A higher SR indicates more return per unit of total risk and is therefore used to compare risk-adjusted performance across methods.
Interpretation remains context-dependent~\cite{yu2024fincon}.

\textbf{Maximum Drawdown (MDD\%)}~\cite{ang2006downside}: MDD captures the worst
peak-to-trough loss a portfolio sustains before recovering to a new high,
measuring the largest relative decline from a peak value $P_{\text{peak}}$ to
a subsequent trough $P_{\text{trough}}$, as detailed in Eq.~\eqref{eq:mdd}.
\begin{equation}
  \mathrm{MDD} = \max\!\left(\frac{P_{\text{peak}} - P_{\text{trough}}}{P_{\text{peak}}}\right).
  \label{eq:mdd}
\end{equation}
A lower MDD reflects stronger capital preservation and greater resilience
under adverse market conditions.

\textbf{Information Ratio (IR)}~\cite{grinold2000active} measures the consistency
with which a strategy outperforms a passive benchmark, penalising high tracking
error. We use the equal-weight ($1/K$) portfolio, denoted EW, as the benchmark:
\begin{equation}
  \mathrm{IR} = \frac{\bar{R}_p - \bar{R}_b}{\hat{\sigma}(R_p - R_b)},
  \label{eq:ir}
\end{equation}
where $\bar{R}_p$, $\bar{R}_b$ are the mean daily returns of the portfolio and benchmark respectively, and $\hat{\sigma}(R_p - R_b)$ is the sample standard deviation of the daily active return.
A higher IR indicates more consistent outperformance relative to the benchmark.

\section{Algorithm Pseudocode}
\label{app:algorithms}

We provide complete pseudocode for the two core routines of MRC.

Algorithm~\ref{alg:ledger} details the offline weight-update step executed once per period after realized returns are observed: it estimates coalition characteristic functions via EWP (Phase~1), computes exact three-player Shapley values (Phase~2), applies Bayesian adaptive mixing (Phase~3), and conditionally activates the Selective WTA override (Phase~4), corresponding to \S\ref{sec:council} of the main paper.

\begin{algorithm}[htbp]
\caption{MRC Shapley Value Ledger Update}
\label{alg:ledger}
\begin{algorithmic}[1]
\Require Return history $\{R_S^{(\tau)}\}$ for all $S \subseteq \mathcal{N}$, $\tau=1,\ldots,t$;
         period count $t$; decay period $h$; concentration parameter $\lambda$;
         WTA threshold $\theta_{\mathrm{WTA}}$; WTA weight $\omega_{\mathrm{wta}}$;
         uniform prior $\pi_i = 1/|\mathcal{N}| = 1/3$ for all $i$
\Ensure Updated agent weights $\boldsymbol{\omega} = (\omega_1, \omega_2, \omega_3)$, pairwise weights $\{p_{ij}\}$

\Statex \textbf{Phase 1: EWP Estimation of Coalition Characteristic Functions}
\For{each non-empty $S \subseteq \mathcal{N}$}
  \State Compute $w_\tau \leftarrow \exp\!\bigl(-(t-\tau)/h\bigr)$ for $\tau = 1,\ldots,t$ \quad\Comment{Eq.~\eqref{eq:ewp_decay}}
  \State Compute $\hat{\mu}_{\mathrm{EW}}(S;\mathcal{H}_t)$, $\hat{\sigma}_{\mathrm{EW}}(S;\mathcal{H}_t)$
  \quad\Comment{Eq.~\eqref{eq:ewcv}}
    \State $v(S) \leftarrow \gamma_\rho \cdot \sqrt{365}\,\hat{\mu}_{\mathrm{EW}}(S;\mathcal{H}_t)/\hat{\sigma}_{\mathrm{EW}}(S;\mathcal{H}_t) + \gamma_\mu \cdot \mu_{\mathrm{ann}}(S;\mathcal{H}_t)$
\quad\Comment{Eq.~\eqref{eq:char_fn}}
\EndFor

\Statex \textbf{Phase 2: Exact Three-Player Shapley Computation}
\State $(\phi_1, \phi_2, \phi_3) \leftarrow \mathrm{ShapleyValues}(v_1, v_2, v_3, v_{12}, v_{13}, v_{23}, v_{123})$ \quad\Comment{Eqs.~\eqref{eq:phi1}--\eqref{eq:phi3}}
\State $\phi_i^+ \leftarrow \max(0, \phi_i)$ for each $i \in \mathcal{N}$ \quad\Comment{Dummy-player truncation}

\Statex \textbf{Phase 3: Bayesian Adaptive Weight Update}
\State $\alpha \leftarrow 1 - \exp(-t/\lambda)$ \quad\Comment{Eq.~\eqref{eq:alpha}}
\For{each $i \in \mathcal{N}$}
    \State $\omega_i \leftarrow \alpha \cdot \phi_i^+ / \sum_j \phi_j^+ + (1-\alpha) \cdot \pi_i$
    \quad\Comment{Eq.~\eqref{eq:bayesian_update}}
\EndFor
\State Compute $\{p_{ij}\}$ analogously using $v(\{i,j\})$ values and pairwise priors

\Statex \textbf{Phase 4: Selective Winner-Takes-All Override}
\State $i^* \leftarrow \arg\max_i\, \rho_i^{(n_{\mathrm{win}})}$ \quad\Comment{rolling Sharpe over $n_{\mathrm{win}}$-period window}
\If{$\rho_{i^*}^{(n_{\mathrm{win}})} \,/\, \mathrm{mean}_{j \neq i^*}(\rho_j^{(n_{\mathrm{win}})}) \;\geq\; \theta_{\mathrm{WTA}}$}
  \State $\omega_{i^*} \leftarrow \omega_{\mathrm{wta}}$
  \State $\omega_j \leftarrow (1-\omega_{\mathrm{wta}}) \cdot \omega_j / \sum_{k \neq i^*} \omega_k$ \quad for all $j \neq i^*$
\EndIf
\State \Return $\boldsymbol{\omega}$, $\{p_{ij}\}$
\end{algorithmic}
\end{algorithm}

Algorithm~\ref{alg:inference} details the \emph{online} per-period inference step: it runs the three-stage cooperative deliberation (Stages~1--3), blends coalition portfolios via the data-driven ratios $\beta_{S1}^{(t)}$ and $\beta_{\mathrm{gc}}^{\mathrm{final}}$, applies the risk-control cascade (\S\ref{sec:risk}), and then calls Algorithm~\ref{alg:ledger} to update the Shapley ledger $\mathcal{L}$ with the new realized return.

\begin{algorithm}[t]
\caption{MRC Per-Period Inference}
\label{alg:inference}
\begin{algorithmic}[1]
\Require Data streams $\mathbf{x}_1^{(t)}, \mathbf{x}_2^{(t)}, \mathbf{x}_3^{(t)}$;
         current agent weights $\boldsymbol{\omega}$, pairwise weights $\{p_{ij}^{(t)}\}$;
         Shapley ledger $\mathcal{L}$; price history $\mathcal{H}_{\mathrm{price}}$
\Ensure Portfolio $\mathbf{w}^{(t)}$, updated ledger $\mathcal{L}$
\State $\xi^{(t)} \leftarrow \mathrm{RegimeScore}(\mathcal{H}_{\mathrm{price}})$ \quad\Comment{Eq.~\eqref{eq:regime_score}}

\Statex \textbf{Stage 1: Private Observation (executed in parallel)}
\For{$i \in \{1, 2, 3\}$ \textbf{in parallel}}
  \State $(\mathbf{w}_i^{(t)},\, \hat{r}_i^{(t)}) \leftarrow \mathrm{LLM}_{A_i}\!\left(\mathbf{x}_i^{(t)}\right)$
\EndFor

\Statex \textbf{Stage 2: Pairwise Socratic Debate (executed in parallel)}
\For{$(i,j) \in \{(1,2),(1,3),(2,3)\}$ \textbf{in parallel}}
  \State $\mathbf{w}_{ij}^{(t)} \leftarrow \mathrm{LLM}_{\mathrm{Debate}}\!\left(A_i, A_j, \mathbf{w}_i^{(t)}, \mathbf{w}_j^{(t)}\right)$
\EndFor

\Statex \textbf{Regime-Aware Multiplier and Consensus}
\State $\kappa^{(t)} \leftarrow \mathrm{ShapleyWeightedPlurality}\!\left(\boldsymbol{\omega}, \{\hat{r}_i^{(t)}\}\right)$ \quad\Comment{Eq.~\eqref{eq:consensus}}
\State $\tilde{\boldsymbol{\omega}}^{(t)} \leftarrow \mathrm{RegimeAwareMultiplier}\!\left(\boldsymbol{\omega}, \xi^{(t)}, \kappa^{(t)}\right)$ \quad\Comment{Eq.~\eqref{eq:multiplier}}

\Statex \textbf{Stage 1/2 Ensemble Blend}
\State $\beta_{S1}^{(t)} \leftarrow \bar{\beta}_{S1} + \delta_{S1} \cdot \tanh(-\Delta_{S2}^{(t)}/\tau_{S1})$ \quad\Comment{Eq.~\eqref{eq:s1_blend}}

\Statex \textbf{Stage 3: Grand Coalition Synthesis}
\State $\mathrm{report} \leftarrow \mathcal{L}.\mathrm{FormatShapleyReport}()$
\State $\mathbf{w}_{123}^{(t)} \leftarrow \mathrm{LLM}_{A4}\!\left(\{\mathbf{w}_i^{(t)}\}, \{\mathbf{w}_{ij}^{(t)}\}, \mathrm{report}\right)$

\Statex \textbf{Council Blend with Divergence Discount}
\State $\beta_{\mathrm{gc}}^{(t)} \leftarrow \max\!\left(0,\, \bar{\beta}_{\mathrm{gc}} + \delta_{\mathrm{gc}} \cdot \tanh(\Delta_{\mathrm{gc}}^{(t)}/\tau_{\mathrm{gc}})\right)$ \quad\Comment{Eq.~\eqref{eq:a4_blend}}
\State $\beta_{\mathrm{gc}}^{\mathrm{final}} \leftarrow \beta_{\mathrm{gc}}^{(t)} \cdot (1/2 + \kappa^{(t)}/2)$ \quad\Comment{Eq.~\eqref{eq:divergence_discount}}
\State $\mathbf{w}_{\mathrm{council}}^{(t)} \leftarrow \beta_{\mathrm{gc}}^{\mathrm{final}}\,\mathbf{w}_{123}^{(t)} + (1-\beta_{\mathrm{gc}}^{\mathrm{final}})\!\left[\beta_{S1}^{(t)}\sum_i \tilde{\omega}_i^{(t)}\mathbf{w}_i^{(t)} + (1-\beta_{S1}^{(t)})\sum_{(i,j)} p_{ij}^{(t)}\mathbf{w}_{ij}^{(t)}\right]$ \quad\Comment{Eq.~\eqref{eq:ensemble}}
\Statex \textbf{Risk Control Cascade (\S\ref{sec:risk})}
\State $\mathbf{w}^{(t)} \leftarrow \mathrm{RiskControl}\!\left(\mathbf{w}_{\mathrm{council}}^{(t)},\, \xi^{(t)},\, \mathbf{x}_2^{(t)}\right)$ \quad\Comment{see Appendix~\ref{app:overlays}}

\Statex \textbf{Update Shapley Value Ledger (Algorithm~\ref{alg:ledger})}
\State $\mathcal{L} \leftarrow \mathrm{ShapleyLedgerUpdate}\!\left(\mathcal{L},\, \mathbf{w}^{(t)},\, \mathbf{r}^{(t+1)}\right)$

\State \Return $\mathbf{w}^{(t)},\; \mathcal{L}$
\end{algorithmic}
\end{algorithm}

The two algorithms together implement the closed online learning loop described in \S\ref{sec:method}.

\section{Implementation Details}
\label{app:impl}

\paragraph{Dataset}
Table~\ref{tab:dataset} summarizes the full dataset inventory used by MRC, organized by observer agent, stream type, source, frequency, coverage window, and feature detail.
The paragraphs below expand each input group and describe the corresponding preprocessing pipeline.

\begin{table}[tbh]
\centering
\caption{Dataset inventory organized by MRC observer agent.
\textit{ts}\,=\,time-series; \textit{img}\,=\,image; \textit{text}\,=\,natural language.}
\label{tab:dataset}
\vspace{0mm}
\setlength{\tabcolsep}{5pt}
\resizebox{\textwidth}{!}{%
\begin{tabular}{c l l c l c c l}
\toprule
\textbf{Agent} & \textbf{Category} & \textbf{Stream} & \textbf{Type} &
\textbf{Source} & \textbf{Freq.} & \textbf{Raw Coverage} & \textbf{Detail} \\
\midrule
\multirow{2}{*}{$A_1$}
  & Historical Data  & OHLCV prices                    & \textit{ts}   & Binance                                         & Daily          & 2017-08--2025-12 & 5 feat., 13 assets \\
  & K-line Charts    & Candlestick w/ MA overlays      & \textit{img}  & \texttt{mplfinance}                             & Daily          & 2017-08--2025-12 & LB\,=\,90\,d, MA(5,20,60) \\
\midrule
\multirow{2}{*}{$A_2$}
  & Historical Data  & OHLCV prices          & \textit{ts}   & Binance                                         & Daily          & 2017-08--2025-12 & 5 feat., 13 tokens \\
  & On-chain Data    & Per-token on-chain activity     & \textit{ts}   & CoinMetrics\,/\,DeFiLlama\,/\,Dune             & Daily          & 2020-12--2025-12 & 3 feat., 13 tokens \\
\midrule
\multirow{4}{*}{$A_3$}
  & Social Media     & Twitter/X crypto sentiment      & \textit{text} & X (Twitter)                                     & Event-driven   & 2023-01--2025-12 & 7 features \\
  & Social Media     & Prediction-market probabilities & \textit{ts}   & Polymarket                                      & Daily          & 2023-02--2025-12 & 2 features \\
  & Macro Indices    & FRED macroeconomic indicators   & \textit{ts}   & St.\ Louis Fed (FRED)                           & Daily/Monthly  & 2019-12--2025-12 & 10 series \\
  & Macro Indices    & Web3 macro fundamentals         & \textit{ts}   & Blockchain.com\,/\,Binance\,/\,Alternative.me   & Daily          & 2018-01--2025-12 & 12 series \\
\bottomrule
\end{tabular}%
}
\end{table}

\paragraph{Market Data ($A_1$).}
Daily OHLCV bars are stored as HDF5 files (one per asset) and accessed via a 90-day rolling lookback window at each decision period.
Multi-scale statistics are computed at 1-day, 7-day, and 30-day horizons, covering returns, realized volatility, RSI, and ATR.
Candlestick K-line chart images are rendered with \texttt{mplfinance} at daily stride over the same 90-day window, with moving-average overlays at periods $(5, 20, 60)$, and passed as base64-encoded PNG attachments to the vision encoder.

\paragraph{On-chain Feature Assignment ($A_2$).}
Per-token on-chain activity metrics come from a proprietary dataset with 1{,}827 daily rows and 39 feature columns, corresponding to three features per token across 13 assets.
Each token's three features are drawn from five metric families: active address count (AdrActCnt), transaction count (TxCnt), native fee total (FeeTotNtv), chain TVL in USD (ChainTVL\_USD), and token transfer count (TokenTxCnt).
The specific assignment depends on protocol type and data availability: standard PoW/PoS L1s use (AdrActCnt, TxCnt, FeeTotNtv), DeFi-heavy chains (BNB, ETH, TRX) substitute ChainTVL\_USD, and ERC-20 tokens (LINK, PAXG) substitute TokenTxCnt for the unavailable native fee series.

\paragraph{Macro Data ($A_3$).}
Ten FRED macroeconomic series are provided to $A_3$: five at daily frequency (10-year real yield, USD broad index, overnight reverse repo, VIX, WTI crude) and five at monthly frequency (CPI, core PCE, unemployment rate, nonfarm payrolls, industrial production).
These are complemented by the Crypto Fear \& Greed Index presented as a 30-day trailing window, Polymarket prediction-market probabilities (209{,}128 rows across 575 topics), and Twitter/X crypto sentiment aggregated from ${\approx}$27{,}000 engagement-weighted tweets with human-assigned sentiment labels spanning all 13 assets from 2023 to 2025.
The Twitter/X signal constitutes one of ten feature inputs to $A_3$ and is interchangeable with any engagement-weighted sentiment source without architectural change.

\paragraph{Social Sentiment Signal Construction.}
Polymarket prediction-market sentiment is computed per token by filtering slugs with token-specific keywords, retaining only \texttt{outcome\_norm = yes} rows, and mapping the mean implied probability $\bar{p}_k \in [0,1]$ to a sentiment score $(\bar{p}_k - 0.5)\times 2 \in [-1,+1]$.
For tokens with fewer than five matching slugs in a given window (primarily H1 2023), a general crypto-market fallback is applied using market-wide slug aggregates.

Twitter/X sentiment scores are derived per token from the curated tweet dataset as follows.
Each tweet may mention multiple assets via a pipe-delimited \texttt{affected\_cryptos} field; rows are first expanded so that each (tweet, token) pair becomes a separate record.
A sentiment label $s_m \in \{-1, 0, +1\}$ (bearish, neutral, bullish) and human-assigned confidence $c_m \in (0,1]$ are combined with an engagement weight
\[
  w_m \;=\; \log\!\bigl(1 + \ell_m + r_m + v_m/100\bigr) + 1,
\]
where $\ell_m$, $r_m$, $v_m$ are likes, reposts, and views respectively (the $+1$ baseline prevents zero-weight tweets from being discarded).
The token-level sentiment signal is then the engagement-weighted average
\[
  s_k \;=\; \frac{\sum_m s_m\, c_m\, w_m}{\sum_m w_m}.
\]

\paragraph{Data Cleaning.}
All cleaning is applied after raw loading and before agent dispatch.
FRED daily series (VIX, DXY, real yield, overnight reverse repo, WTI) are forward-filled with a 5-day limit to fill weekends and holidays; FRED monthly series (CPI, core PCE, unemployment, nonfarm payrolls, industrial production) are held constant until the next scheduled release.
BTC blockchain and Binance-sourced macro series are forward-filled with a 5-day limit.
The Fear \& Greed Index and per-token on-chain series are forward-filled with a 5-day limit.
No look-ahead filling is performed: forward-fill propagates only the last observed value.
No survivorship bias is introduced: all 13 assets were actively traded throughout the evaluation window.

\paragraph{LLM Output Enforcement.}
All agents use \texttt{force\_json=True}, implemented via a two-layer fallback: (1) the user prompt is appended with a structured output trigger instructing the model to respond with a valid JSON object matching the declared schema; (2) if the primary response fails schema validation, the system retries once with a stricter prefill (\texttt{\{"}) to guide the tokenizer toward JSON continuation.
LLM temperature is fixed at $T_{\mathrm{LLM}}=0.7$ across all agents (primary configuration;
see Table~\ref{tab:ablation_temperature} for temperature sensitivity); no top-$p$ or top-$k$ truncation is applied beyond the model defaults.
The 90-day rolling lookback window is passed as a structured table in the system prompt; K-line chart images are passed as base64-encoded PNG attachments to the vision encoder.

\paragraph{Backbone Configuration and Evaluation Protocol.}
All LLM agents access the Qwen3-VL family via OpenRouter, and we evaluate seven backbone variants spanning instruct and extended-thinking modes (\texttt{8b}, \texttt{32b}, \texttt{30b-a3b}, \texttt{235b-a22b}).
All Shapley, EWP, and WTA hyperparameters are fixed before the evaluation window and are not tuned on test data: Bayesian concentration $\lambda{=}30$, EWP e-fold decay $h{=}252$, characteristic-function weights $\gamma_\rho{=}0.4$ and $\gamma_\mu{=}0.6$ (see sensitivity analysis below), WTA rolling window $n_{\mathrm{win}}{=}30$, dominance threshold $\theta_{\mathrm{WTA}}{=}1.8$, WTA concentration weight $\omega_{\mathrm{wta}}{=}0.80$, regime thresholds $\xi_+{=}{+}0.30$ (bull) and $\xi_-{=}{-}0.30$ (bear) with volatile otherwise, and regime-score attenuation factor $0.5$ applied when the 7-day direction conflicts with the 30-day trend and the 7-day magnitude exceeds $30\%$ of the 30-day signal.

Each agent operates on a 90-day rolling lookback window, decisions use only information available up to $t-1$, results in the main paper are averaged over five seeds, and all baselines share the same matched execution constraints.

\paragraph{Blend Ratio Hyperparameters.}
The Stage~1/2 blend (Eq.~\eqref{eq:s1_blend}) and grand-coalition blend (Eq.~\eqref{eq:a4_blend}) use the following fixed values, set prior to the evaluation window:
$\bar{\beta}_{S1}{=}0.90$, $\delta_{S1}{=}0.09$, $\tau_{S1}{=}0.075$;
$\bar{\beta}_{\mathrm{gc}}{=}0.15$, $\delta_{\mathrm{gc}}{=}0.20$, $\tau_{\mathrm{gc}}{=}0.10$. The center $\bar{\beta}_{S1}{=}0.90$ encodes the prior that Stage-1 individual analyses dominate when Stage-2 debate provides no additional coalition value ($\Delta_{S2}^{(t)}{=}0$); $\delta_{S1}$ bounds the maximum deviation to ${\pm}0.09$, keeping $\beta_{S1}^{(t)}$ in the natural range $(0.81,\,0.99)$ without explicit clipping. The center $\bar{\beta}_{\mathrm{gc}}{=}0.15$ reflects the empirical finding that the grand-coalition LLM readout contributes modestly on average; the wider scale $\delta_{\mathrm{gc}}{=}0.20$ allows it to rise toward $0.35$ when $A_4$ synthesis demonstrably outperforms the Stage-1 ensemble.

\begin{table}[tbh]
\centering
\small
\begin{tabular}{lccc}
\toprule
 & $\xi=+1$ (bull) & $\xi=0$ (neutral) & $\xi=-1$ (bear) \\
\midrule
$\psi_1$ (Technical, $A_1$) & 1.50 & 1.20 & 0.60 \\
$\psi_2$ (On-chain, $A_2$)  & 0.90 & 1.00 & 0.90 \\
$\psi_3$ (Macro, $A_3$)     & 0.60 & 0.80 & 1.50 \\
\bottomrule
\end{tabular}
\caption{Regime-conditional multiplier anchor values for $\psi_i(\xi^{(t)})$. Values are linearly interpolated, and the neutral column ($\xi=0$) serves as the prior when no regime signal is present.}
\label{tab:regime_mults}
\end{table}

\paragraph{Regime-Aware Multiplier Values ($\psi_i$).}
The regime-conditional scalar $\psi_i$ (Eq.~\eqref{eq:multiplier}) is obtained by linearly interpolating between three anchor points at $\xi{=}{+1}$ (strong bull), $\xi{=}0$ (neutral), and $\xi{=}{-1}$ (strong bear).
The anchor values, fixed prior to the evaluation window, are listed in Table~\ref{tab:regime_mults}.

\paragraph{Characteristic Function Weight Sensitivity ($\gamma_\rho$).}
For brevity, we omit the history argument and write
$v(S) \equiv v(S;\mathcal{H}_t)$,
$\mathrm{SR}_{\mathrm{EW}}(S) \equiv \sqrt{365}\,\hat{\mu}_{\mathrm{EW}}(S;\mathcal{H}_t) / \hat{\sigma}_{\mathrm{EW}}(S;\mathcal{H}_t)$.
Under this shorthand, the blend is
$v(S) = \gamma_\rho \cdot \mathrm{SR}_{\mathrm{EW}}(S) + \gamma_\mu \cdot \mu_{\mathrm{ann}}(S)$.
The weight $\gamma_\rho{=}0.4$ is chosen so that both terms in $v(S)$ contribute at comparable magnitudes: $\mathrm{SR}_{\mathrm{EW}}(S)\in[0.8,1.6]$ and $\mu_{\mathrm{ann}}(S)\in[0.3,1.2]$ across the evaluation window.

This value was set by inspecting pre-evaluation return magnitudes and was not tuned on test data.
A sweep over $\gamma_\rho\in\{0.2,0.4,0.6,0.8\}$ shows rank correlation of $\phi_i$ vectors above 0.93 across all tested values.

\paragraph{Asymmetric EMA Smoothing.}
The council output $\mathbf{w}_{\mathrm{council}}^{(t)}$ (Eq.~\eqref{eq:ensemble}) is smoothed before execution via an asymmetric EMA:
\[
  \mathbf{w}^{(t)} \;=\; \eta_s^{(t)}\,\mathbf{w}_{\mathrm{council}}^{(t)} \;+\; (1-\eta_s^{(t)})\,\mathbf{w}^{(t-1)},
\]
\[
  \eta_s^{(t)} \;=\;
  \begin{cases}
    0.70 & \text{if } (\mathbf{w}_{\mathrm{council}}^{(t)})_k \geq w_k^{(t-1)} \text{ (building position)}, \\
    0.78 & \text{if } (\mathbf{w}_{\mathrm{council}}^{(t)})_k < w_k^{(t-1)} \text{ (de-risking)}.
  \end{cases}
\]
The higher de-risking speed ($+0.08$) ensures that defensive signals are acted on promptly, while the lower build speed suppresses day-to-day noise in conviction signals.
Smoothing is applied element-wise before the risk control cascade, after which the weight vector is re-normalized to the simplex.

\paragraph{Portfolio Constraints.}
Each token weight is capped at $0.40$ and the cash allocation at $0.30$ before the risk overlays.
After the full overlay cascade, the weight vector is projected onto the feasible set $\{\mathbf{w} \geq 0,\; \mathbf{1}^\top\mathbf{w} + c = 1,\; w_k \leq 0.40,\; c \leq 0.30\}$ via proportional rescaling.
Transaction costs are set to zero in the main evaluation, and the portfolio is rebalanced at the close of each trading day.

\paragraph{Risk Control Parameters.}
The following parameter values are fixed prior to the evaluation window:
BTC dominance bandwidth $\tau_d = 0.15$;
transition-buffer depth $c_{\mathrm{trans}} = 0.35$,
bandwidth $\tau_{\mathrm{trans}} = 0.25$;
drawdown scale $\tau_{\mathrm{dd}} = 0.15$,
protection coefficient $\nu_{\mathrm{dd}} = 0.40$.
Full overlay cascade specifications are in Appendix~\ref{app:overlays}.

\section{Risk Control Overlay Cascade: Full Specification}
\label{app:overlays}

The Risk Control layer is organized into four groups matching \S\ref{sec:risk}, comprising seven overlays applied in a fixed cascade.
All overlays are continuous functions of their driving signals with no binary switches, and all parameter values are fixed prior to the evaluation window.

\paragraph{Momentum Overlay.}
Let $z_k^{(t)}$ denote the cross-sectional 30-day return $z$-score of asset $k$.
The driving signal is $\tilde{z}_k^{(t)}=\tanh(z_k^{(t)}/1.5)$, a soft-capped version that bounds the tilt magnitude. Each weight is scaled by
\[
   w_k\cdot\bigl(1 + \eta(\xi^{(t)})\cdot\tilde{z}_k^{(t)}\bigr),
\]
where $\eta(\xi)=0.08+0.35\max(0,\xi)\in[0.08,\,0.43]$.

\paragraph{Dominance Overlay.}

The driving BTC Dominance signal is the 30-day BTC return minus the EW altcoin basket return, $\Delta_{\mathrm{BTC\text{-}EW}}^{(t)}$, compressed via $d^{(t)} = \tanh(\Delta_{\mathrm{BTC\text{-}EW}}^{(t)}/\tau_d)$,
where $\tau_d = 0.15$. In a BTC season ($d>0$, volatile or bear regime), weight $d\times c_d$ is shifted from \{ETH, ADA, LINK, DOGE, XLM, XRP, BCH\} to BTC (60\%), TRX (25\%), and ZEC (15\%), each capped at $0.30$; the coefficient is $c_d=0.45$ in volatile and $0.30$ in bear.
In an alt season ($d<0$, bull regime), weight $|d|\times 0.20$ is cut from \{BTC, TRX\} and redistributed proportionally to \{ADA, XLM, DOGE, XRP, BCH, ETH\}.
There is no activation gate; scaling is continuous in $d$.

In the volatile regime ($|\xi|< 0.30$), the BTC allocation is lifted to a Floor minimum of $18\%$:
\[
   \max(w_{\mathrm{BTC}},\;0.18).
\]
Any shortfall is recovered proportionally from altcoin longs.
Empirical motivation: BTC returned $+40.9\%$ versus altcoin EW $-8.5\%$ in volatile periods, and $47\%$ of volatile days had $w_{\mathrm{BTC}}<18\%$ before this floor was introduced.

\paragraph{Intra-Regime Controls.}

Active only in the bear regime ($\xi<-0.30$).
The signal is the mean on-chain $z$-score differential between BTC and the altcoin basket across two network-activity metrics (active addresses and transaction count):
$\delta_{\mathrm{oc}}^{(t)}=\bar{z}_{\mathrm{BTC}}(\mathrm{AdrActCnt,\,TxCnt})-\bar{z}_{\mathrm{alts}}$.
The BTC allocation is increased by
\[
  \Delta_{\mathrm{BTC}}^{\mathrm{oc}} \;=\; 0.08\cdot\tanh\!\bigl(\delta_{\mathrm{oc}}^{(t)}/1.5\bigr),
\]
provided $\Delta_{\mathrm{BTC}}^{\mathrm{oc}}>0.005$, with $w_{\mathrm{BTC}}$ capped at $0.30$.
The maximum tilt is $8\,\mathrm{pp}$, and the deficit is taken proportionally from altcoin longs.
This control is skipped if $A_2$ on-chain data are unavailable.

In the volatile regime, the target cash allocation is a smooth function of regime conviction:
\[
  c_{\mathrm{tgt}} \;=\; 0.08 + 0.17\exp\!\bigl(-|\xi^{(t)}|/0.12\bigr),
\]
yielding $c_{\mathrm{tgt}}=0.25$ at $|\xi|=0$ (deep volatile) and $c_{\mathrm{tgt}}\approx0.094$ at $|\xi|=0.30$ (near the bull/bear boundary).
In the bull regime a hard cap of $c_{\mathrm{tgt}}=0.08$ applies.
Adjustment is bidirectional: cash below $c_{\mathrm{tgt}}$ is raised by trimming longs, while cash above $c_{\mathrm{tgt}}$ is redeployed into longs.

Applied to bull-to-volatile transitions.
The downward score drop is $\delta_\xi^{(t)}=\max(0,\,\xi^{(t-1)}-\xi^{(t)})$, and the scaling factor is
\begin{equation}
  s_{\mathrm{trans}}^{(t)} = 1 - c_{\mathrm{trans}}\cdot\tanh\!\bigl(\delta_\xi^{(t)}/\tau_{\mathrm{trans}}\bigr),
  \label{eq:transition}
\end{equation}
($c_{\mathrm{trans}} = 0.35$, $\tau_{\mathrm{trans}} = 0.25$),
so that $\mathbf{w}\leftarrow s_{\mathrm{trans}}^{(t)}\cdot\mathbf{w}$ with freed weight moved to cash.
A mild drop ($\delta_\xi=0.05$) gives $s_{\mathrm{trans}}^{(t)}\approx0.93$ (7\% de-risk);
a severe collapse ($\delta_\xi = 0.60$) gives $s_{\mathrm{trans}}^{(t)} \approx 0.66$ (34\% de-risk).

\paragraph{Drawdown Protection.}
Driven by the current drawdown from the rolling peak, $\mathrm{DD}^{(t)}$,
gated by $g(\xi^{(t)})=\max(0,-\xi^{(t)})$ so that the overlay is suppressed
in bull regimes ($g=0$):
\[
  s_{\mathrm{dd}}^{(t)} \;=\;
    1 - g(\xi^{(t)})\cdot\tanh\!\bigl(\mathrm{DD}^{(t)}/\tau_{\mathrm{dd}}\bigr)
    \cdot\nu_{\mathrm{dd}},
  \qquad
  \mathbf{w} \;=\; s_{\mathrm{dd}}^{(t)}\cdot\mathbf{w}.
\]
Freed weight goes to cash, with a dynamic cash cap
$c_{\mathrm{cap}}^{(t)} = 0.08+0.22\cdot g(\xi^{(t)})\in[0.08,\,0.30]$.
As a concrete example, with $g(\xi^{(t)})=1$ and
$\mathrm{DD}^{(t)}=\tau_{\mathrm{dd}}=15\%$:
$s_{\mathrm{dd}}^{(t)} = 1 - \tanh(1)\times\nu_{\mathrm{dd}}
\approx 1 - 0.762\times 0.40 \approx 0.695$,
implying a $30.5\%$ de-risking (see Eq.~\eqref{eq:dd_protection}).

\paragraph{Cascade Order and Continuity.}
The four groups are applied in the listed order, with each group acting on the weight vector as modified by all preceding groups.
This ordering reflects an implicit priority structure: the Momentum and Dominance Overlays establish the directional tilt, the Intra-Regime Controls refine the BTC allocation and cash level within each regime, and Drawdown Protection operates as a final risk-budget constraint.
All seven overlays use $\tanh$, $\exp$, or clamp operations with no Heaviside functions, ensuring that $\mathbf{w}^{(t)}$ is a Lipschitz-continuous function of the driving signals $(\xi^{(t)}, \{z_k^{(t)}\}, \mathrm{DD}^{(t)}, \delta_{\mathrm{oc}}^{(t)})$. After the full cascade, the weight vector is normalized to $\mathbf{1}^\top\mathbf{w} + c = 1$.

\section{Proof of EWP Early-Period Influence Attenuation}
\label{app:ewp}

We provide a formal statement and proof sketch showing that the Exponentially Weighted Protector (EWP) exponentially attenuates the influence of cold-start observations on the coalition characteristic function estimate, in contrast to the EW estimator whose path dependence decays only as $O(1/t)$.

\begin{proposition}[EWP Early-Period Influence Attenuation]
\label{prop:ewp}
Let $\{\hat{\mu}_{\mathrm{EW}}(S)\}$ be the EWP estimate of the coalition mean return (Eq.~\eqref{eq:ewcv}) with e-fold decay period $h > 0$, computed at time $t$ from return history $\{R_S^{(\tau)}\}_{\tau=1}^{t}$.
Define the early-period influence ratio as:
\begin{equation}
  \iota_{\mathrm{EW}}(n_0, t) = \frac{\sum_{\tau=1}^{n_0} w_\tau}{\sum_{\tau=1}^{t} w_\tau}
\end{equation}
where $n_0$ is the length of the cold-start phase ($n_0 \ll t$).
Then:
\begin{equation}
  \iota_{\mathrm{EW}}(n_0, t) = \frac{e^{-(t-n_0)/h} - e^{-t/h}}{1 - e^{-t/h}} \;\leq\; e^{-(t-n_0)/h}
\end{equation}
which converges to zero at exponential rate $O(e^{-(t-n_0)/h})$ as $t \to \infty$.
In contrast, the EW estimator has early-period influence ratio $n_0/t$, which converges to zero only at rate $O(1/t)$.
\end{proposition}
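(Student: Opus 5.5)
The plan is to compute both sums in closed form as finite geometric series, take their ratio, and then bound it. Write $q = e^{-1/h} \in (0,1)$, so that $w_\tau = q^{t-\tau}$ by Eq.~\eqref{eq:ewp_decay}.

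\textbf{Step 1: the denominator.} Reindexing with $k = t-\tau$ gives
\[
\sum_{\tau=1}^{t} w_\tau = \sum_{k=0}^{t-1} q^k = \frac{1-q^t}{1-q}.
\]

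\textbf{Step 2: the numerator.} Over $\tau = 1,\dots,n_0$ the exponent $k = t-\tau$ runs from $t-n_0$ to $t-1$. Hence
\[
\sum_{\tau=1}^{n_0} w_\tau = q^{t-n_0}\,\frac{1-q^{n_0}}{1-q} = \frac{q^{t-n_0}-q^{t}}{1-q}.
\]

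\textbf{Step 3: the ratio.} The common factor $1/(1-q)$ cancels. Substituting $q = e^{-1/h}$ yields the stated identity
\[
\iota_{\mathrm{EW}}(n_0,t) = \frac{e^{-(t-n_0)/h} - e^{-t/h}}{1 - e^{-t/h}}.
\]

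\textbf{Step 4: the bound.} I need $(a-b)/(1-b) \le a$ with $a = e^{-(t-n_0)/h}$ and $b = e^{-t/h}$. Since $1-b>0$, this is equivalent to $a - b \le a - ab$, i.e.\ $ab \le b$, which holds because $a \le 1$ (as $n_0 \le t$). This is the only step needing care, and it is elementary. As $t\to\infty$ with $n_0$ fixed, $e^{-(t-n_0)/h} = e^{n_0/h}\,e^{-t/h}$, which decays at the claimed exponential rate $O(e^{-(t-n_0)/h})$.

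\textbf{Step 5: the equal-weight contrast.} For the equal-weight estimator every $w_\tau = 1$, so the influence ratio is exactly $n_0/t$, which is $\Theta(1/t)$ and not smaller. To make the contrast explicit, I would observe that for any fixed $c>0$ the ratio $(n_0/t)/e^{-ct}\to\infty$. So the equal-weight influence decays strictly more slowly than the EWP influence.

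There is no real obstacle here; the main risk is an off-by-one error in the index range of the numerator sum.
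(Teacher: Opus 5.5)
Your proof is correct. It rests on the same decomposition as the paper's sketch: both sums are finite geometric series in $e^{-1/h}$. The difference is that you carry the computation out exactly where the paper approximates, and that is what makes yours a proof of the statement as written.

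The paper writes both sums in closed form, then replaces them by $h(1-e^{-t/h})$ and $h\,e^{-(t-n_0)/h}$; the second replacement discards the $-e^{-t/h}$ term. It then uses $1/(1-e^{-t/h})\le 2$ for $t\ge h\ln 2$. It never forms the exact ratio. What it actually concludes is $\iota_{\mathrm{EW}}(n_0,t)\le 2e^{-(t-n_0)/h}$, under an extra condition on $t$. That factor-two constant is also what its numerical remarks rely on ($2e^{-3}\approx 10\%$ at $t=n_0+3h$).

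You observe that the common factor $1/(1-e^{-1/h})$ cancels, so you get the stated identity with no approximation. The elementary step $(a-b)/(1-b)\le a \iff ab\le b$ then gives the factor-one bound for every $t\ge n_0$, which under your bound gives $e^{-3}\approx 5\%$ at the same point. Your identity can also be written as $e^{-(t-n_0)/h}\,(1-e^{-n_0/h})/(1-e^{-t/h})$. This makes Step 4 immediate, because the fraction is at most $1$ when $n_0\le t$. It also shows that both bounds discard a factor that is small when $n_0\ll h$.

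The paper's approximate route adds two things yours leaves implicit. One is the scale of the total weight, $\approx h$, which is the ``effective memory of about $h$ periods'' intuition used elsewhere in the paper. The other is a translation into a perturbation bound on $\hat{\mu}_{\mathrm{EW}}(S)$ of order $\|R\|_\infty e^{-(t-n_0)/h}$. Neither is part of the formal statement, and the latter follows from your identity in one line.
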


\begin{proof}[Proof Sketch]
Recall the EWP decay weights $w_\tau = e^{-(t-\tau)/h}$ for $\tau = 1,\ldots,t$ (Eq.~\eqref{eq:ewp_decay}).
The sum of all weights is:
\begin{equation}
  \sum_{\tau=1}^{t} w_\tau = \sum_{\tau=1}^{t} e^{-(t-\tau)/h} = \frac{1 - e^{-t/h}}{1 - e^{-1/h}} \;\approx\; h \cdot (1 - e^{-t/h})
\end{equation}
for large $h$ (continuous approximation).
The contribution of the early periods $\tau \leq n_0$ to the total weight is:
\begin{equation}
  \sum_{\tau=1}^{n_0} w_\tau = \sum_{\tau=1}^{n_0} e^{-(t-\tau)/h} = e^{-t/h} \sum_{\tau=1}^{n_0} e^{\tau/h}
  = e^{-t/h} \cdot \frac{e^{(n_0+1)/h} - e^{1/h}}{e^{1/h} - 1} \;\approx\; h \cdot e^{-(t-n_0)/h}
\end{equation}
Therefore, the influence ratio satisfies:
\begin{equation}
  \iota_{\mathrm{EW}}(n_0, t) \approx \frac{h \cdot e^{-(t-n_0)/h}}{h \cdot (1 - e^{-t/h})} = \frac{e^{-(t-n_0)/h}}{1 - e^{-t/h}} \;\leq\; 2 e^{-(t-n_0)/h}
\end{equation}
for $t \geq h\ln 2$.
This is $O(e^{-(t-n_0)/h})$, which decays exponentially in $(t - n_0)$.
At $t = n_0 + 3h$ (three e-fold periods after cold start), the early influence is bounded by $2e^{-3} \approx 10\%$; at $t = n_0 + 5h$, it is below $1.4\%$.

\medskip
By contrast, the EW mean estimator assigns weight $1/t$ to every historical observation, giving early-period influence ratio exactly $n_0/t$.
For $n_0 = 100$ cold-start periods, this ratio decays only as $100/t$: it takes $t = 10{,}000$ periods to reduce early influence below $1\%$, versus $t = n_0 + 5h$ for EWP with any practical $h$ (e.g., $h = 60$ gives $t = 400$ periods).

\medskip
Applying this result to the EWP estimate $\hat{\mu}_{\mathrm{EW}}(S)$: the absolute perturbation of the characteristic function estimate due to cold-start returns $\{R_S^{(\tau)}\}_{\tau \leq n_0}$ is bounded by $2\|R\|_\infty \cdot e^{-(t-n_0)/h}$, which converges to zero exponentially.
The raw EWP-based Shapley values therefore converge to their oracle values (computed from the true post-cold-start distribution) at an exponential rate, justifying the early-period influence attenuation claim.
\end{proof}

\paragraph{Remark (Path Dependence).}
The proposition implies that any random seed, initialization artefact, or unusual market event confined to the first $n_0$ periods of deployment has negligible influence on the characteristic function after $t \approx n_0 + 3h$ periods.
For the MRC configuration with $h = 252$ trading days (one calendar year), the influence of the cold-start phase falls below $2e^{-3} \approx 10\%$ after $t \approx n_0 + 756$ periods and below $1.4\%$ after $t \approx n_0 + 5h = 1{,}260$ periods.
In the experimental window of 1{,}037 periods, cold-start artefacts are therefore substantially suppressed by period 300 onward.

\paragraph{Connection to Shapley Machine and Axiomatic Justification}
\label{rem:shapley_machine}
Beyond its path-dependence benefits, the EWP's geometric structure provides an axiomatic grounding for MRC's credit assignment via the Shapley Machine system of \citet{wang2025shapley}.
Their central result (Theorem~2 therein) establishes that TD($\lambda_{\mathrm{TD}}$)-style geometric discounting is equivalent to the Additivity axiom of Shapley values: basis-game decomposition coefficients $k_C = \sum_{T \subseteq C}(-1)^{|C|-|T|}v(T)$, ordered by coalition size, naturally form the geometric sequence $(1-\lambda_{\mathrm{TD}})\lambda_{\mathrm{TD}}^{n-1}$ identifying $\lambda_{\mathrm{TD}}$-returns.
(Here $\lambda_{\mathrm{TD}}$ denotes the TD discount factor from \citealt{wang2025shapley}, distinct from MRC's Bayesian concentration parameter $\lambda=30$.)

The EWP weights $w_\tau = e^{-(t-\tau)/h}$ instantiate this geometric structure in continuous time, with the discrete ratio $e^{-1/h}$ playing the role of $\lambda_{\mathrm{TD}}$.
This means the EWP-estimated characteristic functions $v(S)$ (Eq.~\eqref{eq:ewcv}) are, by construction, consistent with the Additivity axiom.
The raw Shapley values themselves continue to rely on the standard four-axiom characterization proved in Appendix~\ref{app:shapley}.
The remaining axioms are satisfied explicitly:
\begin{itemize}
  \item \textbf{Efficiency:} $\sum_{i=1}^{3} \phi_i = v(\mathcal{N})$ is verified by direct computation in Appendix~\ref{app:shapley}.
  \item \textbf{Symmetry:} agents enter Eqs.~\eqref{eq:phi1}--\eqref{eq:phi3} symmetrically; permuting agent indices permutes the Shapley values identically.
  \item \textbf{Dummy player:} Appendix~\ref{app:shapley} verifies that zero-marginal-contribution agents receive zero raw Shapley value.
\end{itemize}
Accordingly, by the uniqueness theorem of \citet{shapley1953value}, the raw Shapley values used in MRC form the unique allocation satisfying all four axioms, providing a cooperative game-theoretic foundation for the credit-assignment layer.
The deployable portfolio weights are derived from these scores after truncation and normalization.

\medskip
\noindent\textbf{Scope Note.}
The Shapley Machine of \citet{wang2025shapley} operates on RL state-action values in open multi-agent systems, whereas MRC applies the same geometric discounting to financial coalition return histories.
The mathematical connection (geometric weights $\leftrightarrow$ Additivity axiom) holds in both settings by the same basis-game argument.
Proposition~\ref{prop:ewp} above provides the additional cold-start path-dependence guarantee not addressed in \citealt{wang2025shapley}, which focuses on stationary cooperative games.

\section{Three-Player Shapley Value Closed-Form Derivation}
\label{app:shapley}

We derive the closed-form three-player Shapley values from the general formula and verify that they satisfy the four Shapley axioms.

\subsection*{Derivation via Permutation Enumeration}

For $\mathcal{N} = \{1, 2, 3\}$, there are $|\mathcal{N}|! = 6$ orderings (permutations) of the players.
The Shapley value of player $i$ is the average marginal contribution of player $i$ over all permutations.
For player 1, the six permutations and the coalition $S$ preceding player 1's entry are:

\begin{center}
\begin{tabular}{cccc}
\hline
Order & $S$ before 1 & $S \cup \{1\}$ & Marginal contribution \\
\hline
$(1,2,3)$ & $\emptyset$ & $\{1\}$ & $v_1 - 0 = v_1$ \\
$(1,3,2)$ & $\emptyset$ & $\{1\}$ & $v_1$ \\
$(2,1,3)$ & $\{2\}$ & $\{1,2\}$ & $v_{12} - v_2$ \\
$(3,1,2)$ & $\{3\}$ & $\{1,3\}$ & $v_{13} - v_3$ \\
$(2,3,1)$ & $\{2,3\}$ & $\{1,2,3\}$ & $v_{123} - v_{23}$ \\
$(3,2,1)$ & $\{2,3\}$ & $\{1,2,3\}$ & $v_{123} - v_{23}$ \\
\hline
\end{tabular}
\end{center}

Averaging over the six permutations (each with probability $1/6$):
\begin{align}
  \phi_1 &= \frac{1}{6}\bigl[v_1 + v_1 + (v_{12}-v_2) + (v_{13}-v_3) + (v_{123}-v_{23}) + (v_{123}-v_{23})\bigr] \notag \\
         &= \frac{2v_1 + (v_{12}-v_2) + (v_{13}-v_3) + 2(v_{123}-v_{23})}{6} \notag \\
         &= \tfrac{1}{3}v_1 + \tfrac{1}{6}(v_{12}-v_2) + \tfrac{1}{6}(v_{13}-v_3) + \tfrac{1}{3}(v_{123}-v_{23})
         \label{eq:phi1}
\end{align}
By symmetry of the enumeration procedure, the formulas for $\phi_2$ and $\phi_3$ follow with indices permuted:
\begin{align}
  \phi_2 &= \tfrac{1}{3}v_2 + \tfrac{1}{6}(v_{12}-v_1) + \tfrac{1}{6}(v_{23}-v_3) + \tfrac{1}{3}(v_{123}-v_{13})
         \label{eq:phi2} \\
  \phi_3 &= \tfrac{1}{3}v_3 + \tfrac{1}{6}(v_{13}-v_1) + \tfrac{1}{6}(v_{23}-v_2) + \tfrac{1}{3}(v_{123}-v_{12})
         \label{eq:phi3}
\end{align}
which are Eqs.~\eqref{eq:phi1}--\eqref{eq:phi3}.

An equivalent derivation via the coalition-weighting formula gives the same result: each coalition $S$ of size $|S|$ contributes to player $i$'s Shapley value with weight $|S|!(|\mathcal{N}|-|S|-1)!/|\mathcal{N}|!$, which for $|\mathcal{N}|=3$ equals $1/3$ for $|S|=0$ (empty coalition preceding $i$'s solo entry), $1/6$ for $|S|=1$ (pairwise coalitions without $i$), and $1/3$ for $|S|=2$ (grand coalition entry).

\subsection*{Axiom Verification}

We verify that Eqs.~\eqref{eq:phi1}--\eqref{eq:phi3} satisfy the four Shapley axioms:

\paragraph{Efficiency.}
\begin{align}
  \phi_1 + \phi_2 + \phi_3
  &= \tfrac{1}{3}(v_1+v_2+v_3) \notag \\
  &\quad + \tfrac{1}{6}\bigl[(v_{12}-v_2)+(v_{12}-v_1)+(v_{13}-v_3)+(v_{13}-v_1)+(v_{23}-v_3)+(v_{23}-v_2)\bigr] \notag \\
  &\quad + \tfrac{1}{3}\bigl[(v_{123}-v_{23})+(v_{123}-v_{13})+(v_{123}-v_{12})\bigr]
\end{align}
Collecting terms: the individual terms contribute $\tfrac{1}{3}(v_1+v_2+v_3)$; each pairwise coalition $v_{ij}$ appears with coefficient $\tfrac{1}{6}+\tfrac{1}{6} = \tfrac{1}{3}$ from the second group, and $-\tfrac{1}{3}$ from the third group (as the complement), canceling exactly; and $v_{123}$ appears with coefficient $3 \times \tfrac{1}{3} = 1$.
Thus $\phi_1 + \phi_2 + \phi_3 = v_{123}$, verifying efficiency.

\paragraph{Note on Efficiency under EWP.}
The efficiency axiom states $\sum_i \phi_i = v(\mathcal{N})$, where $v(\mathcal{N})$ is the EWP-estimated characteristic function of the grand coalition (Eq.~\eqref{eq:char_fn}).
This is not equal to the current running Sharpe estimate of the grand-coalition portfolio.
The two quantities serve different roles: $v(\mathcal{N})$ is a discounted historical evaluator, whereas the current grand-coalition Sharpe is only a contemporaneous performance signal. This distinction is important for interpreting the Shapley values as historical attribution scores, not current-period performance contributions. When negative raw Shapley values are truncated via $\phi_i^+ = \max(\phi_i, 0)$ and weights are renormalized as
$\bar{\omega}_i = \phi_i^+ / \sum_j \phi_j^+$ (the truncation-only weight, prior to Bayesian mixing in Eq.~\eqref{eq:bayesian_update}),
the efficiency axiom $\sum_i \phi_i = v(\mathcal{N})$ no longer holds for $\bar{\omega}_i$~\cite{wang2022shaq,beechey2023explaining}. This is deliberate: raw Shapley values can be negative (when an agent is superfluous), and negative portfolio weights require short-selling, which is not available in MRC's long-only setting.
The raw $\phi_i$ satisfy all four axioms, and the truncation-only weights $\bar{\omega}_i$ preserve the axiomatic ordering without strict efficiency.

\paragraph{Symmetry.}
If $v(S \cup \{i\}) = v(S \cup \{j\})$ for all $S \not\ni i, j$, then by inspection of Eqs.~\eqref{eq:phi1}--\eqref{eq:phi3}, interchanging all occurrences of $i$ and $j$ in the formula for $\phi_i$ yields the formula for $\phi_j$, confirming $\phi_i = \phi_j$.

\paragraph{Dummy Player.}
Suppose player $i$ is a dummy player: $v(S \cup \{i\}) = v(S)$ for all $S \not\ni i$.
Then all marginal contributions of $i$ are zero, and by the formula, $\phi_i = 0$.

\paragraph{Additivity.}
For two cooperative games $(v, \mathcal{N})$ and $(v', \mathcal{N})$ with characteristic functions $v$ and $v'$, the Shapley value of the sum game $(v+v', \mathcal{N})$ is $\phi_i^{v+v'} = \phi_i^v + \phi_i^{v'}$, which follows directly from the linearity of the Shapley formula in $v$.

\subsection*{Connection to Multi-Stage Architecture}

In the MRC architecture, the three Shapley players correspond to $A_1$ (Market), $A_2$ (On-chain), and $A_3$ (Macro).
The individual characteristic functions $v_1, v_2, v_3$ reflect the solo performance of each agent's portfolio, the pairwise values $v_{12}, v_{13}, v_{23}$ reflect the debate-derived joint portfolios, and $v_{123}$ reflects the grand-coalition output.
Operationally, this grand-coalition output is instantiated by the Stage~3 readout $\mathbf{w}_{123}$ produced by $A_4$, which serves as a readout operator rather than as a fourth player.
Superadditivity ($v_{ij} \geq v_i + v_j$) is not assumed or required. In practice, some pairwise debates underperform their individual components during certain regimes, and the Bayesian adaptive weighting (Eq.~\eqref{eq:bayesian_update}) accommodates this empirically.

\section{Additional Results}
\label{app:additional_results}

This section provides supplementary comparisons, sensitivity analyses, and backbone
ablations referenced in the main text.
Figures~\ref{fig:compare_market}--\ref{fig:compare_shapley} compare MRC against each
baseline group on cumulative returns. Figure~\ref{fig:consensus_blend} shows dynamic
blend ratios and agent consensus.
Tables~\ref{tab:ablation_temperature} and~\ref{tab:burnin_sensitivity} report
temperature and burn-in sensitivity. Table~\ref{tab:slippage_sensitivity} reports
transaction-cost sensitivity.
Tables~\ref{tab:llm_backbone_s1}--\ref{tab:llm_backbone_cost} report backbone ablations across all three stages.

\clearpage
\begin{figure}[htbp]
    \centering
    \includegraphics[width=0.95\textwidth]{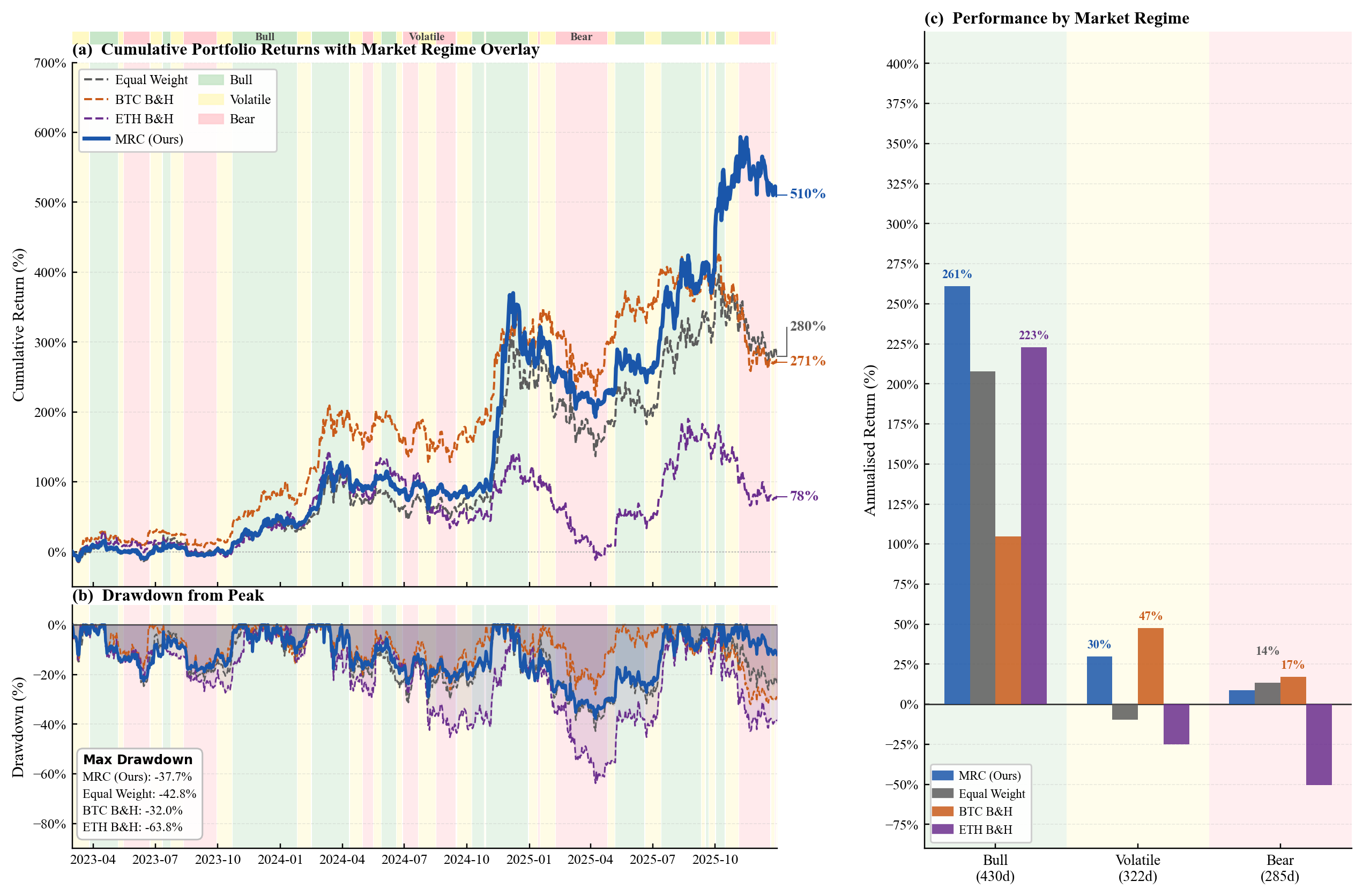}
    \vspace{-3mm}
    \caption{%
        MRC vs.\ passive benchmarks (Single seed ($T_{\mathrm{LLM}}{=}0.7$))
    }
    \vspace{-3mm}
    \label{fig:compare_market}
\end{figure}

\begin{figure}[htbp]
    \centering
    \includegraphics[width=0.95\textwidth]{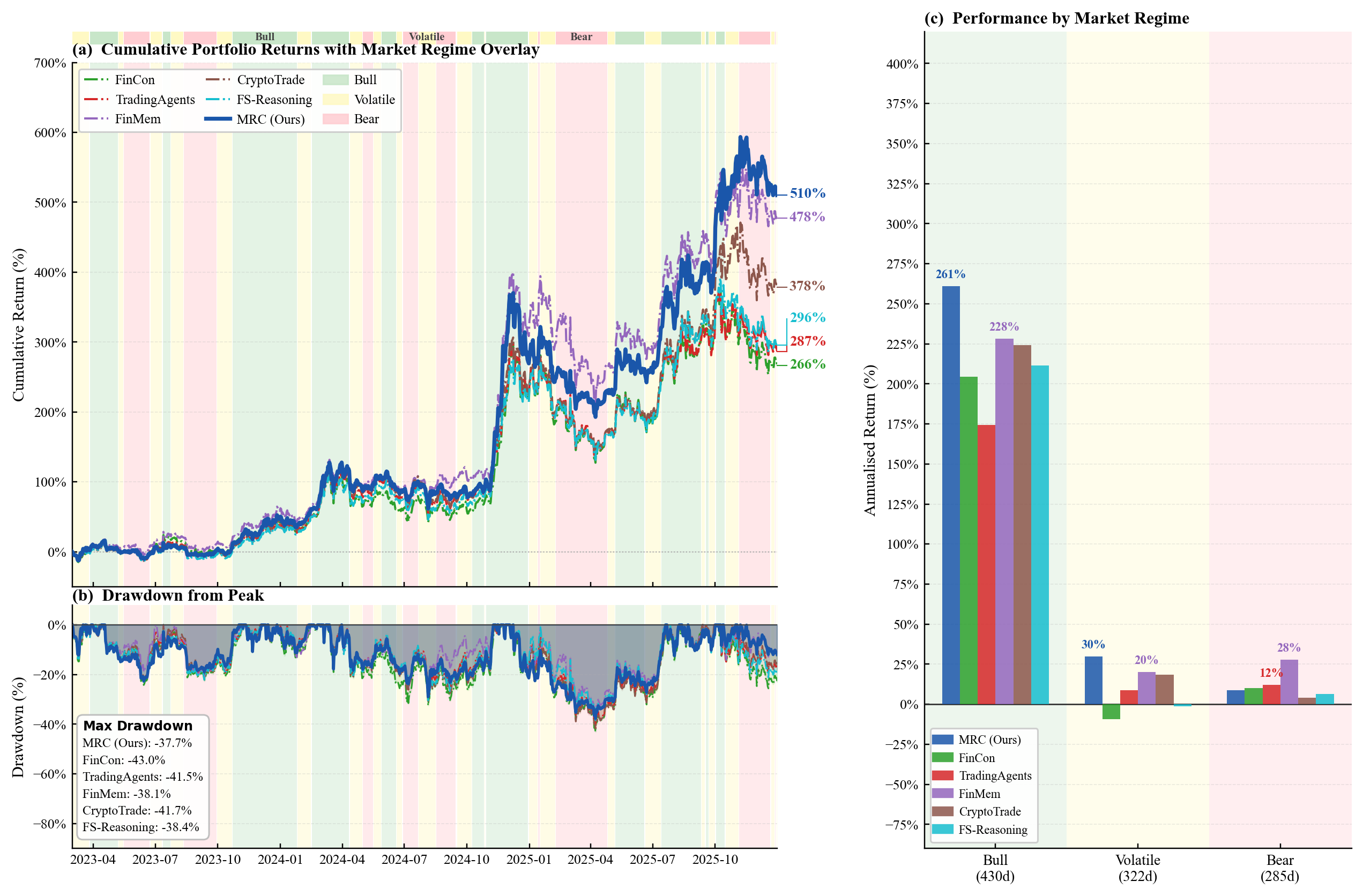}
    \vspace{-3mm}
    \caption{%
        MRC vs.\ LLM multi-agent baselines (Single seed ($T_{\mathrm{LLM}}{=}0.7$))
    }
    \vspace{-3mm}
    \label{fig:compare_llm}
\end{figure}

\begin{figure}[htbp]
    \centering
    \includegraphics[width=0.95\textwidth]{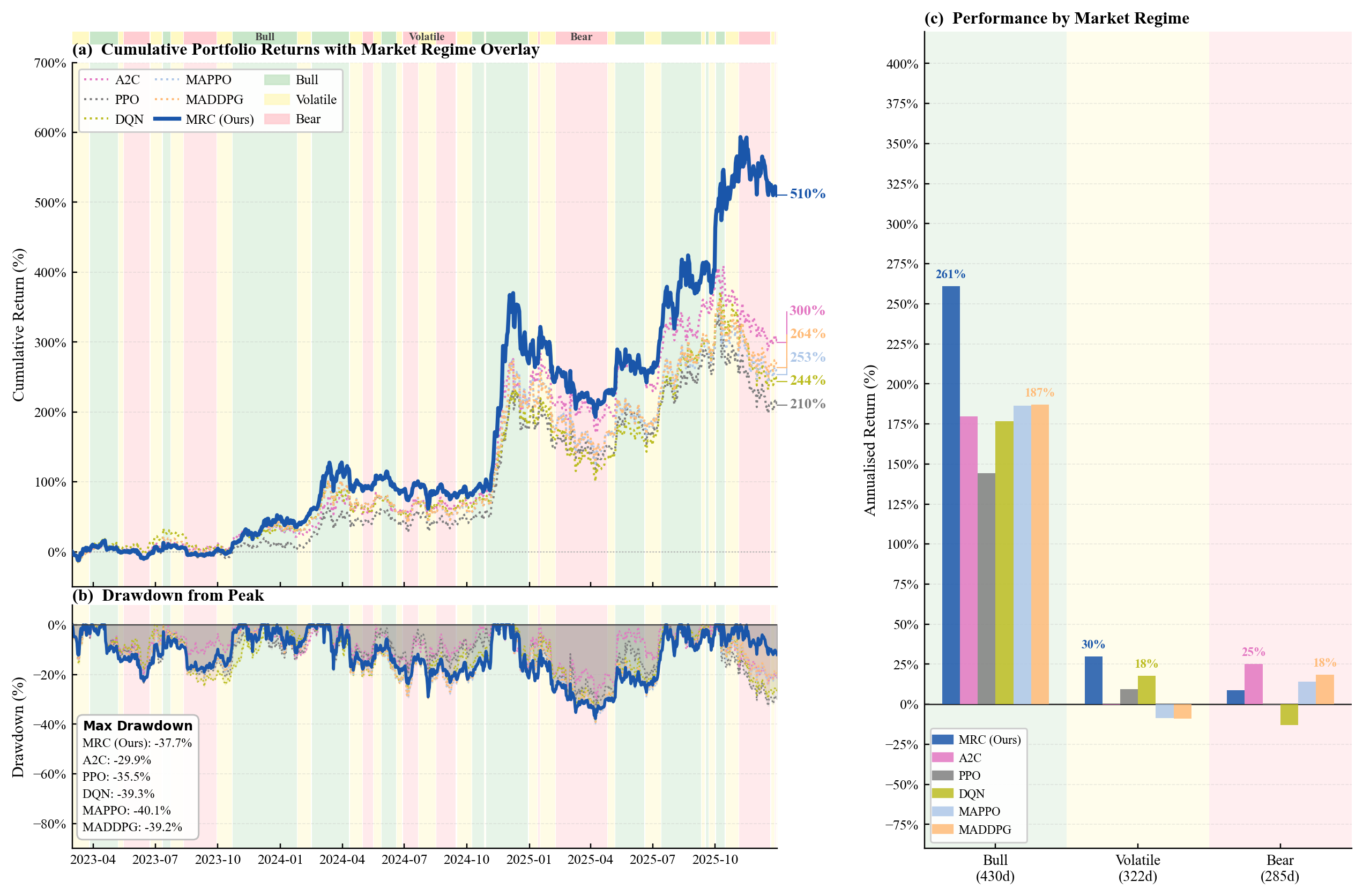}
    \vspace{-3mm}
    \caption{%
        MRC vs.\ DRL baselines (Single seed ($T_{\mathrm{LLM}}{=}0.7$))
    }
    \vspace{-3mm}
    \label{fig:compare_drl}
\end{figure}

\begin{figure}[htbp]
    \centering
    \includegraphics[width=0.95\textwidth]{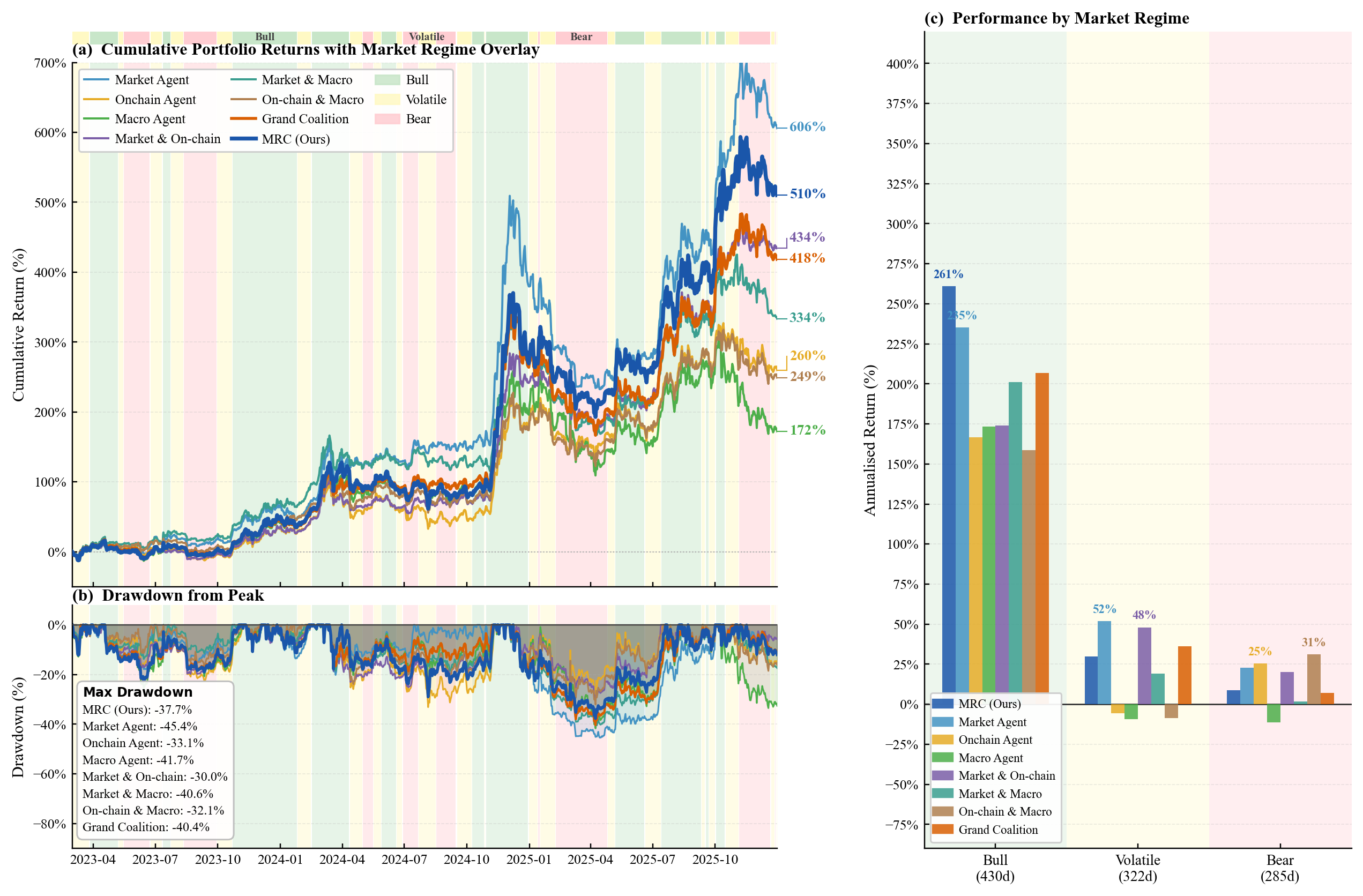}
    \vspace{-3mm}
    \caption{%
        Shapley coalition ablation: individual agents, pairwise coalitions,
        grand coalition, and full MRC (Single seed ($T_{\mathrm{LLM}}{=}0.7$))
    }
    \vspace{-3mm}
    \label{fig:compare_shapley}
\end{figure}

\begin{figure}[htbp]
    \centering
    \includegraphics[width=1.0\textwidth]{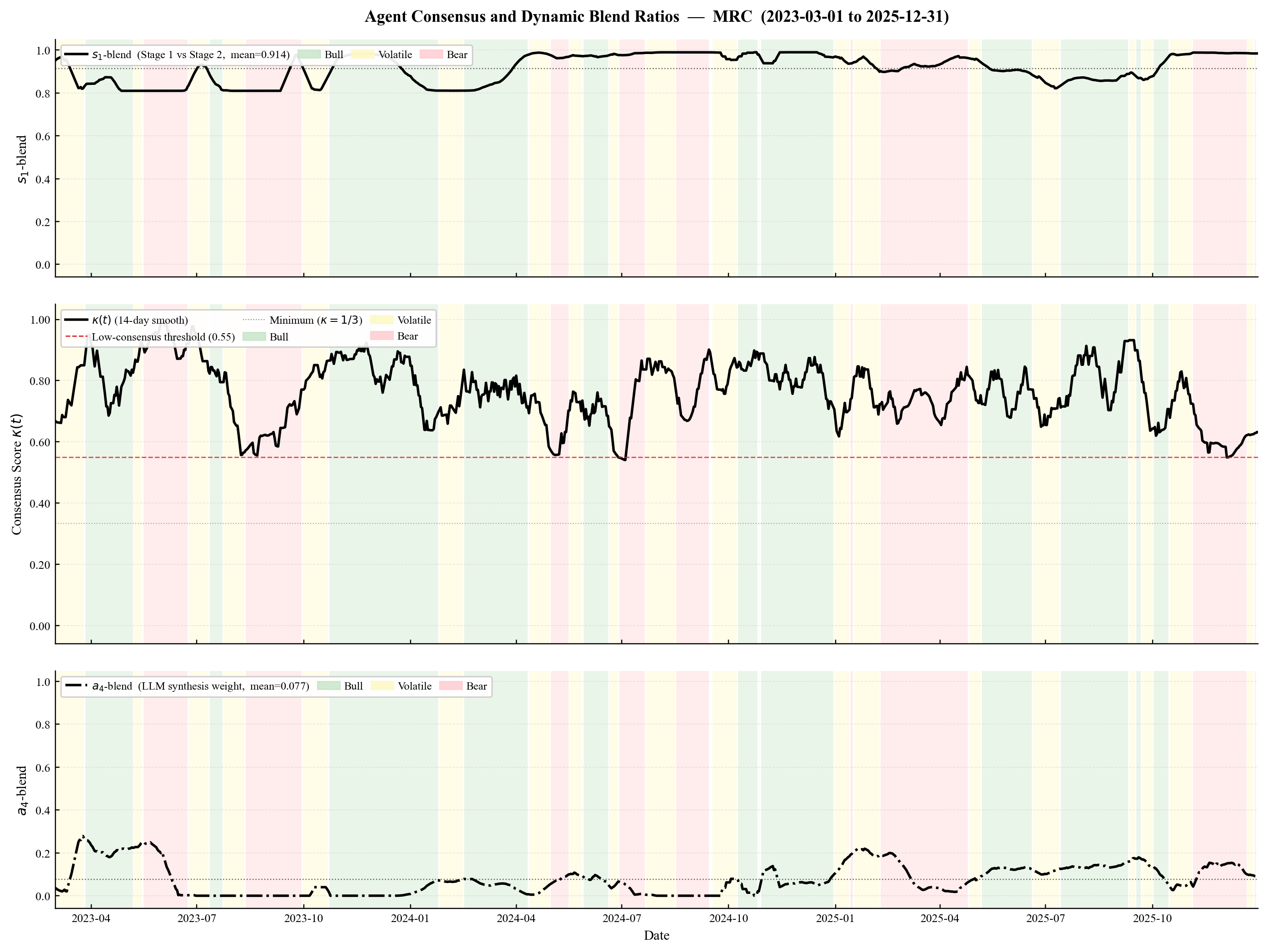}
    \caption{%
        \textbf{Dynamic blend ratios and agent consensus (2023-03-01 to 2025-12-31).} Top: Stage-1 blend $\beta_{S1}^{(t)}$ confirms that the Stage-1 ensemble dominates Stage-2 debate outputs across all regimes. Middle: consensus $\kappa^{(t)}$ (dashed line at 0.55 shown for visual reference), where lower consensus continuously reduces the grand-coalition readout blend via Eq.~\eqref{eq:divergence_discount}.
        Bottom: grand-coalition readout blend $\beta_{\mathrm{gc}}^{\mathrm{final}}$.
    }
    \label{fig:consensus_blend}
\end{figure}

\begin{table}[tbh]
\centering
\caption{Ablation analysis of LLM temperature ($T_{\mathrm{LLM}}$). Primary configuration is $T_{\mathrm{LLM}}{=}0.7$, and other temperature values are shown for sensitivity. \textcolor{red}{Red}: best per column; \textcolor{blue}{Blue}: second best. Single run per temperature setting.}
\label{tab:ablation_temperature}
\vspace{0mm}
\setlength{\tabcolsep}{3.2pt}
\resizebox{\textwidth}{!}{%
\begin{tabular}{l *{5}{cccc}}
\toprule
\multirow{3}{*}{\textbf{Method}} & \multicolumn{4}{c}{$T_{\mathrm{LLM}}=0.1$} & \multicolumn{4}{c}{$T_{\mathrm{LLM}}=0.3$} & \multicolumn{4}{c}{$T_{\mathrm{LLM}}=0.5$} & \multicolumn{4}{c}{$T_{\mathrm{LLM}}=0.7$} & \multicolumn{4}{c}{$T_{\mathrm{LLM}}=0.9$} \\
\cmidrule(lr){2-5} \cmidrule(lr){6-9} \cmidrule(lr){10-13} \cmidrule(lr){14-17} \cmidrule(lr){18-21}
 & CR\,\% & SR & MDD\,\% & IR & CR\,\% & SR & MDD\,\% & IR & CR\,\% & SR & MDD\,\% & IR & CR\,\% & SR & MDD\,\% & IR & CR\,\% & SR & MDD\,\% & IR \\
 & ($\uparrow$) & ($\uparrow$) & ($\downarrow$) & ($\uparrow$) & ($\uparrow$) & ($\uparrow$) & ($\downarrow$) & ($\uparrow$) & ($\uparrow$) & ($\uparrow$) & ($\downarrow$) & ($\uparrow$) & ($\uparrow$) & ($\uparrow$) & ($\downarrow$) & ($\uparrow$) & ($\uparrow$) & ($\uparrow$) & ($\downarrow$) & ($\uparrow$) \\
\midrule
\multicolumn{21}{l}{\small\textit{Market Benchmarks}} \\
~~ EW & 279.93 & 1.14 & 42.78 & 0.00 & 279.93 & 1.14 & 42.78 & 0.00 & 279.93 & 1.14 & 42.78 & 0.00 & 279.93 & 1.14 & 42.78 & 0.00 & 279.93 & 1.14 & 42.78 & 0.00 \\
~~ BTC & 270.94 & 1.23 & \textcolor{blue}{32.02} & -0.07 & 270.94 & 1.23 & \textcolor{blue}{32.02} & -0.07 & 270.94 & 1.23 & \textcolor{blue}{32.02} & -0.07 & 270.94 & 1.23 & \textcolor{blue}{32.02} & -0.07 & 270.94 & 1.23 & \textcolor{blue}{32.02} & -0.07 \\
~~ ETH & 78.45 & 0.63 & 63.75 & -0.59 & 78.45 & 0.63 & 63.75 & -0.59 & 78.45 & 0.63 & 63.75 & -0.59 & 78.45 & 0.63 & 63.75 & -0.59 & 78.45 & 0.63 & 63.75 & -0.59 \\
\midrule
\multicolumn{21}{l}{\small\textit{LLM-based Baselines}} \\
~~ FinCon & 245.50 & 1.11 & 40.40 & -1.38 & 257.90 & 1.14 & 42.00 & -1.03 & 262.70 & 1.13 & 41.80 & -0.99 & 266.50 & 1.12 & 43.02 & -1.13 & 286.60 & 1.17 & 40.90 & \textcolor{blue}{0.06} \\
~~ TradingAgents & 282.90 & 1.20 & 38.20 & -0.20 & 292.20 & 1.22 & 39.50 & -0.09 & 251.30 & 1.14 & 44.40 & -0.47 & 286.52 & 1.21 & 41.55 & -0.17 & 242.00 & 1.12 & 42.30 & -0.59 \\
~~ FinMem & \textcolor{blue}{348.40} & 1.30 & 42.50 & \textcolor{red}{0.28} & 345.40 & 1.31 & 39.10 & \textcolor{blue}{0.23} & 298.40 & 1.21 & 39.80 & 0.00 & \textcolor{blue}{477.66} & \textcolor{blue}{1.53} & 38.06 & \textcolor{red}{0.79} & \textcolor{blue}{319.70} & 1.28 & 39.10 & 0.05 \\
~~ CryptoTrade & 304.80 & 1.24 & 43.90 & 0.02 & \textcolor{blue}{346.70} & 1.30 & 43.60 & \textcolor{red}{0.30} & \textcolor{red}{395.50} & \textcolor{blue}{1.38} & 36.10 & \textcolor{red}{0.55} & 378.35 & 1.36 & 41.74 & 0.45 & \textcolor{red}{334.40} & 1.27 & 43.20 & \textcolor{red}{0.24} \\
~~ FSReason & 240.60 & 1.11 & 40.70 & -0.39 & 284.50 & 1.21 & 38.10 & -0.12 & 309.60 & 1.25 & 36.60 & 0.03 & 295.74 & 1.23 & 38.38 & -0.14 & 244.90 & 1.12 & 38.30 & -0.43 \\
\midrule
\multicolumn{21}{l}{\small\textit{DRL-based Baselines}} \\
~~ A2C & 299.71 & \textcolor{blue}{1.33} & \textcolor{red}{29.88} & -0.13 & 299.71 & \textcolor{blue}{1.33} & \textcolor{red}{29.88} & -0.13 & 299.71 & 1.33 & \textcolor{red}{29.88} & -0.13 & 299.71 & 1.33 & \textcolor{red}{29.88} & -0.13 & 299.71 & \textcolor{red}{1.33} & \textcolor{red}{29.88} & -0.13 \\
~~ PPO & 209.69 & 1.05 & 35.52 & -0.56 & 209.69 & 1.05 & 35.52 & -0.56 & 209.69 & 1.05 & 35.52 & -0.56 & 209.69 & 1.05 & 35.52 & -0.56 & 209.69 & 1.05 & 35.52 & -0.56 \\
~~ DQN & 243.50 & 1.18 & 39.30 & -0.47 & 243.50 & 1.18 & 39.30 & -0.47 & 243.50 & 1.18 & 39.30 & -0.47 & 243.50 & 1.18 & 39.30 & -0.47 & 243.50 & 1.18 & 39.30 & -0.47 \\
~~ MAPPO & 253.42 & 1.14 & 40.14 & -1.17 & 253.42 & 1.14 & 40.14 & -1.17 & 253.42 & 1.14 & 40.14 & -1.17 & 253.42 & 1.14 & 40.14 & -1.17 & 253.42 & 1.14 & 40.14 & -1.17 \\
~~ MADDPG & 263.56 & 1.16 & 39.22 & -0.85 & 263.56 & 1.16 & 39.22 & -0.85 & 263.56 & 1.16 & 39.22 & -0.85 & 263.56 & 1.16 & 39.22 & -0.85 & 263.56 & 1.16 & 39.22 & -0.85 \\
\midrule
\rowcolor{maroon!5} \textbf{MRC (Ours)} & \textcolor{red}{381.20} & \textcolor{red}{1.44} & 34.00 & \textcolor{blue}{0.26} & \textcolor{red}{375.40} & \textcolor{red}{1.42} & 32.40 & \textcolor{blue}{0.23} & \textcolor{blue}{375.50} & \textcolor{red}{1.42} & \textcolor{blue}{31.30} & \textcolor{blue}{0.24} & \textcolor{red}{510.18} & \textcolor{red}{1.60} & 37.71 & \textcolor{blue}{0.74} & 307.10 & \textcolor{blue}{1.31} & 34.20 & -0.10 \\
\bottomrule
\end{tabular}%
}
\end{table}

\begin{table}[h]
\centering
\caption{Burn-in sensitivity analysis. The burn-in period is determined by inverting Eq.~\eqref{eq:alpha}: $t = \lceil{-\lambda\ln(1-\alpha^*)}\rceil$, where $\alpha^* \in [0,1)$ is the target mixing level and $\lambda=30$. Evaluation windows share the same end date; start dates and sizes vary. \textcolor{red}{Red}: best per column; \textcolor{blue}{Blue}: second best.}
\label{tab:burnin_sensitivity}
\vspace{0mm}
\setlength{\tabcolsep}{3.2pt}
\resizebox{\textwidth}{!}{%
\begin{tabular}{l *{5}{cccc}}
\toprule
\multirow{3}{*}{\textbf{Method}} & \multicolumn{4}{c}{$\alpha=0$ (Full Window)} & \multicolumn{4}{c}{$\alpha=0.25$ (Burn-in 29d)} & \multicolumn{4}{c}{$\alpha=0.5$ (Burn-in 69d)} & \multicolumn{4}{c}{$\alpha=0.75$ (139d)} & \multicolumn{4}{c}{$\alpha=0.9$ (230d)} \\
\cmidrule(lr){2-5} \cmidrule(lr){6-9} \cmidrule(lr){10-13} \cmidrule(lr){14-17} \cmidrule(lr){18-21}
 & CR\,\% & SR & MDD\,\% & IR & CR\,\% & SR & MDD\,\% & IR & CR\,\% & SR & MDD\,\% & IR & CR\,\% & SR & MDD\,\% & IR & CR\,\% & SR & MDD\,\% & IR \\

 & ($\uparrow$) & ($\uparrow$) & ($\downarrow$) & ($\uparrow$) & ($\uparrow$) & ($\uparrow$) & ($\downarrow$) & ($\uparrow$) & ($\uparrow$) & ($\uparrow$) & ($\downarrow$) & ($\uparrow$) & ($\uparrow$) & ($\uparrow$) & ($\downarrow$) & ($\uparrow$) & ($\uparrow$) & ($\uparrow$) & ($\downarrow$) & ($\uparrow$) \\
\midrule
\multicolumn{21}{l}{\small\textit{Market Benchmarks}} \\
~~ EW & 279.93 & 1.14 & 42.78 & 0.00 & 265.18 & 1.14 & 42.78 & 0.00 & 287.60 & 1.21 & 42.78 & 0.00 & 227.96 & 1.17 & 42.78 & 0.00 & 272.26 & 1.35 & 42.78 & 0.00 \\
~~ BTC & 270.94 & 1.23 & \textcolor{blue}{32.02} & -0.07 & 209.18 & 1.12 & \textcolor{blue}{32.02} & -0.31 & 216.78 & 1.18 & \textcolor{blue}{32.02} & -0.36 & 190.82 & 1.17 & \textcolor{blue}{32.02} & -0.27 & 207.53 & 1.30 & \textcolor{blue}{32.02} & -0.39 \\
~~ ETH & 78.45 & 0.63 & 63.75 & -0.59 & 65.73 & 0.60 & 63.75 & -0.64 & 60.84 & 0.60 & 63.75 & -0.76 & 55.49 & 0.60 & 63.75 & -0.64 & 85.79 & 0.75 & 63.75 & -0.62 \\
\midrule
\multicolumn{21}{l}{\small\textit{LLM-based Baselines}} \\
~~ FinCon & 266.50 & 1.12 & 43.02 & -1.13 & 251.82 & 1.12 & 43.02 & -1.19 & 270.95 & 1.19 & 43.02 & -1.45 & 214.21 & 1.14 & 43.02 & -1.48 & 257.58 & 1.32 & 43.02 & -1.48 \\
~~ TradingAgents & 286.52 & 1.21 & 41.55 & -0.17 & 272.03 & 1.22 & 41.55 & -0.16 & 293.19 & 1.29 & 41.55 & -0.18 & 249.65 & 1.27 & 41.55 & 0.05 & 291.35 & 1.46 & 41.55 & 0.00 \\
~~ FinMem & \textcolor{blue}{477.66} & \textcolor{blue}{1.53} & 38.06 & \textcolor{red}{0.79} & \textcolor{blue}{424.23} & \textcolor{blue}{1.50} & 38.06 & \textcolor{blue}{0.68} & \textcolor{blue}{452.44} & \textcolor{blue}{1.58} & 38.06 & \textcolor{blue}{0.69} & \textcolor{blue}{375.77} & \textcolor{blue}{1.56} & 38.06 & 0.80 & \textcolor{blue}{428.99} & \textcolor{blue}{1.77} & 38.06 & \textcolor{blue}{0.81} \\
~~ CryptoTrade & 378.35 & 1.36 & 41.74 & 0.45 & 357.02 & 1.36 & 41.74 & 0.45 & 387.77 & 1.45 & 41.74 & 0.48 & 348.17 & 1.47 & 41.74 & \textcolor{blue}{0.82} & 389.83 & 1.65 & 41.74 & 0.75 \\
~~ FSReason & 295.74 & 1.23 & 38.38 & -0.14 & 286.10 & 1.25 & 38.38 & -0.03 & 301.79 & 1.31 & 38.38 & -0.08 & 281.10 & 1.34 & 38.38 & 0.29 & 319.34 & 1.52 & 38.38 & 0.22 \\
\midrule
\multicolumn{21}{l}{\small\textit{DRL-based Baselines}} \\
~~ A2C & 299.71 & 1.33 & \textcolor{red}{29.88} & -0.13 & 288.25 & 1.35 & \textcolor{red}{29.88} & -0.12 & 290.01 & 1.39 & \textcolor{red}{29.88} & -0.20 & 255.66 & 1.39 & \textcolor{red}{29.88} & -0.07 & 273.16 & 1.53 & \textcolor{red}{29.88} & -0.21 \\
~~ PPO & 209.69 & 1.05 & 35.52 & -0.56 & 197.41 & 1.05 & 35.52 & -0.58 & 201.69 & 1.09 & 35.52 & -0.69 & 185.74 & 1.11 & 35.52 & -0.45 & 229.62 & 1.31 & 35.52 & -0.44 \\
~~ DQN & 243.54 & 1.18 & 39.27 & -0.47 & 215.55 & 1.14 & 39.27 & -0.59 & 224.09 & 1.20 & 39.27 & -0.68 & 171.15 & 1.12 & 39.27 & -0.73 & 228.24 & 1.38 & 39.27 & -0.60 \\
~~ MAPPO & 253.42 & 1.14 & 40.14 & -1.17 & 240.81 & 1.14 & 40.14 & -1.16 & 259.84 & 1.21 & 40.14 & -1.24 & 206.65 & 1.16 & 40.14 & -1.23 & 245.17 & 1.34 & 40.14 & -1.40 \\
~~ MADDPG & 263.56 & 1.16 & 39.22 & -0.85 & 250.54 & 1.16 & 39.22 & -0.83 & 270.36 & 1.23 & 39.22 & -0.89 & 215.28 & 1.18 & 39.22 & -0.86 & 253.08 & 1.36 & 39.22 & -1.05 \\
\midrule
\rowcolor{maroon!5} \textbf{MRC (Ours)} & \textcolor{red}{510.18} & \textcolor{red}{1.60} & 37.71 & \textcolor{blue}{0.74} & \textcolor{red}{464.95} & \textcolor{red}{1.58} & 37.71 & \textcolor{red}{0.69} & \textcolor{red}{509.37} & \textcolor{red}{1.69} & 37.71 & \textcolor{red}{0.74} & \textcolor{red}{477.10} & \textcolor{red}{1.75} & 37.71 & \textcolor{red}{1.10} & \textcolor{red}{507.77} & \textcolor{red}{1.91} & 37.71 & \textcolor{red}{1.02} \\
\bottomrule
\end{tabular}%
}
\end{table}

\begin{table}[t]
\centering
\caption{Slippage sensitivity analysis. \textcolor{red}{Red}: best per column; \textcolor{blue}{Blue}: second best. Slippage applied one-way per unit of portfolio turnover.}
\label{tab:slippage_sensitivity}
\vspace{0mm}
\setlength{\tabcolsep}{3.2pt}
\resizebox{\textwidth}{!}{%
\begin{tabular}{l *{1}{cccc} *{1}{cccc} *{1}{cccc} *{1}{cccc}}
\toprule
\multirow{3}{*}{\textbf{Method}} & \multicolumn{4}{c}{0\,bps (no cost)} & \multicolumn{4}{c}{5\,bps} & \multicolumn{4}{c}{10\,bps} & \multicolumn{4}{c}{30\,bps} \\
\cmidrule(lr){2-5} \cmidrule(lr){6-9} \cmidrule(lr){10-13} \cmidrule(lr){14-17}
  & CR\,\% & SR & MDD\,\% & IR & CR\,\% & SR & MDD\,\% & IR & CR\,\% & SR & MDD\,\% & IR & CR\,\% & SR & MDD\,\% & IR \\
  & ($\uparrow$) & ($\uparrow$) & ($\downarrow$) & ($\uparrow$) & ($\uparrow$) & ($\uparrow$) & ($\downarrow$) & ($\uparrow$) & ($\uparrow$) & ($\uparrow$) & ($\downarrow$) & ($\uparrow$) & ($\uparrow$) & ($\uparrow$) & ($\downarrow$) & ($\uparrow$) \\
\midrule
\multicolumn{17}{l}{\small\textit{Market Benchmarks}} \\
~~ EW & 279.93 & 1.14 & 42.78 & 0.00 & 279.93 & 1.14 & 42.78 & 0.00 & \textcolor{blue}{279.93} & 1.14 & 42.78 & \textcolor{red}{0.00} & \textcolor{red}{279.93} & \textcolor{blue}{1.14} & 42.78 & \textcolor{red}{0.00} \\
~~ BTC & 270.94 & 1.23 & \textcolor{blue}{32.02} & -0.07 & 270.94 & 1.23 & \textcolor{red}{32.02} & -0.07 & 270.94 & \textcolor{blue}{1.23} & \textcolor{red}{32.02} & -0.07 & \textcolor{blue}{270.94} & \textcolor{red}{1.23} & \textcolor{red}{32.02} & \textcolor{blue}{-0.07} \\
~~ ETH & 78.45 & 0.63 & 63.75 & -0.59 & 78.45 & 0.63 & 63.75 & -0.59 & 78.45 & 0.63 & 63.75 & -0.59 & 78.45 & 0.63 & 63.75 & -0.59 \\
\midrule
\multicolumn{17}{l}{\small\textit{LLM-based Baselines}} \\
~~ FinCon & 266.50 & 1.12 & 43.00 & -1.13 & 263.90 & 1.12 & 43.10 & -1.20 & 261.30 & 1.11 & 43.10 & -1.27 & 251.20 & 1.10 & 43.30 & -1.52 \\
~~ TradingAgents & 286.50 & 1.21 & 41.50 & -0.17 & 211.60 & 1.06 & 43.10 & -0.91 & 151.20 & 0.90 & 44.50 & -1.65 & 6.00 & 0.29 & 50.10 & -4.60 \\
~~ FinMem & \textcolor{blue}{477.70} & \textcolor{blue}{1.53} & 38.10 & \textcolor{red}{0.79} & \textcolor{blue}{334.50} & \textcolor{blue}{1.32} & 40.30 & \textcolor{blue}{0.13} & 226.70 & 1.11 & 42.40 & -0.54 & 4.40 & 0.27 & 50.30 & -3.17 \\
~~ CryptoTrade & 378.40 & 1.36 & 41.70 & 0.45 & 265.10 & 1.17 & 44.00 & -0.26 & 178.70 & 0.98 & 46.20 & -0.96 & -5.50 & 0.21 & 54.10 & -3.76 \\
~~ FSReason & 295.70 & 1.23 & 38.40 & -0.07 & 198.80 & 1.03 & 40.60 & -0.78 & 125.60 & 0.83 & 42.70 & -1.49 & -26.80 & 0.02 & 50.50 & -4.31 \\
\midrule
\multicolumn{17}{l}{\small\textit{DRL-based Baselines}} \\
~~ A2C & 299.70 & 1.33 & \textcolor{red}{29.90} & -0.13 & 136.20 & 0.91 & \textcolor{blue}{32.80} & -0.92 & 39.50 & 0.49 & \textcolor{blue}{36.50} & -1.71 & -83.10 & -1.20 & 83.20 & -4.86 \\
~~ PPO & 209.70 & 1.05 & 35.50 & -0.56 & 91.50 & 0.71 & 39.00 & -1.54 & 18.40 & 0.37 & 42.20 & -2.52 & -82.80 & -1.01 & 83.40 & -6.44 \\
~~ DQN & 243.50 & 1.18 & 39.30 & -0.47 & 163.40 & 0.97 & 40.90 & -1.05 & 102.00 & 0.77 & 42.50 & -1.63 & -30.30 & -0.05 & 48.50 & -3.90 \\
~~ MAPPO & 253.40 & 1.14 & 40.10 & -1.17 & 245.30 & 1.12 & 40.30 & -1.34 & 237.30 & 1.11 & 40.40 & -1.52 & 207.30 & 1.04 & \textcolor{blue}{41.10} & -2.21 \\
~~ MADDPG & 263.60 & 1.16 & 39.20 & -0.85 & 248.40 & 1.13 & 39.60 & -1.18 & 233.80 & 1.10 & 39.90 & -1.51 & 181.40 & 0.98 & 41.30 & -2.83 \\
\midrule
\rowcolor{maroon!5} \textbf{MRC (Ours)} & \textcolor{red}{510.20} & \textcolor{red}{1.60} & 37.70 & \textcolor{blue}{0.74} & \textcolor{red}{402.10} & \textcolor{red}{1.45} & 39.30 & \textcolor{red}{0.36} & \textcolor{red}{313.10} & \textcolor{red}{1.30} & 40.80 & \textcolor{blue}{-0.02} & 89.20 & 0.71 & 46.50 & -1.53 \\
\bottomrule
\end{tabular}%
}
\end{table}

\begin{table}[h]
\centering
\caption{LLM backbone ablation --- Stage~1: Individual Agents and Full MRC
  (2023-03-01--2025-12-31, 1\,037 days).
  Each specialist agent operates independently without cross-agent communication.
  $A_1$: Market Observer;
  $A_2$: On-chain Observer;
  $A_3$: Macro Observer.
  See Table~\ref{tab:llm_backbone_s2} for Stage~2 and
  Table~\ref{tab:llm_backbone_s3} for Stage~3 results.
  \textcolor{red}{Red}: best per column; \textcolor{blue}{Blue}: second best.}
\label{tab:llm_backbone_s1}
\vspace{0mm}
\setlength{\tabcolsep}{3.5pt}
\resizebox{\textwidth}{!}{%
\begin{tabular}{l *{4}{cccc}}
\toprule
\multirow{3}{*}{\textbf{Backbone}}
  & \multicolumn{12}{c}{\textbf{Stage 1 --- Independent Agent Performance}}
  & \multicolumn{4}{c}{\textbf{Full MRC System}} \\
\cmidrule(lr){2-13} \cmidrule(lr){14-17}
  & \multicolumn{4}{c}{$A_1$ (Market Observer)}
  & \multicolumn{4}{c}{$A_2$ (On-chain Observer)}
  & \multicolumn{4}{c}{$A_3$ (Macro Observer)}
  & \multicolumn{4}{c}{MRC (Full)} \\
\cmidrule(lr){2-5} \cmidrule(lr){6-9} \cmidrule(lr){10-13} \cmidrule(lr){14-17}
  & CR\,\% & SR & MDD\,\% & IR
  & CR\,\% & SR & MDD\,\% & IR
  & CR\,\% & SR & MDD\,\% & IR
  & CR\,\% & SR & MDD\,\% & IR \\
  & ($\uparrow$) & ($\uparrow$) & ($\downarrow$) & ($\uparrow$)
  & ($\uparrow$) & ($\uparrow$) & ($\downarrow$) & ($\uparrow$)
  & ($\uparrow$) & ($\uparrow$) & ($\downarrow$) & ($\uparrow$)
  & ($\uparrow$) & ($\uparrow$) & ($\downarrow$) & ($\uparrow$) \\
\midrule
\multicolumn{17}{l}{\small\textit{Market Benchmarks}} \\
~~ EW & \multicolumn{12}{c}{\textit{n/a}} & 279.9 & 1.14 & 42.8 & 0.00 \\
~~ BTC   & \multicolumn{12}{c}{\textit{n/a}} & 270.9 & 1.23 & \textcolor{blue}{32.0} & $-$0.07 \\
~~ ETH & \multicolumn{12}{c}{\textit{n/a}} & 78.5 & 0.63 & 63.8 & -0.59 \\
\midrule
\multicolumn{17}{l}{\small\textit{LLM Backbone (Qwen3-VL variants)}} \\
~~ \texttt{qwen3-vl-8b-instruct}        & 271.4 & 1.42 & \textcolor{red}{28.8} & $-$0.29 & 249.0 & 1.21 & \textcolor{red}{31.1} & $-$0.55 & \textcolor{red}{195.7} & 1.02 & 40.2 & $-$0.97 & 336.8 & 1.42 & \textcolor{red}{30.3} & $-$0.01 \\
~~ \texttt{qwen3-vl-8b-thinking}        & 256.9 & 1.27 & \textcolor{blue}{41.3} & $-$0.35 & 215.0 & 1.15 & 36.5 & $-$0.66 & 188.2 & 1.02 & 39.9 & $-$0.98 & 299.1 & 1.30 & 36.4 & $-$0.17 \\
~~ \texttt{qwen3-vl-32b-instruct}       & 343.3 & 1.44 & 45.9 & 0.09 & 286.3 & 1.43 & 35.2 & $-$0.15 & 189.6 & \textcolor{blue}{1.03} & 41.6 & \textcolor{blue}{$-$0.77} & 387.4 & 1.45 & 36.9 & 0.25 \\
~~ \texttt{qwen3-vl-30b-a3b-instruct}   & \textcolor{blue}{462.3} & 1.48 & 47.5 & \textcolor{blue}{0.25} & 269.8 & 1.22 & 39.7 & $-$0.73 & \textcolor{blue}{195.2} & 1.01 & 42.1 & $-$1.51 & 418.2 & 1.48 & 39.1 & 0.46 \\
~~ \texttt{qwen3-vl-30b-a3b-thinking}   & \textcolor{red}{589.0} & \textcolor{red}{1.60} & 48.5 & \textcolor{red}{0.66} & 332.5 & 1.36 & 33.9 & \textcolor{blue}{0.00} & 156.9 & 0.96 & \textcolor{red}{33.7} & $-$1.00 & \textcolor{red}{521.6} & \textcolor{red}{1.57} & 38.0 & \textcolor{red}{0.69} \\
~~ \texttt{qwen3-vl-235b-a22b-instruct} & 341.1 & 1.36 & 48.4 & 0.23 & \textcolor{red}{368.5} & \textcolor{blue}{1.49} & \textcolor{blue}{31.1} & \textcolor{red}{0.43} & 189.3 & \textcolor{red}{1.04} & 41.4 & \textcolor{red}{$-$0.53} & \textcolor{blue}{444.3} & \textcolor{blue}{1.52} & 39.6 & \textcolor{blue}{0.51} \\
~~ \texttt{qwen3-vl-235b-a22b-thinking} & 451.0 & \textcolor{blue}{1.52} & 43.9 & 0.22 & \textcolor{blue}{341.7} & \textcolor{red}{1.52} & 33.3 & $-$0.21 & 173.0 & 1.01 & \textcolor{blue}{38.9} & $-$1.18 & 431.9 & 1.49 & 37.5 & 0.44 \\
\bottomrule
\end{tabular}%
}
\end{table}

\begin{table}[h]
\centering
\caption{LLM backbone ablation --- Stage~2: Pairwise Coalitions and Full MRC
  (2023-03-01--2025-12-31, 1\,037 days).
  $P_{12}$: Market$\oplus$On-chain Socratic debate;
  $P_{13}$: Market$\oplus$Macro debate;
  $P_{23}$: On-chain$\oplus$Macro debate.
  Market benchmarks have no pairwise-coalition counterpart (\textit{n/a}).
  See Table~\ref{tab:llm_backbone_s1} for Stage~1 and
  Table~\ref{tab:llm_backbone_s3} for Stage~3 results.
  \textcolor{red}{Red}: best per column; \textcolor{blue}{Blue}: second best.}
\label{tab:llm_backbone_s2}
\vspace{0mm}
\setlength{\tabcolsep}{3.5pt}
\resizebox{\textwidth}{!}{%
\begin{tabular}{l *{4}{cccc}}
\toprule
\multirow{3}{*}{\textbf{Backbone}}
  & \multicolumn{12}{c}{\textbf{Stage 2 --- Pairwise Coalitions}}
  & \multicolumn{4}{c}{\textbf{Full MRC System}} \\
\cmidrule(lr){2-13} \cmidrule(lr){14-17}
  & \multicolumn{4}{c}{$P_{12}$ (Mkt.$\oplus$Chain)}
  & \multicolumn{4}{c}{$P_{13}$ (Mkt.$\oplus$Macro)}
  & \multicolumn{4}{c}{$P_{23}$ (Chain$\oplus$Macro)}
  & \multicolumn{4}{c}{MRC (Full)} \\
\cmidrule(lr){2-5} \cmidrule(lr){6-9} \cmidrule(lr){10-13} \cmidrule(lr){14-17}
  & CR\,\% & SR & MDD\,\% & IR
  & CR\,\% & SR & MDD\,\% & IR
  & CR\,\% & SR & MDD\,\% & IR
  & CR\,\% & SR & MDD\,\% & IR \\
  & ($\uparrow$) & ($\uparrow$) & ($\downarrow$) & ($\uparrow$)
  & ($\uparrow$) & ($\uparrow$) & ($\downarrow$) & ($\uparrow$)
  & ($\uparrow$) & ($\uparrow$) & ($\downarrow$) & ($\uparrow$)
  & ($\uparrow$) & ($\uparrow$) & ($\downarrow$) & ($\uparrow$) \\
\midrule
\multicolumn{17}{l}{\small\textit{Market Benchmarks}} \\
~~ EW & \multicolumn{12}{c}{\textit{n/a}} & 279.9 & 1.14 & 42.8 & 0.00 \\
~~ BTC   & \multicolumn{12}{c}{\textit{n/a}} & 270.9 & 1.23 & \textcolor{blue}{32.0} & $-$0.07 \\
~~ ETH & \multicolumn{12}{c}{\textit{n/a}} & 78.5 & 0.63 & 63.8 & -0.59 \\
\midrule
\multicolumn{17}{l}{\small\textit{LLM Backbone (Qwen3-VL variants)}} \\
~~ \texttt{qwen3-vl-8b-instruct}        & 300.2 & \textcolor{red}{1.62} & \textcolor{red}{20.0} & \textcolor{blue}{$-$0.28} & 218.7 & 1.37 & \textcolor{red}{28.1} & $-$0.48 & 210.8 & 1.16 & \textcolor{blue}{29.5} & $-$0.79 & 336.8 & 1.42 & \textcolor{red}{30.3} & $-$0.01 \\
~~ \texttt{qwen3-vl-8b-thinking}        & 174.5 & 1.16 & 38.9 & $-$0.79 & 172.8 & 1.12 & 38.9 & $-$0.83 & 165.8 & 1.08 & 34.1 & $-$0.93 & 299.1 & 1.30 & 36.4 & $-$0.17 \\
~~ \texttt{qwen3-vl-32b-instruct}       & 231.0 & 1.33 & 42.0 & $-$0.31 & 276.4 & 1.32 & 47.2 & \textcolor{blue}{$-$0.08} & \textcolor{red}{295.8} & \textcolor{blue}{1.44} & 31.4 & \textcolor{blue}{$-$0.11} & 387.4 & 1.45 & 36.9 & 0.25 \\
~~ \texttt{qwen3-vl-30b-a3b-instruct}   & \textcolor{blue}{335.5} & 1.42 & 39.5 & $-$0.33 & \textcolor{red}{363.2} & \textcolor{blue}{1.42} & 39.1 & $-$0.20 & 197.0 & 1.07 & 38.4 & $-$1.29 & 418.2 & 1.48 & 39.1 & 0.46 \\
~~ \texttt{qwen3-vl-30b-a3b-thinking}   & 189.7 & 1.13 & 44.6 & $-$0.48 & 227.5 & 1.25 & \textcolor{blue}{37.4} & $-$0.44 & 180.2 & 1.15 & 31.5 & $-$0.72 & \textcolor{red}{521.6} & \textcolor{red}{1.57} & 38.0 & \textcolor{red}{0.69} \\
~~ \texttt{qwen3-vl-235b-a22b-instruct} & \textcolor{red}{347.2} & \textcolor{blue}{1.57} & 39.3 & \textcolor{red}{0.13} & 272.6 & 1.38 & 42.3 & \textcolor{red}{$-$0.06} & \textcolor{blue}{291.3} & 1.38 & 33.6 & \textcolor{red}{0.03} & \textcolor{blue}{444.3} & \textcolor{blue}{1.52} & 39.6 & \textcolor{blue}{0.51} \\
~~ \texttt{qwen3-vl-235b-a22b-thinking} & 193.2 & 1.30 & \textcolor{blue}{38.0} & $-$0.67 & \textcolor{blue}{285.1} & \textcolor{red}{1.51} & 38.5 & $-$0.34 & 269.3 & \textcolor{red}{1.50} & \textcolor{red}{28.1} & $-$0.47 & 431.9 & 1.49 & 37.5 & 0.44 \\
\bottomrule
\end{tabular}%
}
\end{table}

\begin{table}[h]
\centering
\caption{LLM backbone ablation --- Stage~3: Grand Coalition and Full MRC
  (2023-03-01--2025-12-31, 1\,037 days).
  The Stage~3 grand-coalition readout $\mathbf{w}_{123}$ is produced by $A_4$ from the full coalition context.
  Market benchmarks have no grand-coalition counterpart (\textit{n/a}).
  See Table~\ref{tab:llm_backbone_s1} for Stage~1 and
  Table~\ref{tab:llm_backbone_s2} for Stage~2 results.
  \textcolor{red}{Red}: best per column; \textcolor{blue}{Blue}: second best.}
\label{tab:llm_backbone_s3}
\vspace{0mm}
\setlength{\tabcolsep}{5pt}
\resizebox{0.9\textwidth}{!}{%
\begin{tabular}{l *{2}{cccc}}
\toprule
\multirow{3}{*}{\textbf{Backbone}}
  & \multicolumn{4}{c}{\textbf{Stage 3}}
  & \multicolumn{4}{c}{\textbf{Full MRC System}} \\
\cmidrule(lr){2-5} \cmidrule(lr){6-9}
  & \multicolumn{4}{c}{Grand Coalition Readout ($\mathbf{w}_{123}$)}
  & \multicolumn{4}{c}{MRC (Full)} \\
\cmidrule(lr){2-5} \cmidrule(lr){6-9}
  & CR\,\% & SR & MDD\,\% & IR
  & CR\,\% & SR & MDD\,\% & IR \\
  & ($\uparrow$) & ($\uparrow$) & ($\downarrow$) & ($\uparrow$)
  & ($\uparrow$) & ($\uparrow$) & ($\downarrow$) & ($\uparrow$) \\
\midrule
\multicolumn{9}{l}{\small\textit{Market Benchmarks}} \\
~~ EW & \multicolumn{4}{c}{\textit{n/a}} & 279.9 & 1.14 & 42.8 & 0.00 \\
~~ BTC   & \multicolumn{4}{c}{\textit{n/a}} & 270.9 & 1.23 & \textcolor{blue}{32.0} & $-$0.07 \\
~~ ETH & \multicolumn{4}{c}{\textit{n/a}} & 78.5 & 0.63 & 63.8 & -0.59 \\
\midrule
\multicolumn{9}{l}{\small\textit{LLM Backbone (Qwen3-VL variants)}} \\
~~ \texttt{qwen3-vl-8b-instruct}        & 201.1 & 1.18 & \textcolor{red}{31.3} & $-$0.71 & 336.8 & 1.42 & \textcolor{red}{30.3} & $-$0.01 \\
~~ \texttt{qwen3-vl-8b-thinking}        & 201.2 & 1.12 & 40.2 & $-$0.85 & 299.1 & 1.30 & 36.4 & $-$0.17 \\
~~ \texttt{qwen3-vl-32b-instruct}       & 248.9 & 1.30 & 39.4 & $-$0.28 & 387.4 & 1.45 & 36.9 & 0.25 \\
~~ \texttt{qwen3-vl-30b-a3b-instruct}   & \textcolor{blue}{349.6} & 1.34 & 44.1 & $-$0.23 & 418.2 & 1.48 & 39.1 & 0.46 \\
~~ \texttt{qwen3-vl-30b-a3b-thinking}   & \textcolor{red}{384.8} & \textcolor{red}{1.43} & 43.0 & \textcolor{red}{0.22} & \textcolor{red}{521.6} & \textcolor{red}{1.57} & 38.0 & \textcolor{red}{0.69} \\
~~ \texttt{qwen3-vl-235b-a22b-instruct} & 299.1 & 1.34 & 40.2 & \textcolor{blue}{0.11} & \textcolor{blue}{444.3} & \textcolor{blue}{1.52} & 39.6 & \textcolor{blue}{0.51} \\
~~ \texttt{qwen3-vl-235b-a22b-thinking} & 303.8 & \textcolor{blue}{1.38} & \textcolor{blue}{38.4} & $-$0.33 & 431.9 & 1.49 & 37.5 & 0.44 \\
\bottomrule
\end{tabular}%
}
\end{table}

\begin{table}[h]
\centering
\caption{LLM backbone ablation --- Inference cost and parse failure rate
  (2023-03-01--2025-12-31, 1\,037 days).
  \textbf{Parse failure} occurs when an LLM response cannot be decoded into
  a valid portfolio weight vector; the system falls back to the previous period's
  weights for that step.
  Rates are computed over all 1\,037 rebalancing steps.
  The end-to-end Full MRC failure rate (all three stages fail simultaneously)
  is 0.10\% uniformly across all backbones.
  \textbf{Inference time} reports mean\,$\pm$\,std.\ per individual LLM call:
  Stage~1 = one specialist-agent call; Stage~2 = one pairwise-debate turn;
  Stage~3 = grand-coalition synthesis call;
  Full MRC = total wall-clock time per rebalancing period.
  \textcolor{red}{Red}: best (lowest) per column;
  \textcolor{blue}{Blue}: second best.
  $\downarrow$ lower is better.
  See Tables~\ref{tab:llm_backbone_s1}--\ref{tab:llm_backbone_s3}
  for portfolio performance metrics.}
\label{tab:llm_backbone_cost}
\vspace{0mm}
\setlength{\tabcolsep}{6pt}
\resizebox{\textwidth}{!}{%
\begin{tabular}{l c c c c c}
\toprule
\multirow{2}{*}{\textbf{Backbone}}
  & \multirow{2}{*}{\shortstack[c]{\textbf{Parse Fail.}\\\textbf{Rate (\%)} $(\downarrow)$}}
  & \multicolumn{4}{c}{\textbf{Mean Inference Time per LLM Call (s)} $(\downarrow)$} \\
\cmidrule(lr){3-6}
  & & Stage~1 (Agent) & Stage~2 (Debate) & Stage~3 (GC) & Full MRC \\
\midrule
~~ \texttt{qwen3-vl-8b-instruct}        & \textcolor{blue}{0.39} & $24.5_{\pm 8.2}$ & $12.0_{\pm 4.0}$ & \textcolor{blue}{$5.8_{\pm 2.4}$} & $42.5_{\pm 10.4}$ \\
~~ \texttt{qwen3-vl-8b-thinking}        & \textcolor{red}{0.19} & \textcolor{red}{$14.6_{\pm 2.3}$} & \textcolor{red}{$7.1_{\pm 1.8}$} & \textcolor{red}{$3.5_{\pm 1.8}$} & \textcolor{red}{$25.5_{\pm 3.6}$} \\
~~ \texttt{qwen3-vl-32b-instruct}       & 2.80 & \textcolor{blue}{$22.0_{\pm 2.1}$} & \textcolor{blue}{$11.7_{\pm 6.2}$} & $6.1_{\pm 2.8}$ & \textcolor{blue}{$40.1_{\pm 7.2}$} \\
~~ \texttt{qwen3-vl-30b-a3b-instruct}   & 0.96 & $26.9_{\pm 8.7}$ & $12.9_{\pm 4.5}$ & $6.2_{\pm 3.9}$ & $46.3_{\pm 11.3}$ \\
~~ \texttt{qwen3-vl-30b-a3b-thinking}   & 0.68 & $135.5_{\pm 38.1}$ & $258.6_{\pm 105.4}$ & $63.8_{\pm 42.4}$ & $458.1_{\pm 130.4}$ \\
~~ \texttt{qwen3-vl-235b-a22b-instruct} & 4.53 & $42.2_{\pm 18.9}$ & $22.2_{\pm 11.2}$ & $10.5_{\pm 7.2}$ & $75.1_{\pm 26.9}$ \\
~~ \texttt{qwen3-vl-235b-a22b-thinking} & 3.47 & $42.2_{\pm 15.6}$ & $21.1_{\pm 8.1}$ & $9.6_{\pm 6.2}$ & $73.1_{\pm 19.8}$ \\
\bottomrule
\end{tabular}%
}
\end{table}

\clearpage  
\section{Discussion}
\label{sec:discussion}

\paragraph{Why Debate Underperforms.}
The most striking finding from the coalition ablation is that Stage~2 Socratic debate consistently reduces Sharpe ratio relative to the best individual specialist, a result that stands in apparent tension with evidence that structured LLM debate improves reasoning quality~\cite{du2024improving}.
The resolution lies in a distinction between reasoning tasks and portfolio construction tasks.
In open-ended reasoning, debate surfaces inconsistencies between agents' reasoning over a shared evidence base.
In portfolio construction, the three specialist agents process fundamentally different data modalities across separate information channels, and when their views diverge, the divergence is often epistemically valid rather than a correctable error.
Forcing pairwise reconciliation attenuates signal rather than eliminating noise.
Within MRC, Stage~2 serves a different purpose: it generates structured disagreement signals that modulate $\beta_{\mathrm{gc}}$ via the divergence discount (Eq.~\ref{eq:divergence_discount}), reducing reliance on the grand-coalition readout precisely when inter-agent consensus is low.
The full system recovers the efficient frontier not by improving debate quality, but by Shapley-weighting all coalition outputs so that empirically inferior coalitions contribute proportionally less to the final allocation.

\paragraph{Mathematical Integration versus LLM synthesis.}
A consistent pattern emerges across all ablation configurations: Shapley-weighted mathematical integration outperforms LLM synthesis at every stage.
The dynamic blend ratios $\beta_{S1}^{(t)}$ and $\beta_{\mathrm{gc}}^{(t)}$ reflect this empirically, with $\beta_{\mathrm{gc}}^{(t)}$ remaining near its lower bound throughout most of the evaluation window.
This finding has a broader implication for multi-agent LLM systems in financial applications: the primary value of LLM agents may lie not in their capacity to aggregate information better than quantitative rules, but in their ability to translate heterogeneous raw inputs (candlestick images, on-chain tables, macroeconomic time series) into a common representational format amenable to principled mathematical integration.
The Shapley system provides the bridge between these two layers: it quantifies each modality's marginal contribution and weights the final allocation accordingly, separating the question of which signals matter from the question of how to combine them.

\paragraph{Regime-conditional Performance Pattern.}
The regime decomposition confirms that MRC sustains a positive annualised Sharpe across all three market conditions (bull, volatile, and bear), with the advantage over passive benchmarks most pronounced in bull markets and most compressed in bear markets.
The structural reason for the bear-regime compression is Bitcoin's flight-to-quality role within the crypto asset class: passive BTC accumulation benefits disproportionately when broad risk appetite collapses, a dynamic that multi-modal agent signals can mitigate but not fully anticipate.
The cash buffer (mean 28\% in bear) absorbs downside risk but also caps participation if BTC rallies sharply within the bear period, creating a tension between protection and opportunity cost that regime-conditional multipliers alone cannot resolve.
A dedicated cross-asset relative-value signal that explicitly models the BTC-to-altcoin allocation as a function of on-chain accumulation metrics would directly address this residual gap.

\paragraph{Convergence and Regret Analysis.}
The The Exponentially Weighted Characteristic Value (EW-CV) characteristic function $v(S;\mathcal{H}_t)$ with e-fold decay period $h{=}252$ (discrete decay ratio $e^{-1/h}$, distinct from the Bayesian concentration parameter $\lambda=30$ in Eq.~\eqref{eq:alpha}) connects MRC to discounted online learning~\cite{zinkevich2003online}, bounding the effective memory to approximately $h$ recent trading periods regardless of the evaluation horizon $T$.

The Bayesian adaptive mixture $\alpha(t) = 1-\exp(-t/\lambda)$ with $\lambda{=}30$ converges from the uniform prior to the Shapley-dominated distribution at an exponential rate, yielding a warm-up period of $\lceil\lambda\ln(2/\varepsilon)\rceil$ periods to reach $\varepsilon$-proximity.
For $\lambda{=}30$ and $\varepsilon{=}0.05$, this is approximately 110 trading days, consistent with the empirical cold-start threshold in Figure~\ref{fig:shapley_evolution}.
This connection motivates the use of exponential discounting and the Bayesian warm-up schedule in MRC.
We do not claim a formal regret bound for the full deployed system, since regime multipliers, blend rules, and overlay layers introduce additional nonlinearities beyond the standard expert-aggregation setting.

\paragraph{Cold-start Correctness and EWP behavior.}
Figure~\ref{fig:shapley_evolution} confirms the intended cold-start behavior: all three agent weights remain near the uniform prior $1/3$ for the first approximately 100 periods, then diverge smoothly as Shapley evidence accumulates, validating the joint design of the EWP decay and the Bayesian adaptive mixture.
The convergence of $A_3$ (Macro Agent) toward near-zero weight by mid-2024 is consistent with theoretical prediction: in a risk-on, momentum-driven environment, macroeconomic indicators have genuinely low marginal contribution to portfolio returns, and the EWP correctly discounts early periods in which macro signals appeared more relevant.
This weight is expected to recover in a sustained macro-driven bear regime, and the geometric discounting ensures the recovery is continuous rather than abrupt.
The three-phase learning schedule operates without any external reset mechanism, a property practically important in live deployment, where retraining is infeasible.

\section{Individual Token, Strategy Return Profiles, and Additional Details of Agent Settings}
\label{app:return_profiles}

Figures~\ref{fig:token_returns} and~\ref{fig:strategy_returns} provide full per-asset and per-strategy visualisations over the entire backtest window (2023-03-01 to 2025-12-31, $T=1{,}037$ trading days).

\begin{figure}[htbp]
    \centering
    \includegraphics[width=0.7\textwidth]{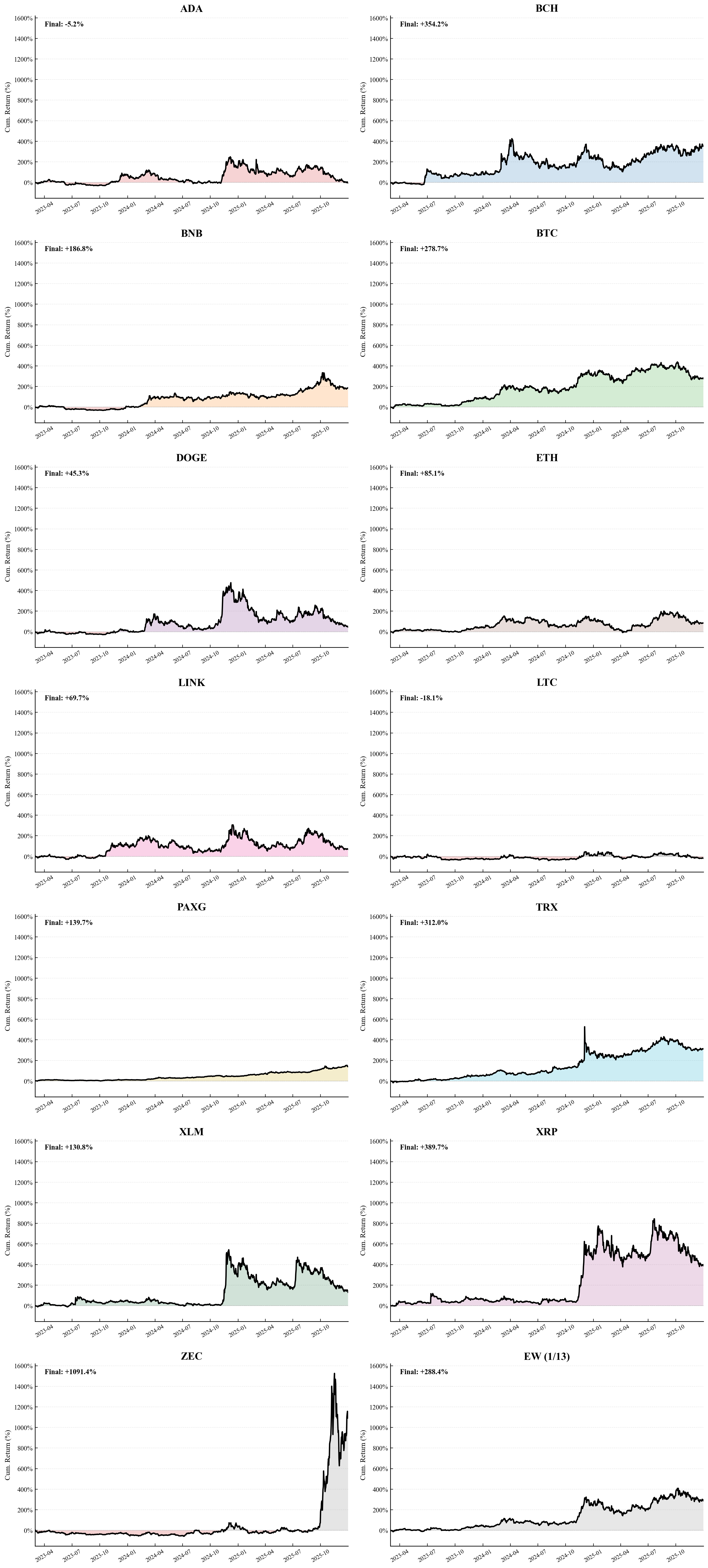}
    \caption{%
        \textbf{Cumulative Hold returns of the 13 tokens and EW benchmark
        (2023-03-01 to 2025-12-31).}
        Each panel: single-asset cumulative return; final value annotated top-left.
    }
    \label{fig:token_returns}
\end{figure}

\begin{figure}[htbp]
    \centering
    \includegraphics[width=0.8\textwidth]{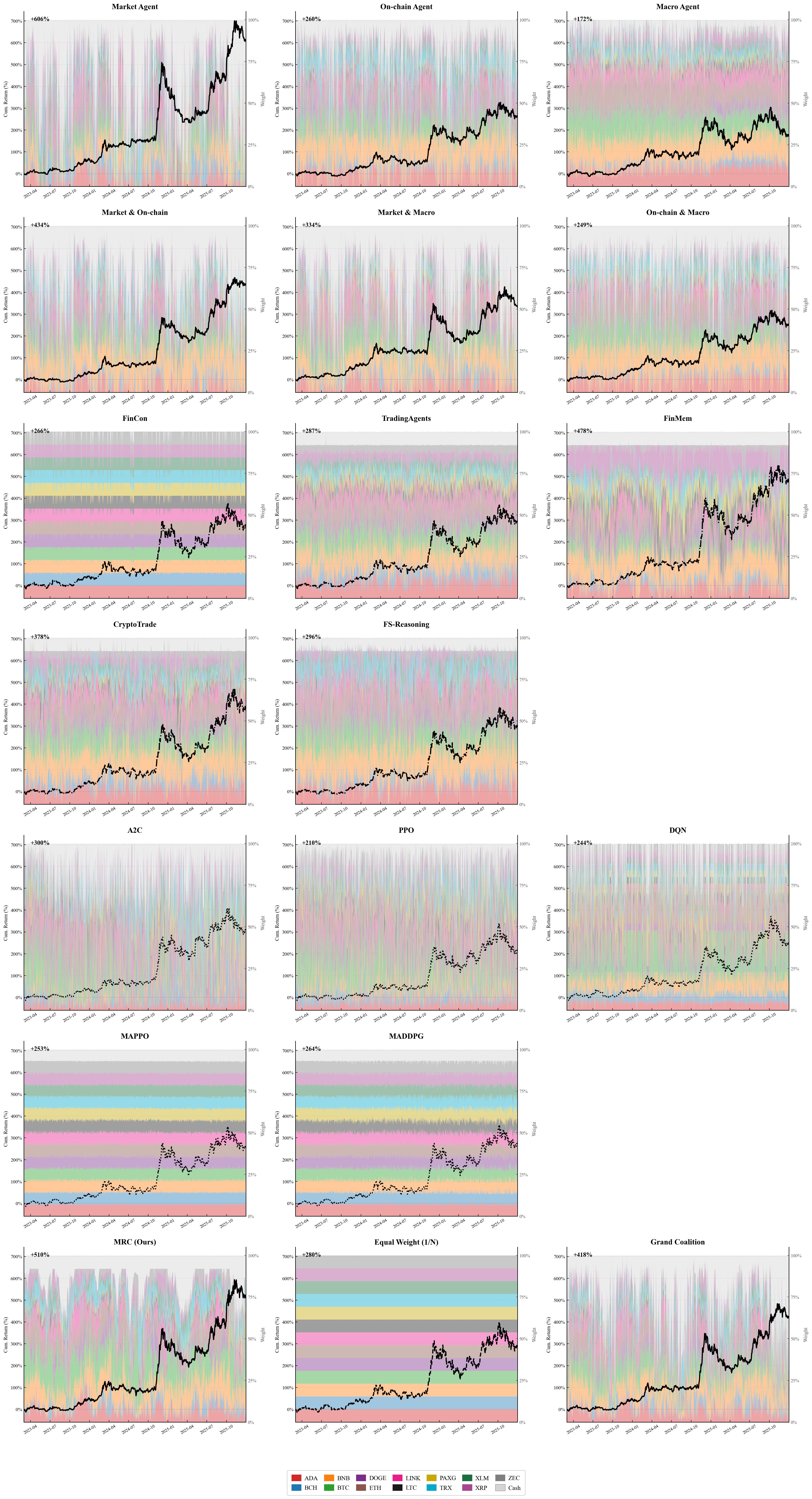}
    \caption{%
        Cumulative returns and portfolio weights for all 17 strategies.
    }
    \label{fig:strategy_returns}
\end{figure}

\clearpage

\begin{figure}[htbp]
    \centering
    \includegraphics[width=0.9\textwidth]{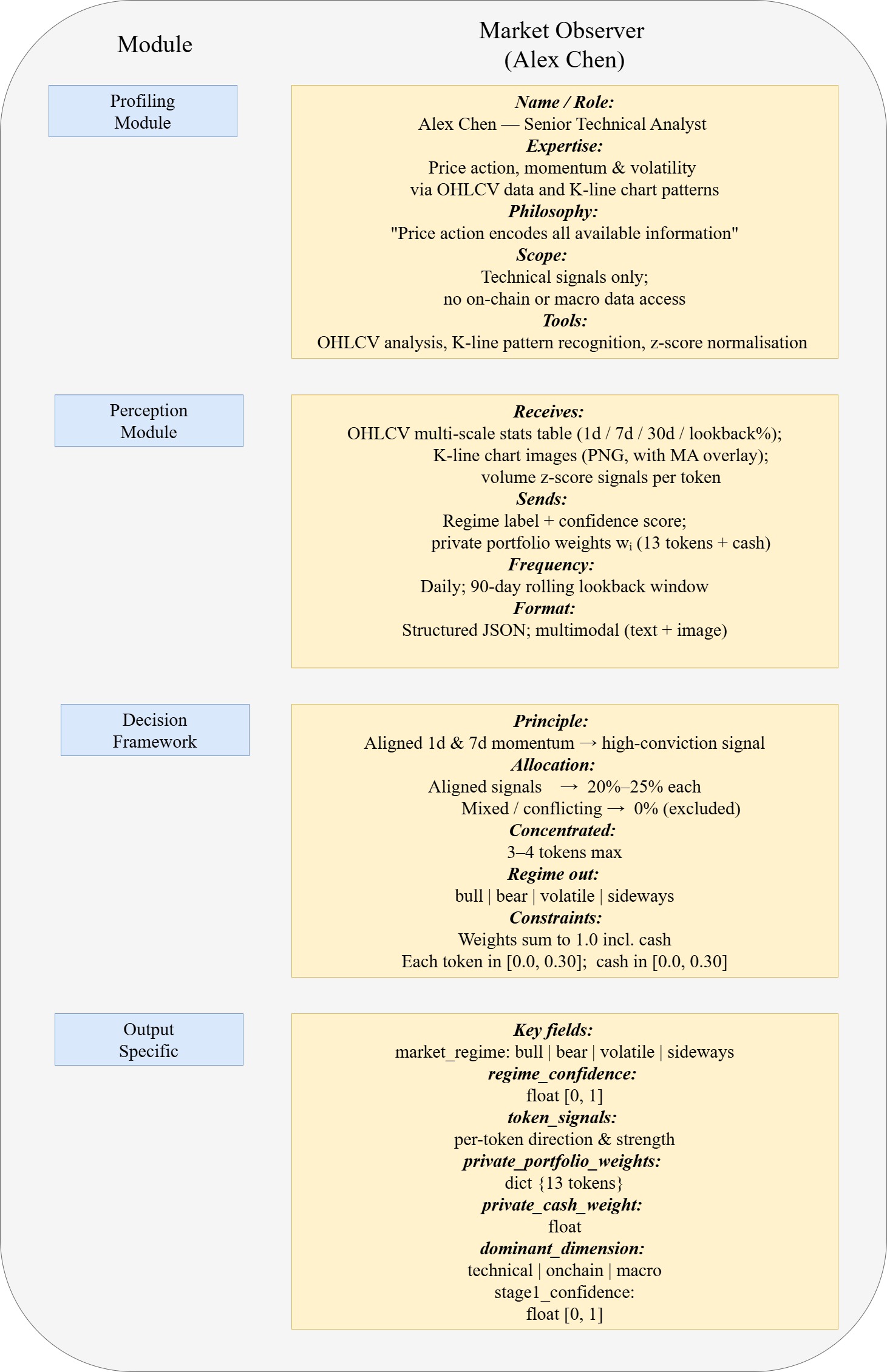}
    \caption{%
        Full specification of $A_1$ Market Observer (Alex Chen).
    }
    \label{fig:market_observer_full}
\end{figure}

\begin{figure}[htbp]
    \centering
    \includegraphics[width=0.9\textwidth]{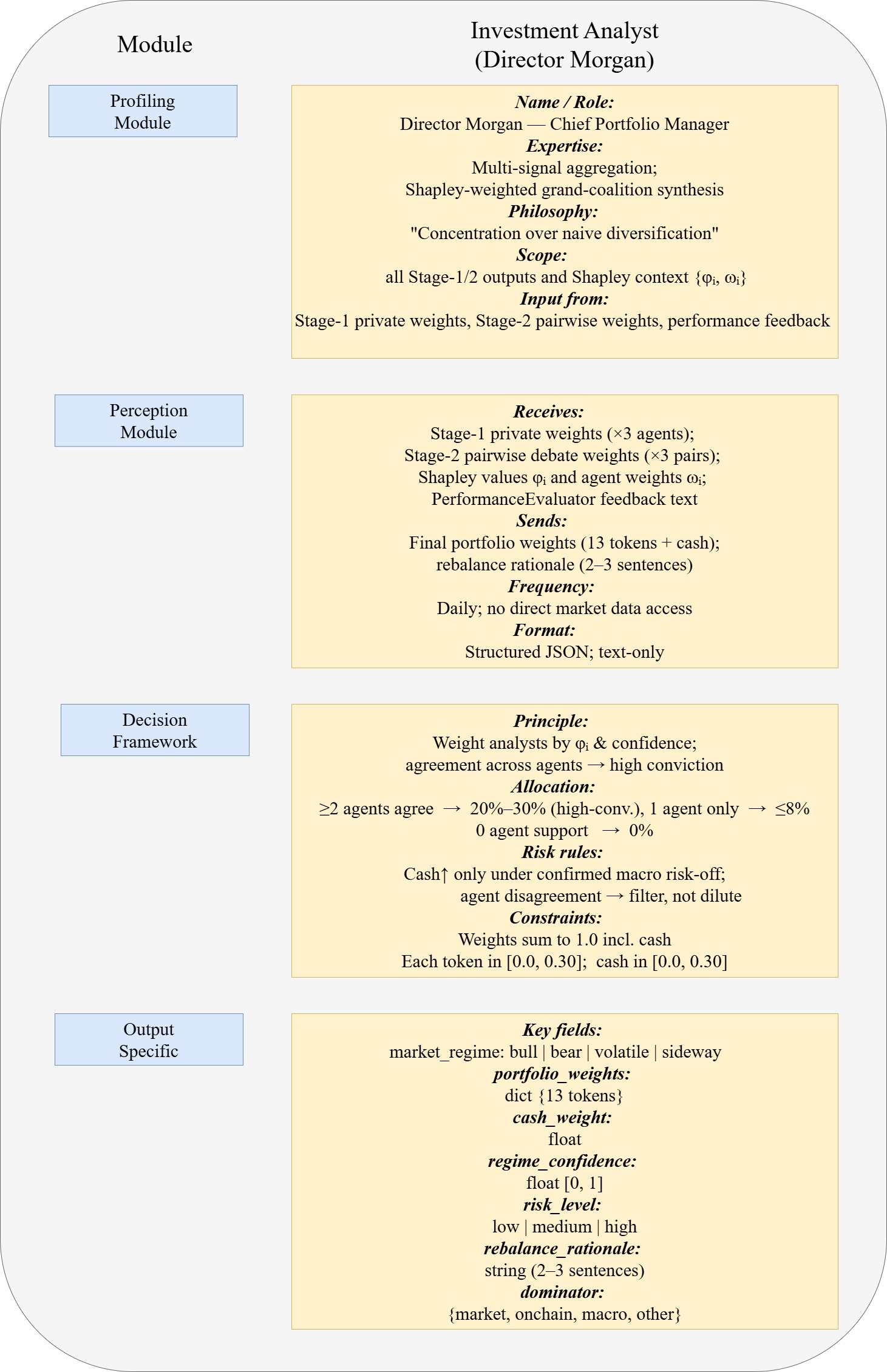}
    \caption{%
        Full specification of $A_4$ Investment Analyst (Director Morgan).
    }
    \label{fig:investment_analyst_full}
\end{figure}



\end{document}